\definecolor{tuc}{RGB}{50,120,100} 
\definecolor{tucmath}{RGB}{161,11,112}
\tikzstyle{mynode}=[thick,draw=tuc,fill=tuc!20,circle, minimum size=22]
\pgfplotsset{compat=1.15}
\renewcommand{\vec}[1]{\ensuremath \mathbf{\boldsymbol{#1}}}
\newsavebox\dotbox
\sbox{\dotbox}{\(\displaystyle\bigodot\)}
\newcommand{\X}{\ensuremath{\mathcal{X}}}
\renewcommand{\phi}{\varphi}
\newcommand{\w}{\ensuremath{\vec \omega}}
\newcommand{\G}{\ensuremath{\mathcal{N}}} 
\newcommand{\x}{\ensuremath{\vec x}}
\newcommand{\eps}{\ensuremath{\varepsilon}}
\newcommand{\normm}[1]{{\left\vert\kern-0.25ex\left\vert\kern-0.25ex\left\vert #1 
    \right\vert\kern-0.25ex\right\vert\kern-0.25ex\right\vert}}
\newcommand{\N}{\ensuremath{\mathbb{N}}}
\newcommand{\F}{\ensuremath{\mathcal{F}}}
\newcommand{\M}{\ensuremath{\mathcal{M}}}
\newcommand{\I}{\ensuremath{\mathcal{I}}}
\newcommand{\Z}{\ensuremath{\mathbb{Z}}}
\newcommand{\R}{\ensuremath{\mathbb{R}}}
\newcommand{\C}{\ensuremath{\mathbb{C}}}
\newcommand{\E}{\ensuremath{\mathbb{E}}}
\renewcommand{\O}{\ensuremath{\mathcal{O}}}
\renewcommand{\P}{\ensuremath{\mathbb{P}}}
\newcommand{\abs}[1]{\ensuremath{\left\vert#1\right\vert}}
\newcommand{\dx}{\mathrm{d}}
\newcommand{\e}{\mathrm{e}}
\newcommand{\im}{\mathrm{i}}
\newcommand{\mix}{\mathrm{mix}}
\DeclareMathOperator*{\diag}{diag}
\DeclareMathOperator*{\supp}{supp}
\DeclareMathOperator*{\argmin}{argmin}
\renewcommand{\d}{\, \mathrm{d}}
\newcommand{\norm}[1]{\left\lVert {#1} \right\rVert}    
\newcommand{\TT}{\mathcal T}				
\newcommand*\colvec[1]{
        \global\colveccount#1
        \begin{pmatrix}
        \colvecnext
}
\def\colvecnext#1{
        #1
        \global\advance\colveccount-1
        \ifnum\colveccount>0
                \\
                \expandafter\colvecnext
        \else
                \end{pmatrix}
        \fi
}
\newtheorem{theorem}{Theorem}[section]
\newtheorem{lemma}[theorem]{Lemma}
\newtheorem{corollary}[theorem]{Corollary}
\newtheorem{proposition}[theorem]{Proposition}
\newtheorem{problem}[theorem]{Problem}
\theoremstyle{definition}
\newtheorem{definition}[theorem]{Definition}
\newtheorem{example}[theorem]{Example}
\newtheorem{remark}[theorem]{Remark}
\newenvironment{Theorem}[1][noisnotdefined]{ \ifthenelse{\equal{#1}{noisnotdefined}}{\begin{theorem}}{\begin{theorem}[#1]}\normalfont\slshape}{\end{theorem}}
\newenvironment{Lemma}{ \begin{lemma}\normalfont\slshape}{\end{lemma}}
\newenvironment{Remark}[1][noisnotdefined]{ \ifthenelse{\equal{#1}{noisnotdefined}}{\begin{remark}}{\begin{remark}[#1]}\normalfont\rmfamily}{\bend\end{remark}}
\newenvironment{Example}{ \begin{example}\normalfont\rmfamily}{\bend\end{example}}
\newenvironment{Definition}{ \begin{definition}\normalfont\slshape}{\end{definition}}
\newenvironment{Corollary}{ \begin{corollary}\normalfont\rmfamily}{\end{corollary}}
\numberwithin{equation}{section}
\newcommand{\bend}{\hspace*{0ex} \hfill \hbox{\vrule height
    1.5ex\vbox{\hrule width 1.4ex \vskip 1.4ex\hrule  width 1.4ex}\vrule
    height 1.5ex}}
\long\def\symbolfootnote[#1]#2{\begingroup
\def\thefootnote{\fnsymbol{footnote}}\footnote[#1]{#2}\endgroup}
\newcounter{todocounter}
\newcommand{\todo}[2][noisnotdefined]{
 \marginpar{\fcolorbox{black}{yellow}{\footnotesize\textbf{todo}}
 \ifthenelse{\equal{#1}{noisnotdefined}}{}{\textcolor{black}{\newline\tiny #1}}}
 \textbf{\ifthenelse{\equal{#2}{.}}
   {\fcolorbox{red}{white}{\textcolor{red}{$\maltese$}}}{{\textcolor{red}{#2}}}}
 \refstepcounter{todocounter}}
\title{ANOVA-boosting for Random Fourier Features}
\date{June 2025}	
\author{ Daniel Potts \\
	Faculty of Mathematics\\
	Chemnitz University of Technology\\
	D-09107 Chemnitz, Germany \\
	\texttt{daniel.potts@math.tu-chemnitz.de} \\
	\And
	Laura Weidensager \\
	Faculty of Mathematics\\
	Chemnitz University of Technology\\
	D-09107 Chemnitz, Germany \\
	\texttt{laura.weidensager@math.tu-chemnitz.de} \\
}
\begin{document}
\maketitle

\begin{abstract}
We propose two algorithms for boosting random Fourier feature models for approximating high-dimensional
functions. These methods utilize the classical and generalized analysis of variance (ANOVA) decomposition
to learn low-order functions, where there are few interactions between the variables. Our algorithms are 
able to find an index set of important input variables and variable interactions reliably.\par
Furthermore, we generalize already existing random Fourier feature models to an ANOVA setting, where terms of different order can be used.
Our algorithms have the advantage of being interpretable, meaning that the influence of every input variable is known in the learned model, even for dependent input variables.  
We provide theoretical as well as numerical results that our algorithms perform well for sensitivity analysis. The ANOVA-boosting step reduces the approximation error of existing methods significantly.
\end{abstract}

\keywords{ANOVA decomposition\and global sensitivity analysis \and random Fourier features \and high-dimensional approximation}

\section{Introduction}
Developing predictive models based on empirical data is a current field of research with diverse applications. 
The continuous growth in data collection leads to complex datasets, necessitating the handling of regression or 
classification tasks in high-dimensional spaces. Traditional machine learning techniques such as support vector 
machines, neural networks, and decision trees are commonly used to address these challenges. However, a crucial 
concern alongside prediction accuracy is the interpretability of these models, which is essential for understanding 
the underlying reasoning behind predictions. \par

Many current approaches, although effective with smaller or moderate number of input variables, stop working when confronted with high-dimensional challenges. The main problem to practical computability is often related to high dimension of the multivariate integration or interpolation
problem, known as the curse of dimensionality. A well-known foundation of a dimensional decomposition is the analysis of variance (ANOVA) decomposition, first presented by Hoeffding in the 1940s. Since then, the ANOVA decomposition has been studied a lot in the literature, see for example~\cite{CaMoOw97, RaAl99, LiOw06, KuSlWaWo09, Holtz11, Gu13, Owen23}. \par
However, the
classical ANOVA decomposition is only available for
independent, product-type probability measures of the input density. 
In practice, there could be notable correlations or dependencies among input variables. Therefore, the classical decomposition must be generalized for an arbitrary,
non product type probability measure. Achieving this will require modifying the original
orthogonality conditions. Indeed, inspired by Stone~\cite{St94} and employing a set
of weakened annihilating conditions, Hooker~\cite{Ho07} provided an existential proof of a
unique ANOVA decomposition for dependent variables, referred to as the generalized ANOVA decomposition in this paper, subject to a mild restriction on the probability measure. Afterwards, different approaches for calculating the component functions are studied for example in~\cite{LiRa12, Ra142}. \par

The methodology presented in this paper offers an alternative to traditional machine learning methods by proposing an initial ANOVA-boosting step for random feature methods, but also provides a natural means to assess the importance and influence of attributes on the predicted outcomes. The generalization of the classical ANOVA decomposition, e.g. in~\cite{Ho07, Ra142} forms the basis for our algorithm, which can be applied to possibly dependent input variables. We calculate an approximation by a least squares regression, which  penalizes the non-orthogonality between ANOVA terms. \par

Consider the following standard supervised learning setup. Let $\X=\{\vec x_1,\ldots,\vec x_M\}\subset\R^d$ be a set of discrete samples. We consider the problem of reconstructing a multivariate function $f\colon\R^d \rightarrow \C$ from discrete function samples on the set of nodes $\X$, which are sampled from the density $\mu\colon \R^d\rightarrow \R_{+}$.
We study the scattered-data problem, i.e.~we have given as labels the (possibly noisy) function values $\vec f = (f(\vec x) + \epsilon_{\x} )_{\vec x\in \X}$.
In contrast, in~\cite{RiWe24} the authors give explicit advice on the location of good sampling points for approximating high-dimensional functions as finite sums of lower-dimensional functions.\par

It is natural to express the model output $f(\x)$ as a finite hierarchical expansion in terms of the input variables,
\begin{equation}\label{eq:decomp}
f(\x) = f_{\varnothing} + \sum_{i=1}^d f_i(x_i) + \sum_{1\leq i<j\leq d} f_{\{i,j\}} (x_i,x_j)+\ldots + f_{\{1,\ldots,d\}}(x_1,x_2,\ldots, x_d), 
\end{equation}
where the zero-th order component function $f_\varnothing$ is a constant representing the mean
of $f (\x)$, the first order component function $f_i (x_i )$ gives the independent
contribution to $f (\x)$ by the $i$-th input variable acting alone, the second order component
function $ f_{\{i,j\}} (x_i,x_j)$ gives the pair cooperative contribution to $f (\x)$ by the input
variables $x_i$ and $x_j$, etc. The last term $f_{\{1,\ldots,d\}}(x_1,x_2,\ldots, x_d)$ contains any residual
$d$-th order cooperative contribution of all the input variables. The classical ANOVA decomposition, \cite{CaMoOw97, LiOw06,Holtz11}, of a function is a tool for capturing high-dimensional behaviour by demanding orthogonality with respect to the measure $\mu$ between the terms in~\eqref{eq:decomp} and weak annihilating conditions for functions $f\in L_2(\R^d,\mu)$. \par

In many settings functions may arise naturally as sums of functions, each with a limited variable interaction. Such
low-order structure may also be used to reduce the curse of dimensionality, \cite{CaMoOw97, Wu2011, SchmischkeDiss}. Another approach to capture low-dimensional structures by Gaussian mixtures was done in~\cite{HeBaSt22}.
In this regard, two classes of problems arise: either all of the
input variables $\x = (x_1, x_2,\ldots, x_d)$ are independent or at least some portion of the
variables in $\x$ are correlated. Standard formulations of the ANOVA deal with the case
of independent variables. We, on the other hand, use the extension of~\cite{LiRa12} to also treat correlated
variables.\par

Kernel-based approaches have been extensively used in high-dimensional function approximation since
they often perform well in practice. The random feature model \cite{RaRe07} is a popular technique for approximating
the kernel (and thus the minimizer of kernel regression problems) using a randomized
basis that can avoid the cost of full kernel methods. An alternative perspective to view the random feature model is as a two-layer network
with a randomized but fixed single hidden layer, \cite{RaRe07,LiXiYuSu22}. The random feature model takes the form
$$f^\#(\x) = \sum_{k=1}^N a_{k}\e^{\im \langle \w_k,\x\rangle} =\vec a^\top \e^{\im \langle \vec W,\vec x\rangle}, \quad \w_\vec k \in \R^d,$$ 
where $\x \in \R^d$ is the input data, $\vec W \in \R^{d\times N}$ is a random weight matrix, and $\vec a\in \C^{N}$ is the final weight layer. The columns of the matrix $\vec W $ are independent and identically distributed (i.i.d.) random variables generated by the (user defined) probability density function $\rho(\w)$.
We construct the feature matrix 
\begin{equation}
\vec A=(\e^{\im\langle \w_k,\x\rangle})_{k\in \{1,\ldots, N\},\vec x\in \X}.
\end{equation}
Given the collection of $M=|\X|$ measurements, $\vec f = (f(\vec x) + \epsilon_{\x} )_{\vec x\in \X}$, the random feature regression problem becomes training $\vec a$ by optimizing
$$\min_{\vec a\in \C^N}\norm{\vec f- \vec A\vec a}_2^2+\mathcal R(\vec a)$$
with some penalty function $\mathcal R\colon \C^N\rightarrow \R$. The most common choice for $\mathcal R$ is the ridge penalty
$\mathcal R(\vec a) = \lambda \norm{\vec a}_2^2$, which leads to the random feature ridge regression problem \cite{RuRo17,LiTo19,MeMi22}. We will use another penalty function $\mathcal R$, which incorporates the ANOVA decomposition of the function $f$. 
We propose a random Fourier feature-framework where we explain the ANOVA decomposition within the random Fourier feature (RFF) structure. This specification allows us to associate variances and covariances to input variables, leading to interpretability that is not
present in existing RFF models. Our algorithm aims to boost existing RFF algorithms like SHRIMP~\cite{Xie22} (uses iterative magnitude pruning to select features) or HARFE~\cite{HARFE23} (uses hard threshold pursuit to select features) by introducing a first approximation step, which is demonstrated by numerical examples. \par

This paper is organized as follows. In Section~\ref{sec:ANOVA_independent} we introduce the well-known ANOVA decomposition for independent input variables and relate this decomposition to the Fourier transform of the function $f$. Furthermore, we introduce functions of lower order and show that they occur naturally in function spaces of mixed smoothness. In Section~\ref{sec:dep_input} we generalize the ANOVA decomposition to possibly dependent input variables. We summarize the idea of random Fourier feature algorithms in Section~\ref{sec:RFF} and apply them to the ANOVA setting. The resulting boosting algorithms are summarized in Section~\ref{sec:sensitivity}, where we show how to do sensitivity analysis on an approximation with random Fourier features. The theoretical analysis in Section~\ref{sec:theory} generalizes the theory in~\cite{Ha23,Xie22} for random Fourier features and finally in Section~\ref{sec:num} we show with numerical examples the power of our boosting algorithms. 
 
\subsubsection*{Our main contributions are as follows:}
\begin{itemize}
	\item We propose an ANOVA-boosting, which extends and further develops the sparse random feature approximation from~\cite{Ha23,Xie22, HARFE23} to arbitrary index sets $U\subset \mathcal P([d])$ and give a new connection between the Fourier transform of a function and the ANOVA terms. This leads to random features, which are adapted to the function. 
	\item Generalizing the theory of sparse random Fourier features: In many cases, for example in the target case of functions of low effective dimension, the Fourier transform only exists in distributional sense. The norm on which the existing literature is based on, contains a maximum norm of the Fourier transform, which has to be generalized to the setting of tempered distributions.
	\item Introducing and analysing a first approximation step which calculates the important ANOVA terms for independent or dependent input variables.
	\item We improve the interpretability of previous random feature models by reducing the importance of variables that are only correlated with other variables and do not influence the function.
\end{itemize}

We will distinguish between the two cases where the input variables are independent or dependent. In the first case, we will use the classical ANOVA decomposition, whereas in the latter case we have to generalize this decomposition.

\subsubsection*{Related work}
\begin{itemize}
	\item The authors from~\cite{MaYa20} propose the notion of neural decomposition, which integrates the classical ANOVA and
deep neural networks for dimensionality reduction and
variance decomposition. Similar to our approach for dependent input variables, they show that identifiability for independent input variables can be achieved by
training models subject to constraints on the marginal properties of the decoder networks.
\item The D-MORPH algorithm~\cite{LiRa12} uses orthogonal basis with respect to sampling density $\mu$, which requires knowledge about the sampling density. Our approach is applicable independent of the sampling density $\mu$. Furthermore, instead of calculating the solution of the minimization problem directly by using an SVD, we solve the problem by an iterative algorithm. This work was followed, among other, by~\cite{BoLiBaPlRa22}, where the procedure was generalized to sampling from mixture densities, but also in this case an orthogonal basis is necessary. \\
In~\cite{Ra142} the generalized ANOVA decomposition is constructed by a constructive method by employing multivariate orthogonal polynomials as bases and calculating the expansion
coefficients involved from the solution of linear algebraic equations.
\item The authors in~\cite{ChGa12} also study indices measuring the sensitivity of the
output with respect to dependent input variables, but they are restricted to independent pairs of dependent input variables.
\item We generalize the approximation with random Fourier features, for already existing algorithms see for example~\cite{Ha23, Xie22, HARFE23}. See also~\cite{LiXiYuSu22} for a nice overview of random features for kernel approximation. 
\end{itemize}

\subsubsection*{Definitions and Notation}
In this paper we denote by $[d]$ the set $\{1,\ldots,d\}$ and its power set by $\mathcal{P}([d])$. The $d$-dimensional input variable of the function $f$ is $\vec x$, where we denote the subset-vector by $\vec x_{\vec u}=(x_i)_{i\in \vec u}$ 
for a subset $\vec u\subseteq [d]$. The complement of those subsets is always with respect to $[d]$, i.e., $\vec u^c=[d]\backslash \vec u$.
For an index set $\vec u\subseteq [d]$ we define $|\vec u|$ as the number of elements in $\vec u$.
Define the Fourier transform by
\begin{equation}\label{eq:Fourier_R}
\hat{f}(\w):=\int_{\R^d}f(\vec x)\,\e^{-\im\langle\w,\vec x\rangle}\d \vec x \quad \text{ for } \vec w\in \R^d.
\end{equation}
If $f\in L_2(\R^d)$ with $\hat{f}\in L_1(\R^d)$, the Fourier inversion formula
\begin{equation}\label{eq:fourier_inv}
f(\vec x) = \frac{1}{(2\pi)^d}\int_{\R^d}\hat{f}(\vec \w)\, \e^{\im \langle \w,\vec x\rangle} \d \w
\end{equation}
holds true for almost all $\vec x\in \R^d$.
For $f\notin L_2(\R^d)$ the Fourier transform $\hat f$ is defined only in distributional sense. Let $\mathcal S(\R^d)$ be the Schwartz space of rapidly decreasing functions on $\R^d$. 
A \textit{tempered distribution} is a continous linear functional on $\mathcal S(\R^d)$. If a distribution $T$ arises from a function in the sense that 
$$\langle T_f ,\phi\rangle = \int_{\R^d} f(\x) \phi(\x) \dx \x, \qquad \phi \in \mathcal S(\R^d),$$
then we speak about a \textit{regular distribution}. %
The Dirac distribution $\delta$ is defined by $\langle \delta , \phi\rangle = \phi(\vec 0)$ for all $\phi\in \mathcal S(\R^d)$, which is not regular.
The Fourier transform $\hat T$ of a tempered distribution $T$ is a linear functional on the Schwartz space and defined by,
$$\langle \hat T, \phi \rangle = \langle T, \hat \phi\rangle.$$
A density function or probability density function is a positive function, such that $\mathbb P(\x \in [\vec a,\vec b]) = \int_{\vec a}^{\vec b} \mu(\x)\dx \x$. This function is normalized, such that $\int_{\R^d}\mu(\x) \dx \x = 1$. We will denote this function by $\mu(\x)$ for the sampling distribution and similarly by $\rho(\w)$ for the feature distribution. Furthermore, we denote for $\gamma>0$ some frequently used densities by
\begin{align*}
\mu_{\G}(\x) = \frac{1}{(2\pi\gamma^2)^{d/2}} \e^{-\frac{\norm{\x}^2}{2\gamma}}&\quad \text{ Gaussian},\\
\mu_{\mathcal C}(\x)	 = \prod_{i = 1}^d \frac{1}{\pi \sigma (1+x_{i}^2/\gamma^2)}	&\quad \text{ Cauchy}.
\end{align*}
Let $s>0$. Then we define Sobolev spaces of dominating mixed smoothness by
\begin{equation}\label{eq:Hsmix_R}
H^s_{\mix}(\R^d)\coloneqq\left\{f\colon\R^d\to \C\mid \norm{f}_{H^s_{\mix}(\R^d)}<\infty\right\},
\end{equation}
where the norm is defined by
\begin{equation*}
\norm{f}_{H^s_{\mix}(\R^d)}^2=\int_{\R^d} |\hat{f}(\w) |^2\prod_{i=1}^d(1+|\omega_i|^2)^{s} \d \vec \omega.
\end{equation*}

\section{The ANOVA decomposition for independent input variables}\label{sec:ANOVA_independent}
In this section we study the case of independent input variables, which coincides with the density $\mu$ having tensor product structure.  
For periodic functions there is a connection between the Fourier coefficients of the ANOVA terms, which is used to construct approximation algorithms for high-dimensional functions with low effective dimension in an efficient and fast manner, see~\cite{PoSc19a}. This was the motivation to study the more general setting, seeking a connection between the Fourier transform $\hat f$ and the ANOVA terms, which we will do in the following.\par

The curse of dimensionality comes into play when analysing data in high-dimensional spaces. A frequently used concept is the following, \cite{CaMoOw97, LiOw06,Holtz11}. See also~\cite[Chapter 8.4]{Mo22} or \cite[Appendix]{Owen23} for a general introduction to functional decompositions. 
\begin{definition}\label{def:anova-terms}
Let $f$ be in $L_2(\R^d,\mu)$. For a tensor product measure 
\begin{equation}\label{eq:mu_prod}
\mu(\vec x) = \prod_{i=1}^d \mu_i(\vec x_i)
\end{equation}
and for a subset $\vec u\subseteq[d]$ we define the \textbf{ANOVA (Analysis of variance) terms} recursively by 
\begin{align}\label{eq:anova-terms}
f_{\varnothing} &= \int_{\R^d} f(\x) \mu(\x) \dx \x \notag\\
f_{\vec u}(\vec x_{\vec u}) &=\int_{\R^{d-|\vec u|}}f(\vec x) \mu_{\vec u^c}(\vec x_{\vec u^c})\d \vec x_{\vec u^c}-\sum_{\vec v\subset \vec u}f_{\vec v}(\vec x_{\vec v}).
\end{align}
The\textbf{ ANOVA decomposition} with respect to $\mu$ of a function ${f\colon \R^d\to \C}$ is then given by
\begin{equation}\label{eq:anova-decomp}
f(\vec x)=f_{\varnothing}+\sum_{i=1}^d f_{\{i\}}(x_i)+\sum_{i\neq j=1}^d f_{\{i,j\}}(x_i,x_j)+\cdots+f_{[d]}(\vec x)=\sum_{\vec u\subseteq[d]}f_{\vec u}(\vec x_{\vec u}).
\end{equation}
The terms~\eqref{eq:anova-terms} are the unique decomposition~\eqref{eq:anova-decomp}, 
such that 
\begin{align}
\langle f_{\vec u},f_{\vec v} \rangle_\mu \coloneqq \int_{\R^d} f_{\vec u}(\x_{\vec u})f_{\vec v}(\x_{\vec u})\mu(\x) \dx \x &= 0 \quad \text{ for } \vec v\neq \vec u\subseteq[d],\label{eq:orth}\\
\int_{\R} f_{\vec u}(\x_{\vec u}) \mu_j(x_j)\dx x_j &= 0 \quad \text{ for } j\in  \vec u. \label{eq:zero_mean}
\end{align}
\end{definition} 
Note that in general, $f_{\vec u}\notin L_2(\R^{|\vec u|})$, but $f_{\vec u}\in L_2(\R^{|\vec u|},\mu_{\vec u})$. Furthermore, every density has the property that $\mu\in L_1(\R^d)$, which implies that $\mu_i \in L_1(\R)$, such that the one-dimensional Fourier transforms $\hat{\mu}_i$ exists.
For a better readability of the following proofs, we introduce the notation
\begin{equation}\label{eq:def_E}
E(\x,\w,\mu, \vec u) \coloneqq \prod_{i\in \vec u}\left(\e^{\im \omega_i x_i } - \hat{\mu}_i( -\omega_i) \right)\prod_{i\in \vec u^c}\hat{\mu}_i( -\omega_i).
\end{equation} 
In terms of the Fourier transform, the ANOVA terms~\eqref{eq:anova-terms} can then be described by the following.

\begin{lemma}\label{lem:ANOVA-terms_f}
Let the sampling distribution $\mu$ have a product structure~\eqref{eq:mu_prod}. Then the ANOVA decomposition \eqref{eq:anova-terms} for a function $f\in L_2(\R^d,\mu) \cap L_2(\R^d)$, is given by
\begin{equation}\label{eq:anova_hat}
f_{\vec u}(\vec x_{\vec u}) 
= \frac{1}{(2\pi)^d}\int_{\R^d} \hat f(\w)E(\x,\w,\mu, \vec u)   \d \w.
\end{equation}

\end{lemma}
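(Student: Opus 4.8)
The plan is to reduce everything to the Fourier inversion formula~\eqref{eq:fourier_inv} together with an elementary product expansion, and then run an induction on $|\vec u|$ along the recursion~\eqref{eq:anova-terms}. First I would compute the pure marginalization
$$P_{\vec u}f(\x_{\vec u}) := \int_{\R^{d-|\vec u|}} f(\vec x)\,\mu_{\vec u^c}(\vec x_{\vec u^c})\d \vec x_{\vec u^c},$$
which is the first summand in~\eqref{eq:anova-terms}. Substituting~\eqref{eq:fourier_inv} for $f$ and interchanging the order of integration, the exponential factorizes as $\e^{\im\langle\w,\x\rangle}=\prod_{i\in\vec u}\e^{\im\omega_i x_i}\prod_{i\in\vec u^c}\e^{\im\omega_i x_i}$ while $\mu_{\vec u^c}$ factorizes as $\prod_{i\in\vec u^c}\mu_i$. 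The coordinates in $\vec u^c$ then separate into one-dimensional integrals $\int_{\R}\e^{\im\omega_i x_i}\mu_i(x_i)\d x_i=\hat\mu_i(-\omega_i)$, whereas the coordinates in $\vec u$ are untouched, giving
$$P_{\vec u}f(\x_{\vec u})=\frac{1}{(2\pi)^d}\int_{\R^d}\hat f(\w)\prod_{i\in\vec u}\e^{\im\omega_i x_i}\prod_{i\in\vec u^c}\hat\mu_i(-\omega_i)\d\w.$$
The case $\vec u=\varnothing$ reproduces the claimed formula for $f_\varnothing$ and serves as the base of the induction, since $E(\x,\w,\mu,\varnothing)=\prod_{i=1}^d\hat\mu_i(-\omega_i)$.

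Next I would argue by induction on $|\vec u|$. Assuming~\eqref{eq:anova_hat} for every proper subset $\vec v\subset\vec u$, the recursion~\eqref{eq:anova-terms} writes $f_{\vec u}$ as $P_{\vec u}f$ minus $\sum_{\vec v\subset\vec u}f_{\vec v}$, and by linearity the whole expression collapses into a single integral against $\hat f(\w)$. It therefore suffices to verify the pointwise integrand identity
$$\prod_{i\in\vec u}\e^{\im\omega_i x_i}\prod_{i\in\vec u^c}\hat\mu_i(-\omega_i)-\sum_{\vec v\subset\vec u}E(\x,\w,\mu,\vec v)=E(\x,\w,\mu,\vec u).$$
Writing $g_i=\e^{\im\omega_i x_i}$ and $h_i=\hat\mu_i(-\omega_i)$, I would observe that for $\vec v\subseteq\vec u$ the complement splits as $\vec v^c=(\vec u\setminus\vec v)\cup\vec u^c$, so every term $E(\x,\w,\mu,\vec v)$ carries the common factor $\prod_{i\in\vec u^c}h_i$. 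Dividing it out, the identity becomes exactly the expansion of
$$\prod_{i\in\vec u}g_i = \sum_{\vec v\subseteq\vec u}\prod_{i\in\vec v}(g_i-h_i)\prod_{i\in\vec u\setminus\vec v}h_i = \prod_{i\in\vec u}\big((g_i-h_i)+h_i\big),$$
where the single full term $\vec v=\vec u$, namely $\prod_{i\in\vec u}(g_i-h_i)$, is precisely $E(\x,\w,\mu,\vec u)$ (divided by the common factor) and gets moved to the right-hand side. This closes the induction.

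The only genuinely technical point is justifying the interchange of integration used to obtain $P_{\vec u}f$, i.e.\ applying Fubini to $\hat f(\w)\,\mu_{\vec u^c}(\vec x_{\vec u^c})\,\e^{\im\langle\w,\x\rangle}$. Since each $\mu_i\in L_1(\R)$ is a normalized density, one has $|\hat\mu_i(-\omega_i)|\le 1$ and $\mu_{\vec u^c}\in L_1$, so both the interchange and the underlying Fourier inversion~\eqref{eq:fourier_inv} are valid as soon as $\hat f\in L_1(\R^d)$. I would therefore either add this integrability hypothesis explicitly or first establish the formula on the dense class of $f$ with $\hat f\in L_1$ and then pass to the limit in $L_2(\R^d,\mu)$. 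The binomial identity is routine, so this integrability bookkeeping is where the care is required; it also foreshadows the distributional generalization announced in the introduction, where $\hat f$ need no longer be a function.
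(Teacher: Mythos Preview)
Your proposal is correct and follows essentially the same route as the paper: Fourier inversion plus Fubini to compute the marginalization $P_{\vec u}f$, induction along the recursion~\eqref{eq:anova-terms}, and the binomial expansion $\prod_{i\in\vec u}g_i=\sum_{\vec v\subseteq\vec u}\prod_{i\in\vec v}(g_i-h_i)\prod_{i\in\vec u\setminus\vec v}h_i$ to close the induction. Your explicit discussion of the $\hat f\in L_1$ hypothesis needed for Fubini is a welcome addition that the paper's proof leaves implicit.
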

\begin{proof}
The Fourier transform of the tensor product density $\mu$ can be decomposed as 
\begin{equation*}
\hat{\mu} (\w) = \prod_{i\in [d]} \hat{\mu}_i(\omega_i).
\end{equation*} 
We prove~\eqref{eq:anova_hat} inductively over $|\vec u|$. First, observe that
\begin{align*}
f_\varnothing &= \int_{\R^d} f(\vec x) \mu(\x) \d \vec x 
= \int_{\R^d}\frac{1}{(2\pi)^d}\int_{\R^d}\hat f(\w)\, \e^{\im \langle \w,\vec x\rangle} \d \w\,\mu(\x) \d \vec x\\
&=\frac{1}{(2\pi)^d}\int_{\R^d}\hat f(\w)\int_{\R^d}\, \e^{\im \langle \w,\vec x\rangle} \mu(\x) \d \x\d \w 
=\frac{1}{(2\pi)^d}\int_{\R^d}\hat f(\w) \hat{\mu}(-\w)\d \w \\
&= \frac{1}{(2\pi)^d}\int_{\R^d}\hat f(\w) E(\x,\w,\mu, \varnothing)\d \w,
\end{align*}
where $E(\x,\w,\mu, \varnothing) = \prod_{i\in [d]} \hat{\mu}_i(-\omega_i)$ does not depend on $\x$.
The induction step follows using Definition~\ref{def:anova-terms},
\begin{align*}
&f_{\vec u}(\vec x_{\vec u})
=\int_{\R^{d-|\vec u|}}\frac{1}{(2\pi)^d}\int_{\R^d}\hat f(\w)\, \e^{\im \langle \w,\vec x\rangle} \d \w \mu_{\vec u^c}(\vec x_{\vec u^c})\d \vec x_{\vec u^c}-\sum_{\vec v\subset \vec u}f_{\vec v}(\vec x_{\vec v})\\
&\quad=\frac{1}{(2\pi)^d}\int_{\R^d}\hat f(\w)\e^{\im \langle \w_{\vec u},\vec x_{\vec u}\rangle}\int_{\R^{d-|\vec u|}}\e^{\im \langle \w_{\vec u^c},\vec x_{\vec u^c}\rangle}  \mu_{\vec u^c}(\vec x_{\vec u^c})\d \vec x_{\vec u^c}\d \w
-\frac{1}{(2\pi)^d}\sum_{\vec v\subset \vec u}\int_{\R^d} \hat f(\w)E(\x,\w,\mu, \vec v)\d \w\\
&\quad=\frac{1}{(2\pi)^d}\int_{\R^d}\hat f(\w)\e^{\im \langle \w_{\vec u},\vec x_{\vec u}\rangle} \hat{\mu}_{\vec u^c}(-\w_{\vec u^c})\d \w-\frac{1}{(2\pi)^d}\sum_{\vec v\subset \vec u}\int_{\R^d} \hat f(\w) \prod_{i\in \vec v} \left(\e^{\im \omega_i x_i}- \hat{\mu}_i(-\omega_i)\right) \prod_{i \in \vec v^c} \hat{\mu}_i(-\omega_i) \d \w\\
&\quad=\frac{1}{(2\pi)^d}\int_{\R^d}\hat f(\w)\hat{\mu}_{\vec u^c}(-\w_{\vec u^c}) \left(\e^{\im \langle \w_{\vec u},\vec x_{\vec u}\rangle}-\sum_{\vec v\subset \vec u} \prod_{i\in \vec v} \left(\e^{\im \omega_i x_i}- \hat{\mu}_i(-\omega_i)\right) \prod_{i \in \vec u\backslash v} \hat{\mu}_i(-\omega_i)\right) \d \w\\
&\quad=\frac{1}{(2\pi)^d}\int_{\R^d} \hat f(\w) \prod_{i\in \vec u} \left(\e^{\im \omega_i x_i}- \hat{\mu}_i(-\omega_i)\right) \prod_{i \in \vec u^c} \hat{\mu}_i(-\omega_i) \d \w\\
&\quad=\frac{1}{(2\pi)^d}\int_{\R^d} \hat f(\w) E(\x,\w,\mu,\vec u) \d \w.
\end{align*}
This shows~\eqref{eq:anova_hat}. 
\end{proof}

\begin{Example}
Suppose 
\begin{equation}\label{eq:f_2d}
f\colon \R^2 \rightarrow \R \qquad f(\x) = g_1(x_1) \, g_2(x_2) =  \frac{|x_1|}{(1+x_1^2)^2} \cdot \max\left(1-\abs{x_2},0\right) \in H_{\mix}^{3/2-\epsilon}(\R^2).
\end{equation}
The functions $g_1(x_1)$ and $g_2(x_2)$ have a kink, such that the derivatives of order $2$ are not in $L_2(\R)$. Furthermore, the function $f$ is
of tensor product structure, which means that the ANOVA decomposition is
\begin{align*}
f_1(x_1) &= \left(g_1(x_1)- \overline{f_1}\right) \cdot \overline{f_2},  &\overline{f_1}& \coloneqq \int_{\R} g_1(x_1) \mu_1(x_1) \dx x_1,\\
f_2(x_2) &= \left(g_2(x_2)  - \overline{f_2}\right) \cdot \overline{f_1},&\overline{f_2}&\coloneqq\int_{\R} g_2(x_2) \mu_2(x_2) \dx x_2, \\
f_{1,2}(\x) &= \left(g_1(x_1) - \overline{f_1}\right)\cdot \left(g_2(x_2)  - \overline{f_2}\right),
\end{align*}
where the constants with respect to standard Gaussian samples $\mu(\x)  = \mu_{\G}(\x)= \tfrac{1}{\sqrt{2\pi}^d}\e^{-\norm{\x}^2/2} $ and uniform samples on $[-1,1]^2$ are summarized here:
\begin{center}
\begin{tabular}{  |c|c |c| } 
  \hline
  $\mu$ & Gaussian & Uniform \\ 
  \hline
  $f_{\varnothing}$ &$0.0792$ & $0.125$  \\
	 $\overline{f_1}$ &$0.2148$ & $0.25$  \\
	 $\overline{f_2}$ &$0.3687$ & $0.5$  \\
  \hline
\end{tabular}
\end{center}
The ANOVA decomposition is plotted in Figure~\ref{fig:ANOVA_terms} for these two cases.
This example shows that tensor product type function can be easily decomposed in the ANOVA terms independent of the sampling density $\mu$. Furthermore, the ANOVA decomposition depends on the sampling density $\mu$. For more complicated functions this can have much more influence. 
\end{Example}
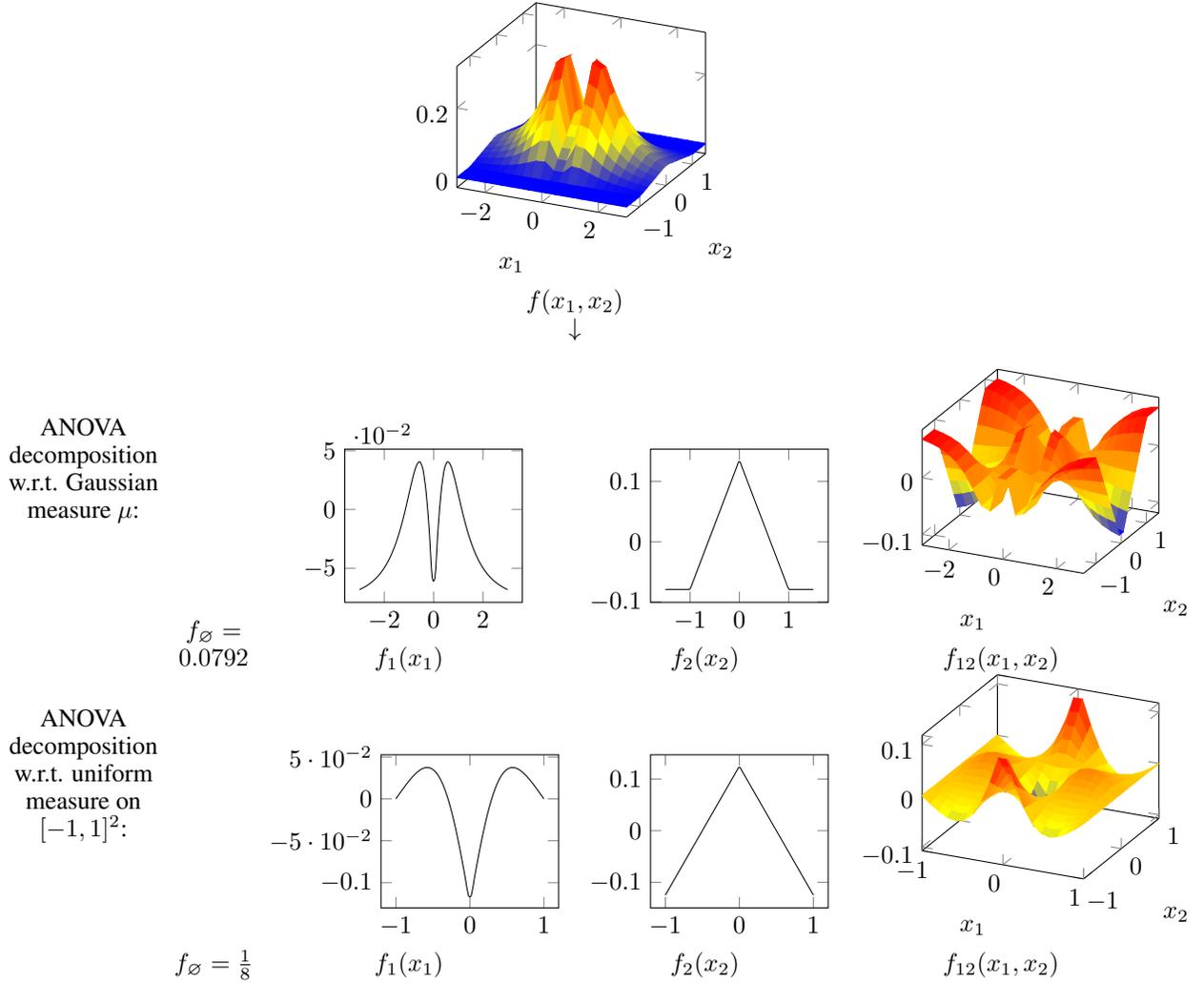
\begin{figure}[tb]
\centering
\begin{minipage}[c]{0.3\textwidth}
\begin{tikzpicture}
\begin{axis}[scale only axis,width = 0.7\textwidth,
,xlabel=$x_1$,ylabel=$x_2$]
\addplot3 [
    domain=-3:3,
    domain y = -1.5:1.5,
    samples = 20,
    samples y = 20,
    surf,shader=flat
    ] {(abs(x)/(1+x^2)^2) *max(0,(1-abs(y)))};
 
\end{axis}
 
\end{tikzpicture}
\centering
$f(x_1,x_2)$\\
$\downarrow$\\
\vspace*{10pt}
\end{minipage}\\
\begin{minipage}[c]{0.14\textwidth}
\centering
\vspace*{-150pt}
ANOVA decomposition w.r.t. Gaussian density $\mu$:
\end{minipage}%
\begin{minipage}[b]{0.08\textwidth}
\centering
$f_\varnothing = 0.0792$\\
\end{minipage}%
\begin{minipage}[b]{0.25\textwidth}
\begin{tikzpicture}
\begin{axis}[scale only axis,width = 0.6\textwidth]
\addplot [
    domain=-3:3,
    samples = 50,
		smooth
    ] {((abs(x)/(1+x^2)^2) - 0.2148)* 0.368746 };
 
\end{axis}
 
\end{tikzpicture}
\centering
$f_1(x_1)$
\end{minipage}%
\begin{minipage}[b]{0.25\textwidth}
\begin{tikzpicture}
\begin{axis}[scale only axis,width = 0.6\textwidth]
\addplot [
    domain=-1.5:1.5,
    samples = 100,
		smooth
    ] { 0.2148* (max(0,(1-abs(x)))-0.368746) };
\end{axis}
\end{tikzpicture}
\centering
$f_2(x_2)$
\end{minipage}%
\begin{minipage}[b]{0.25\textwidth}
\begin{tikzpicture}
\begin{axis}[scale only axis,width = 0.8\textwidth,
,xlabel=$x_1$,ylabel=$x_2$]
\addplot3 [
    domain=-3:3,
    domain y = -1.5:1.5,
    samples = 20,
    samples y = 20,
    surf,shader=flat
    ] {((abs(x)/(1+x^2)^2)-0.2148) * (max(0,(1-abs(y)))-0.368746)};
 
\end{axis}
\end{tikzpicture}
\centering
$f_{12}(x_1,x_2)$
\end{minipage}\\
\begin{minipage}[c]{0.14\textwidth}
\centering
\vspace*{-150pt}
ANOVA decomposition w.r.t. uniform density on $[-1,1]^2$:
\end{minipage}%
\begin{minipage}[b]{0.08\textwidth}
\centering
$f_\varnothing = \tfrac 18$
\end{minipage}%
\begin{minipage}[b]{0.25\textwidth}
\begin{tikzpicture}
\begin{axis}[scale only axis,width = 0.6\textwidth]
\addplot [
    domain=-1:1,
    samples = 50,
		smooth
    ] {((abs(x)/(1+x^2)^2) - 1/4)* 0.5 };
\end{axis}
\end{tikzpicture}
\centering
$f_1(x_1)$
\end{minipage}%
\begin{minipage}[b]{0.25\textwidth}
\begin{tikzpicture}
\begin{axis}[scale only axis,width = 0.6\textwidth]
\addplot [
    domain=-1:1,
    samples = 100,
		smooth
    ] {1/4* (max(0,(1-abs(x)))-0.5) };
 
\end{axis}
\end{tikzpicture}
\centering
$f_2(x_2)$
\end{minipage}%
\begin{minipage}[b]{0.25\textwidth}
\begin{tikzpicture}
\begin{axis}[scale only axis,width = 0.8\textwidth,
,xlabel=$x_1$,ylabel=$x_2$]
\addplot3 [
    domain=-1:1,
    domain y = -1:1,
    samples = 20,
    samples y = 20,
    surf,shader=flat
    ] {((abs(x)/(1+x^2)^2)-1/4) * (max(0,(1-abs(y)))-0.5)};
 
\end{axis}
\end{tikzpicture}\\
\centering
$f_{12}(x_1,x_2)$

\end{minipage}\\

\caption{The ANOVA decomposition of the function~\eqref{eq:f_2d}. }
\label{fig:ANOVA_terms}
\end{figure}

\subsubsection*{Sensitivity analysis}
The ANOVA decomposition is the basis for sensitivity analysis, which is the study of how the uncertainty in the output of a mathematical model can be divided and allocated to different sources of uncertainty in its inputs. A measure describing the proportion of how much the variables $\x_{\vec u} $ contribute to the variance of the function $f$ itself are the variances
\begin{equation}\label{eq:sigma}
\sigma^2(f_{\vec u}) = \int_{\R^{|\vec u|}}|f_{\vec u}(\x_{\vec u})|^2 \mu_{\vec u}(\x_{\vec u}) \dx \x_{\vec u},
\end{equation}
where $\sum_{\vec u\subseteq [d]} \sigma^2(f_{\vec u}) =  \sigma^2(f)$. This provides an explanation for the name \textbf{analysis of variance} decomposition to this function decomposition. 
Sobol indices, first introduced by~\cite{So95}, are an often used tool for sensitivity analysis, namely
\begin{equation}\label{eq:sobol}
S_{\vec u} = \frac{\sigma^2(f_{\vec u})}{\sigma^2(f)}.
\end{equation}

\subsubsection*{Relation to the case of periodic functions}
The ANOVA decomposition in terms of the Fourier transform $\hat f$ investigated in Lemma~\ref{lem:ANOVA-terms_f} is a generalization of the periodic case. To see this, let $\mu_i(x_i) = \frac{1}{2\pi }\, 1_{[- \pi , \pi]}$, the density belonging to the uniform density on the torus.  
For the Fourier transform of this density we calculate
\begin{equation}\label{eq:mu_hat}
\hat{\mu}_i(\omega_i) = \frac{\sin(\pi \omega_i)}{\pi\omega_i}
=\begin{cases}
0 \text{ if } \omega_i \in \Z\backslash 0,\\
1 \text{ if } \omega_i= 0.
\end{cases}
\end{equation}
For periodic functions the Fourier coefficients are defined by
$$c_{\vec k}(f) = \int_{-\pi }^\pi f(\x) \e^{-\im \langle \vec k,\x\rangle}\d \vec x.$$
It is possible to extend the definition~\eqref{eq:Fourier_R} to include periodic functions by viewing them as tempered distributions. This makes it possible to see a connection between the Fourier series and the Fourier transform for periodic functions that have a convergent Fourier series. If $f$ is a periodic function with period $1$, that has convergent Fourier series, then:
$$\hat f(\omega) = \sum_{k \in \Z} c_k(f) \delta\left(\omega -k\right),$$
where $c_k(f)$ are the Fourier coefficients of $f$ and $\delta$ is the Dirac delta distribution. The corresponding multivariate case is
$$\hat f(\w) =  \sum_{\vec k \in \Z^d} c_{\vec k}(f) \delta\left(\w -\vec k\right),$$
Applying Lemma~\ref{lem:ANOVA-terms_f} to this setting, we have
\begin{align*}
f_{\vec u}(\vec x_{\vec u}) 
&= \frac{1}{(2\pi)^d}\int_{\R^d} \hat{f}(\w)E(\x,\w,\mu, \vec u)   \d \w
=   \frac{1}{(2\pi)^d}\sum_{\vec k \in \Z^d} c_{\vec k}(f) \delta\left(\w -\vec k\right) E(\x,\w,\mu, \vec u)   \\
&= \frac{1}{(2\pi)^d} \sum_{\vec k \in \Z^d} c_{\vec k}(f) E(\x,\vec k,\mu, \vec u)   
=   \frac{1}{(2\pi)^d}\sum_{\stackrel{\vec k \in \Z^d}{\supp \vec k =\vec u} } c_{\vec k}(f)\e^{\im \langle \vec k_{\vec u},\vec x_{\vec u}\rangle} .
\end{align*}
The last equality follows from~\eqref{eq:mu_hat} and the definition of the term $E$ in~\eqref{eq:def_E}. This connection was shown in~\cite{PoSc19a} and was starting point for efficient algorithms.

\subsection{Functions of low order}
Often, high-dimensional functions that arise from important physical systems are of low
order, meaning the function is dominated by a few terms, each depending on only a
subset of the input variables, say $q$ out of the $d$ variables where $q \ll d$. For that reason, we 
formalize the notion of low order functions by extending the definition
from~\cite{Ha23}.
\begin{definition}[Functions of low order]
Fix $d,q\in \N$ with $q\leq d$. A function $f\colon \R^d \rightarrow \C$ is an \textbf{order-$q$} function, if  
$$f(\x) =  \sum_{\vec u \in U_q } f_{\vec u}(\x_{\vec u}) \quad \text{ with } \quad U_q = \{\vec u\in [d] \mid |\vec u|\leq q\} .$$
\end{definition}
Low order functions arise naturally in the physical world and are used as a form of the reduced complexity
model for such systems. \par 
The following gives a bound for the variances of the ANOVA terms, which is guided by a decomposition of the frequency domain of $f$. The proof can be found in Appendix~\ref{sec:proofs}.
\begin{Lemma}\label{lem:sigma_bound}
Let $f\in L_2(\R^d)$ with $\hat f \in L_1(\R^d)$, then the variances of the ANOVA terms $f_{\vec u}$ defined in~\eqref{eq:anova_hat} are bounded by
$$\sigma^2(f_{\vec u}) \leq \frac{1}{(2\pi)^{2d}} \norm{\hat{\mu}}_{L_1(\R^d)}\int_{\R^d}|  \hat f(\w)|^2
| E(\vec 0,\w,\mu,\vec u)| \d \w. $$
\end{Lemma}

\begin{Example}
Consider again the standard Gaussian distribution $\mu = \mu_{\G}$, where
$\hat{\mu}_{\G}(\w) = \e^{-\frac{\norm{\w}^2}{2}}$.  
By the functions $ |E(\vec 0,\w,\mu,\vec u)|$ appearing in Lemma~\ref{lem:sigma_bound} we decompose the frequency domain $\R^d$ concerning the different ANOVA indices $\vec u\subseteq [d]$. We plot the two-dimensional example in Figure~\ref{fig:decomp_FT}. One can see that (this is in general the case, not only for this example) 
$$\lim_{k\rightarrow \infty}|E(\vec 0,k \w,\mu, \vec u)|=
\begin{cases}
1& \text{ if } \supp \w = \vec u,\\
0& \text{ otherwise }. 
\end{cases}$$
The decomposition is the analogue of the discrete decomposition of the Fourier series into ANOVA terms, see~\cite[Fig.1]{PoSc19a}. %
\begin{figure}[htb]%
\centering
\begin{minipage}[c]{0.24\textwidth}
\begin{tikzpicture}
\begin{axis}[scale only axis,width = 0.8\textwidth,
,xlabel=$\omega_1$,ylabel=$\omega_2$]
\addplot3 [
    domain=-5:5,
    domain y = -5:5,
    samples = 20,
    samples y = 20,
    surf,shader=flat
    ] { (exp(-x^2)) * (exp(-y^2))};
\end{axis}
\end{tikzpicture}\\
\centering
$\vec u=\varnothing$
\end{minipage}
\begin{minipage}[c]{0.24\textwidth}
\begin{tikzpicture}
\begin{axis}[scale only axis,width = 0.8\textwidth,
,xlabel=$\omega_1$,ylabel=$\omega_2$]
\addplot3 [
    domain=-5:5,
    domain y = -5:5,
    samples = 20,
    samples y = 20,
    surf,shader=flat
    ] { (1-exp(-x^2)) * (exp(-y^2))};
\end{axis}
\end{tikzpicture}\\
\centering
$\vec u=\{1\}$
\end{minipage}
\begin{minipage}[c]{0.24\textwidth}
\begin{tikzpicture}
\begin{axis}[scale only axis,width = 0.8\textwidth,
,xlabel=$\omega_1$,ylabel=$\omega_2$]
\addplot3 [
    domain=-5:5,
    domain y = -5:5,
    samples = 20,
    samples y = 20,
    surf,shader=flat
    ] { (exp(-x^2)) * (1-exp(-y^2))};
\end{axis}
\end{tikzpicture}\\
\centering
$\vec u=\{2\}$
\end{minipage}%
\begin{minipage}[c]{0.24\textwidth}
\begin{tikzpicture}
\begin{axis}[scale only axis,width = 0.8\textwidth,
,xlabel=$\omega_1$,ylabel=$\omega_2$]
\addplot3 [
    domain=-5:5,
    domain y = -5:5,
    samples = 20,
    samples y = 20,
    surf,shader=flat
    ] { (1-exp(-x^2)) * (1-exp(-y^2))};
\end{axis}
\end{tikzpicture}\\
\centering
$\vec u=\{1,2\}$
\end{minipage}
\caption{$2$-dimensional decomposition of the frequency domain into ANOVA terms for Gaussian distribution $\mu_{\G}$. Plotted are the functions $ |E(\vec 0,\w,\mu_{\G},\vec u)|$ for $\vec u\subseteq\{1,2\}$.}
\label{fig:decomp_FT}
\end{figure}
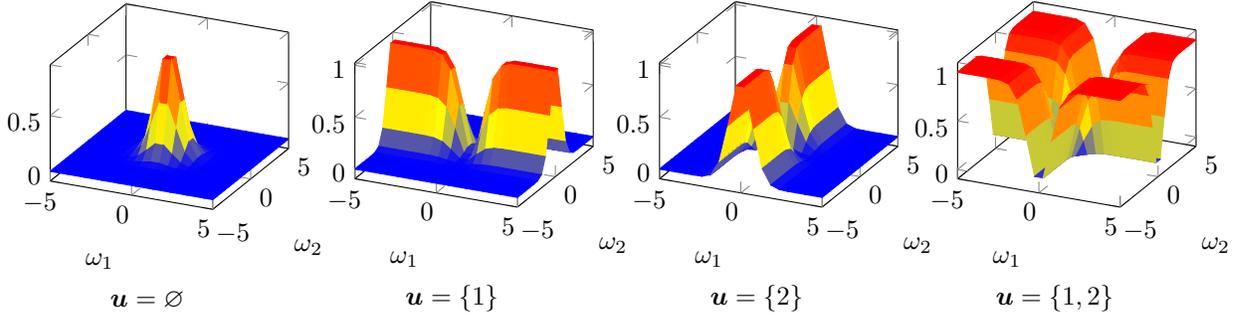%
 \end{Example}%
To study functions of low-order $q$, define 
$$\TT_qf \coloneqq \sum_{|\vec u|\leq q} f_{\vec u}.$$
In \cite[Corollary 2.32]{SchmischkeDiss} the author delivers error estimates for the truncation error $\norm{f-\TT_qf}_{L_2(\R^d,\mu)}$ for 
functions $f$ in function spaces with product and order-dependent weights, which builds on a transformation of periodic functions and an ANOVA decomposition based on Fourier coefficients on the torus. 
However, the estimates there are related to the smoothness of the transformed function on the torus, see also~\cite{LiPo22} for details of the idea of transformations from $\R^d$ to the torus and the transformation of the smoothness thereby. Since the transformation can destroy the smoothness, in the following theorem we give a bound on the truncation error $\norm{f-\TT_qf}_{L_2(\R^d,\mu)}$ relative to the norm $\norm{f}_{H^s_\mix(\R^d)}$. The proof can be found in Appendix~\ref{sec:proofs}.

\begin{Theorem}\label{thm:error_f-Tqf}
Let the measures $\mu_i$ be either symmetric and have positive Fourier transform or fulfill for fixed $s>\frac 12$ the mild condition
\begin{equation}\label{eq:condition}
\frac{|1-\hat{\mu_i}(-\omega_i)| + |\hat{\mu_i}(-\omega_i)| }{(1+|\omega_i|^2)^{s}}\leq 1
\end{equation}
for all $i\in [d]$ and all $\omega_i\in \R$. Define the constant
$$c_{\mu,s} = \max_{i\in[d]}\max_{\omega\in \R} \frac{1-\hat{\mu_i}(-\omega)}{(1+|\omega|^2)^{s}}.$$
Then for $f\in H^s_\mix(\R^d)$ the truncation error is bounded by
\begin{align*}
\norm{f-\TT_{q}f}_{L_2(\R^d,\mu)}^2 &\leq \frac{c_{\mu,s}^q}{(2\pi)^{2d}} \norm{\hat{\mu}}_{L_1(\R^d)}\norm{f}^2_{H^s_{\mix}(\R^d)}.
\end{align*}
Furthermore, if the Fourier transforms $\hat{\mu_i}$ are differentiable, we have
\begin{equation}\label{eq:constant2}
c_{\mu,s} = \max_{i\in [d]}\max_{\omega\in \R} \frac{-\hat{\mu}_i'(\omega)}{2s\omega\,\left(1+\omega^2\right)^{s-1}}.
\end{equation}
\end{Theorem}

We specifically point out that in the previous result the truncation error $\norm{f-\TT_{q}f}_{L_2(\R^d,\mu)}^2$ is bounded by a constant $ \O(2^{-q})$ and $\norm{f}_{H^s_\mix(\R^d)}$, which is determined by the smoothness of $f$. In general, this norm can increase with increasing $d$, but on the other hand the factor $(2\pi)^{-2d}\norm{\hat{\mu}}_{L_1(\R^d)}$ decays exponentially with increasing $d$. We want to point out, that our estimates are restricted to the case $\hat \mu\in L_1(\R^d)$, which is not the case for the uniform distribution on a cube. In the latter case one should use the the Fourier series instead of the Fourier transform to describe the ANOVA decomposition, see the periodic example in~\eqref{eq:mu_hat}.  
\begin{Example}
Let us have a look at Gaussian samples $\mu_i(x_i) =\mu_{\G}(x_i)= \frac{1}{\gamma \sqrt{2\pi}} \e^{-\tfrac{   x_i^2}{2\gamma^2 }}$ with variance $\gamma$, which means
$\hat{\mu}_i(\omega_i) = \e^{-\gamma^2\omega_i^2/2}$. Then
\begin{align*}
c_{\mu,s} &= \frac{\gamma\omega_i}{2s\omega_i (1+\omega_i^2)^{s-1}} \e^{-\gamma^2\omega_i^2/2} =\frac{\gamma}{2s(1+\omega_i^2)^{s-1}} \e^{-\gamma^2\omega_i^2/2}\leq \frac{\gamma}{2s},\\
\norm{\hat{\mu}_i}_{L_1(\R)} &= \frac{\sqrt{2\pi}}{\gamma}, 
\end{align*}
such that 
$$
\norm{f-\TT_{q}f}_{L_2(\R^d,\mu_\G)}^2 \leq (2\pi)^{-\tfrac 32d}(2s)^{-(q+1)}\norm{f}^2_{H^s_{\mix}(\R^d)}.
$$
Let us have a look at Cauchy distributed samples $\mu_i(x_i) =\mu_{\mathcal C}(x_i)= \frac{1}{\pi \gamma (1+x_i^2/\gamma^2) }$ with variance $\gamma$, which means
$\hat{\mu}_i(\omega_i) = \e^{-\gamma |\omega_i|}$. Then
\begin{align*}
c_{\mu,s} &= \sup_{\omega_i>0}\frac{\gamma\omega_i}{2s\omega_i (1+\omega_i^2)^{s-1}} \e^{-\gamma\omega_i} =\frac{\gamma}{2s(1+\omega_i^2)^{s-1}} \e^{-\gamma^2\omega_i^2/2}\leq \frac{\gamma}{2s},\\
\norm{\hat{\mu}_i}_{L_1(\R)} &= \frac{\sqrt{2}}{\gamma}, 
\end{align*}
such that 
$$
\norm{f-\TT_{q}f}_{L_2(\R^d,\mu_{\mathcal C})}^2 \leq 2^{-d}\pi^{-2d}(2s)^{-(q+1)}\norm{f}^2_{H^s_{\mix}(\R^d)}.
$$
\end{Example}

\section{The generalized ANOVA decomposition for correlated input variables}\label{sec:dep_input}
The main assumption of the ANOVA decomposition is that the input parameters $x_i, i=1,\ldots, d$, are independent.
This is unrealistic in many cases. Clearly, the correlation structure of random variables heavily influences the
composition of component functions as well as global sensitivity analysis.

However, when the dependence is present
among variables, the variance contribution of an individual variable
$x_i$ consists of not only the contribution resulting from the variable itself, but also contains the dependent contribution resulting from the
dependence between variable $x_i$ and other variables. So far, the literature \cite{ LiRa12, Ra142} discusses the variance
contributions with dependent variables, and makes a distinction
between the independent contribution and dependent contribution of the variables.\par

We now consider possibly dependent input variables $x_i$, i.e.~an arbitrary non-product type probability density function $\mu\colon \R^d \rightarrow \R $, that has 
marginal probability density functions 
\begin{equation}
\label{eq:mu_u}\mu_{\vec u}(\x_\vec u) \coloneqq \int_{\R^{d-|\vec u|}} \mu(\x)\dx \vec x_{\vec u^c}, \end{equation}
where $\varnothing \neq \vec u \subseteq [d] $. 

In the case for independent variables, the ANOVA decomposition~\eqref{eq:anova-decomp} is unique by demanding the condition~\eqref{eq:orth}. For dependent variables, it is in general not possible to find an orthogonal decomposition. First, there is a mild condition to the measure $\mu$ needed, to construct an ANOVA decomposition of the form~\eqref{eq:anova-decomp}:
Assume that for every $\vec u\subseteq[d]$ the support of $\mu_{\vec u}$ is \textbf{grid-closed} \cite{Ho07}. The grid closure implies that for any point $\x_{\vec u}\in \supp \mu_{\vec u} $ we can move in each coordinate direction and find another point in the support of $\mu_{\vec u}$. This is a mild regularity requirement, which is fulfilled by common probability distributions. The grid closure excludes only degenerated distributions like $\mu_{\{1,2\}} = \vec 1_{\{x_1 = x_2\}}$, where it anyway is not possible to distinguish between the input variables $x_1$ and $x_2$. 

Second, we have to replace the condition~\eqref{eq:orth} of the classical ANOVA decomposition in the setting for independent variables to a milder condition, \textbf{hierarchical orthogonality} condition, 
\begin{equation}\label{eq:sca_prod_gen}
\int_{\R^d} f_{\vec u}(\x_{\vec u}) f_{\vec v}(\x_{\vec v}) \mu(\x) \dx \x = 0 \quad \text{ for all } \vec v \subset \vec u, 
\end{equation}
see~\cite{Ho07, Ra142} for more details.
The \textbf{weak annihilating conditions} are 
\begin{equation}\label{eq:zero_mean_gen}
\int_{\R} f_{\vec u}(\x_{\vec u}) \mu_{\vec u}(\x_{\vec u}) \dx x_i = 0 \quad \text{ for all } i \in \vec u, \vec u \subseteq [d].
\end{equation}
Demanding these conditions, leads to a unique generalized ANOVA decomposition. Indeed, if \eqref{eq:zero_mean_gen} is fulfilled, also \eqref{eq:sca_prod_gen} is true, see~\cite{Ho07}.
Every square-integrable multivariate
function $f\in L_2(\R^d,\mu)$ admits a unique, finite, hierarchical expansion 
\begin{equation}\label{eq:anova_gen}
f(\x) = \sum_{\varnothing \neq \vec u \subseteq [d]} f_{\vec u}(\x_{\vec u}),
\end{equation}
referred to as the \textbf{generalized ANOVA decomposition}. The existence and uniqueness of the decomposition in~\eqref{eq:anova_gen} under mild conditions has been proven in~\cite{ChGa12, Ho07, St94}. That is, for the existence the support of every $\mu_{\vec u}$ has to be grid-closed and the uniqueness follows by demanding~\eqref{eq:zero_mean_gen}. Note that the generalized ANOVA decomposition matches the classical ANOVA decomposition~\eqref{eq:anova-decomp}, if the input variables are independent, i.e.~if the density $\mu$ is a tensor product. \par

There exist many methods for calculating the global sensitivity indices~\eqref{eq:sobol} of a function of independent variables. In contrast, only a few methods,
such as those presented in~\cite{ChGa12, Ho07, KuTaAn12, LiRa12, Ra142} are available for models with dependent or correlated
input. In all literature the Sobol indices~\eqref{eq:sobol} are generalized to the following.
\begin{Definition}\label{def:sobol_indices}
The Sobol indices for an ANOVA term $f_{\vec u}$ measuring the contribution of $\x_{\vec u}$ into the model, denoted by $S_{\vec u, \text{var}}, S_{\vec u, \text{cor}}$ and $S_{\vec u} $ are given by 
\begin{align*}
S_{\vec u, \text{var}} &= \frac{\sigma^2(f_{\vec u})}{\sigma^2(f)}\\
S_{\vec u, \text{cor}} &= \frac{\sum_{\stackrel{\varnothing \neq \vec v \subseteq [d]}{\vec v\cap \vec u\neq \varnothing, \vec v\not\subseteq \vec u}} \langle f_{\vec u},f_{\vec v}\rangle_{\mu}}{\sigma^2(f)} \\
S_{\vec u} &=  S_{\vec u, \text{var}} + S_{\vec u, \text{cor}}.
\end{align*}
The first two indices $S_{\vec u, \text{var}}$ and $S_{\vec u, \text{cor}}$ represent the normalized versions of the \textbf{variance contributions} and \textbf{covariance contributions} from $f_{\vec u}$ to $\sigma^2(f)$. The third index, $S_{\vec u}$, referred to as the \textbf{total global sensitivity index} is the sum of variance and covariance contributions. 
\end{Definition}
When the random variables are independent, the covariance contributions to the total sensitivity index $S_{\vec u, \text{cor}}$ vanish for all $\vec u\subseteq [d]$, leaving only one sensitivity index for the classical ANOVA decomposition.

\subsection{ANOVA decomposition of the frequency domain}\label{sec:tempered_distributions}
In case of periodic functions and independent input variables there is a connection between the ANOVA terms $f_{\vec u}$ and the Fourier coefficients~\cite{PoSc19a} or the wavelet coefficients~\cite{LiPoUl21}. This connection leads to efficient algorithms. In the following we study a similar connection for functions on $\R^d$ and the Fourier transform. 

Let $U\subseteq \mathcal P([d])$ be a subset of all ANOVA terms and let $f\in L_2(\R^d)\cap L_2(\R^d,\mu)$ have the generalized ANOVA decomposition $f(\x) = \sum_{\vec u \in U}f_{\vec u}(\x_{\vec u})$. The functions $f_{\vec u}$ depend only on the variables $\x_{\vec u}$, i.e.~for all $\phi \in \mathcal S(\R^{d})$,
\begin{align*}
\langle \hat T_f ,\phi\rangle  &= \langle T_f ,\hat \phi\rangle = \sum_{\vec u\in U} \langle f_{\vec u}(\x_{\vec u}),\hat \phi\rangle 
= \sum_{\vec u\in U} \int_{\R^d} f_{\vec u} (\x_{\vec u}) \hat \phi(\x)\d \x 
= \sum_{\vec u\in U} \int_{\R^d} f_{\vec u} (\x_{\vec u}) \int_{\R^d} \phi(\w) \d \w \d \x
\\
&= \sum_{\vec u\in U} \int_{\R^{|\vec u|}} f_{\vec u} (\x_{\vec u})   \int_{\R^d} \phi(\w) \int_{\R^{|\vec u^c|}} \e^{-\im \langle \x_{\vec u^c},\w_{\vec u^c}\rangle} \d \x_{\vec u^c} \, \e^{-\im \langle \x_{\vec u},\w_{\vec u}\rangle} \d \w \d \x_{\vec u}
\\
&= \sum_{\vec u\in U} \int_{\R^{|\vec u|}} f_{\vec u} (\x_{\vec u}) \int_{\R^d} \phi(\w) \delta(\w_{\vec u^c})\e^{-\im \langle \x_{\vec u},\w_{\vec u}\rangle} \d \w \d \x_{\vec u}\\
&= \sum_{\vec u\in U} \int_{\R^{|\vec u|}} f_{\vec u} (\x_{\vec u}) \int_{\R^d} \phi(\w_{\vec u},\vec 0) \e^{-\im \langle \x_{\vec u},\w_{\vec u}\rangle} \d \w_{\vec u}  \d \x_{\vec u} \\
&= \sum_{\vec u\in U} \langle f_{\vec u}(\x_{\vec u}),\hat \phi_{\vec u}\rangle  \qquad \text{ where } \phi_{\vec u}(\w_\vec u) \coloneqq \phi(\w_{\vec u},\vec 0) \in  \mathcal S(\R^{|\vec u|})\\
&= \sum_{\vec u\in U} \langle \hat T_{f_\vec u},\phi_{\vec u}\rangle.
\end{align*}%
Since $\phi\in S(\R^{d})$, the functions $\phi(\w_{\vec u},\vec 0) \in  \mathcal S(\R^{|\vec u|})$ are also in a Schwarz space of lower dimension and the notation $\phi(\w_{\vec u},\vec 0)$ means that we insert in the function $\phi$ zeros for all $\omega_i$ with $i \notin \vec u$. %
The previous calculations also decompose 
the frequency domain $\R^d$ into parts which belong to the different ANOVA terms, in the sense that for the ANOVA term $\vec u$ the corresponding frequencies $\w$ have to fulfill $\w_{\vec u^c} = \vec 0$. For that reason let us define the frequency decomposition
\begin{equation}\label{eq:Q_u}
Q_{\vec u}\coloneqq \{\w \in \R^d\mid \w_{\vec u^c} = \vec 0, \omega_i \neq 0 \text{ for } i\in \vec u\}.
\end{equation}
An illustration for the three-dimensional case can be found in Figure~\ref{fig:frequency_decomp}. In general $f_{\vec u}\notin L_2(\R^{|\vec u|})$, but $f_{\vec u}\in L_2(\R^{|\vec u|}, \mu_{\vec u})$.
 
\begin{figure}[htb]
\centering
\begin{minipage}[t]{0.5\textwidth}
	\centering
		\includegraphics[width=\textwidth]{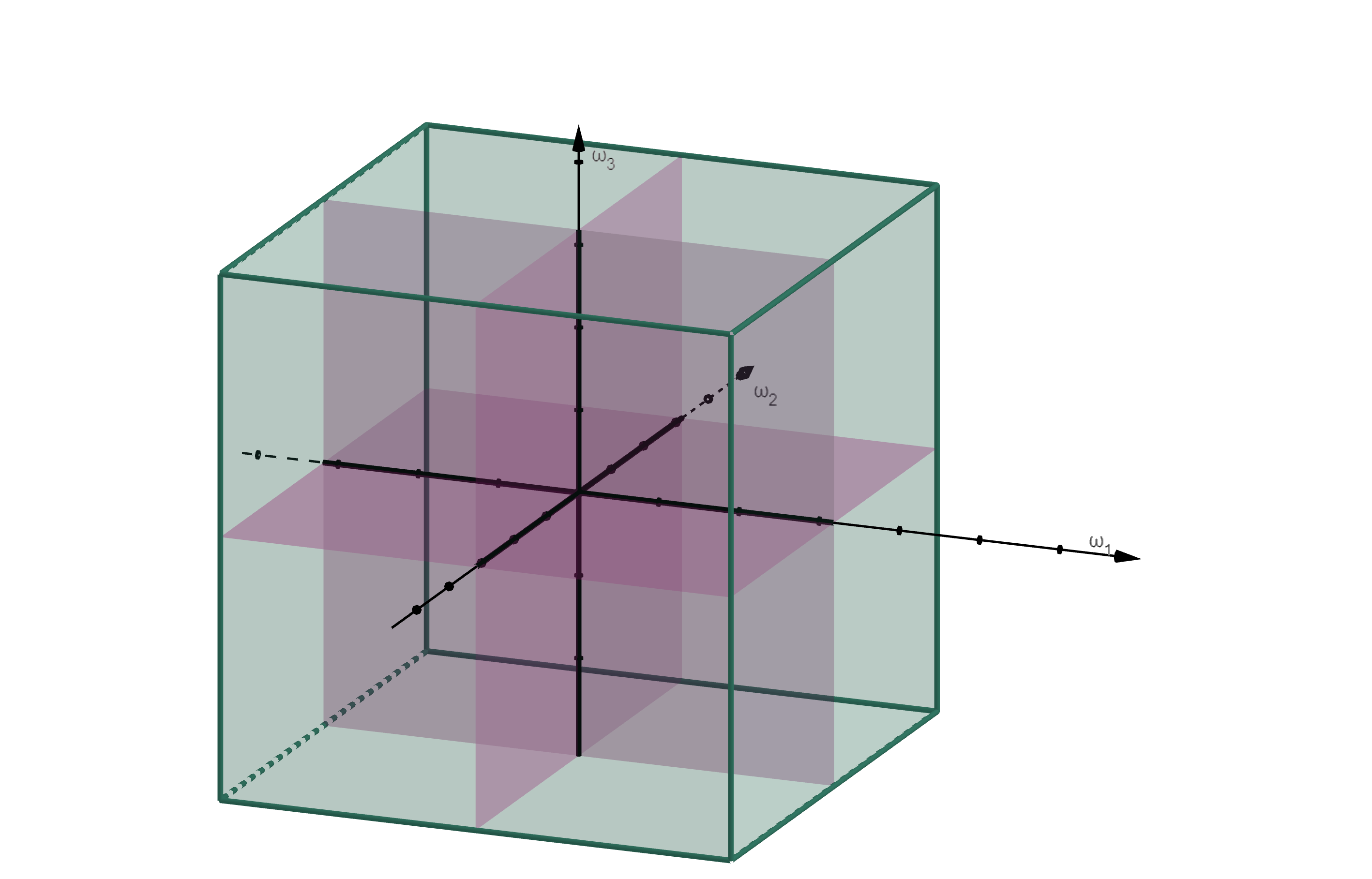}
		\centering 
		$=$
\end{minipage}\\
\begin{minipage}[t]{0.22\textwidth}
	\includegraphics[width=\textwidth]{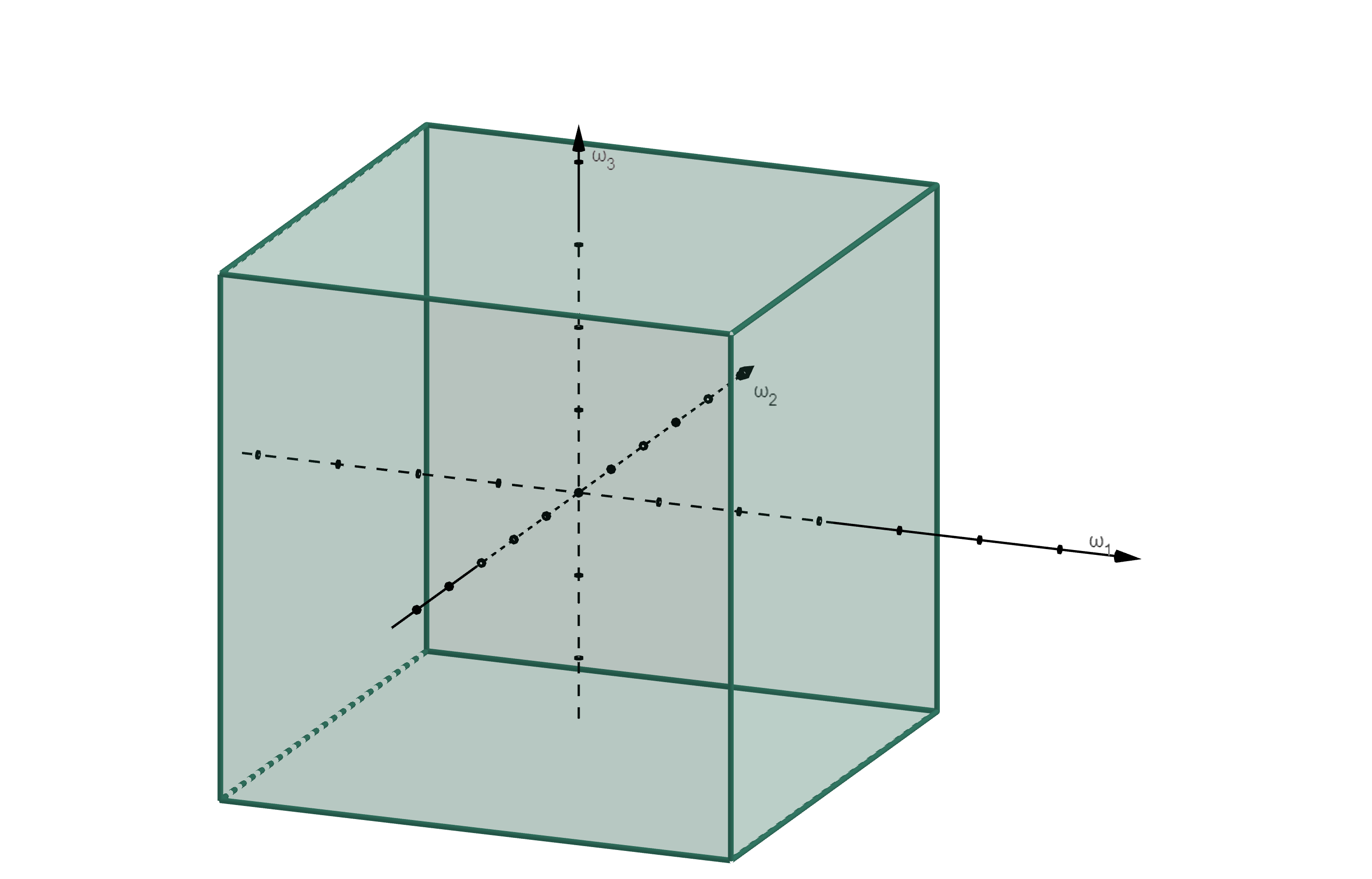}
\centering
$Q_{\{1,2,3\}}$
\end{minipage}%
		\begin{minipage}{0.038\textwidth}
				\vspace*{-50pt}
			$+$
		\end{minipage}%
\begin{minipage}[t]{0.22\textwidth}
\includegraphics[width=\textwidth]{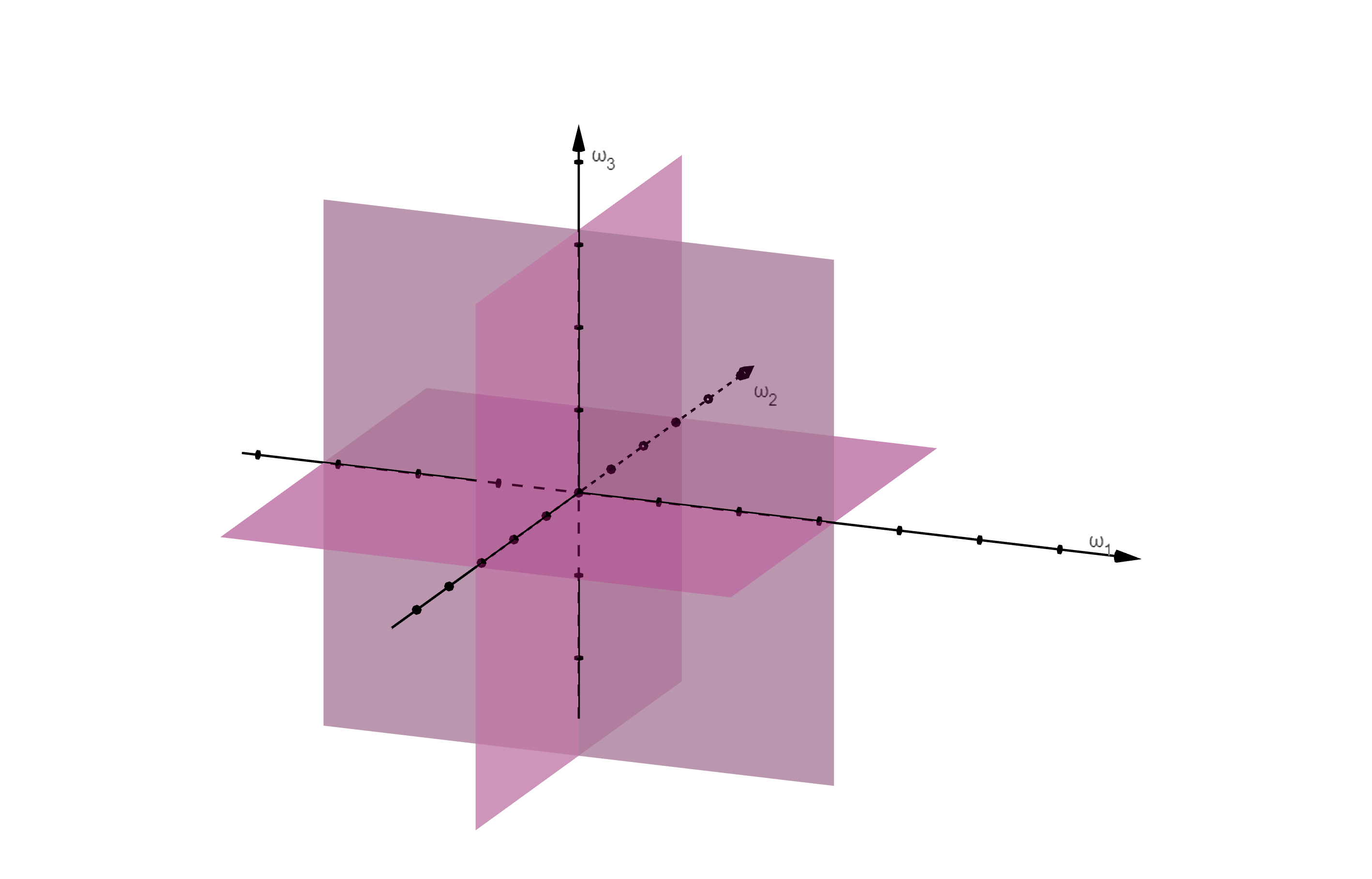}
\centering
$Q_{\{1,2\}},  Q_{\{1,3\}}, Q_{\{2,3\}}$
\end{minipage}%
		\begin{minipage}{0.038\textwidth}
		\vspace*{-50pt}
			$+$
		\end{minipage}%
\begin{minipage}[t]{0.22\textwidth}
\includegraphics[width=\textwidth]{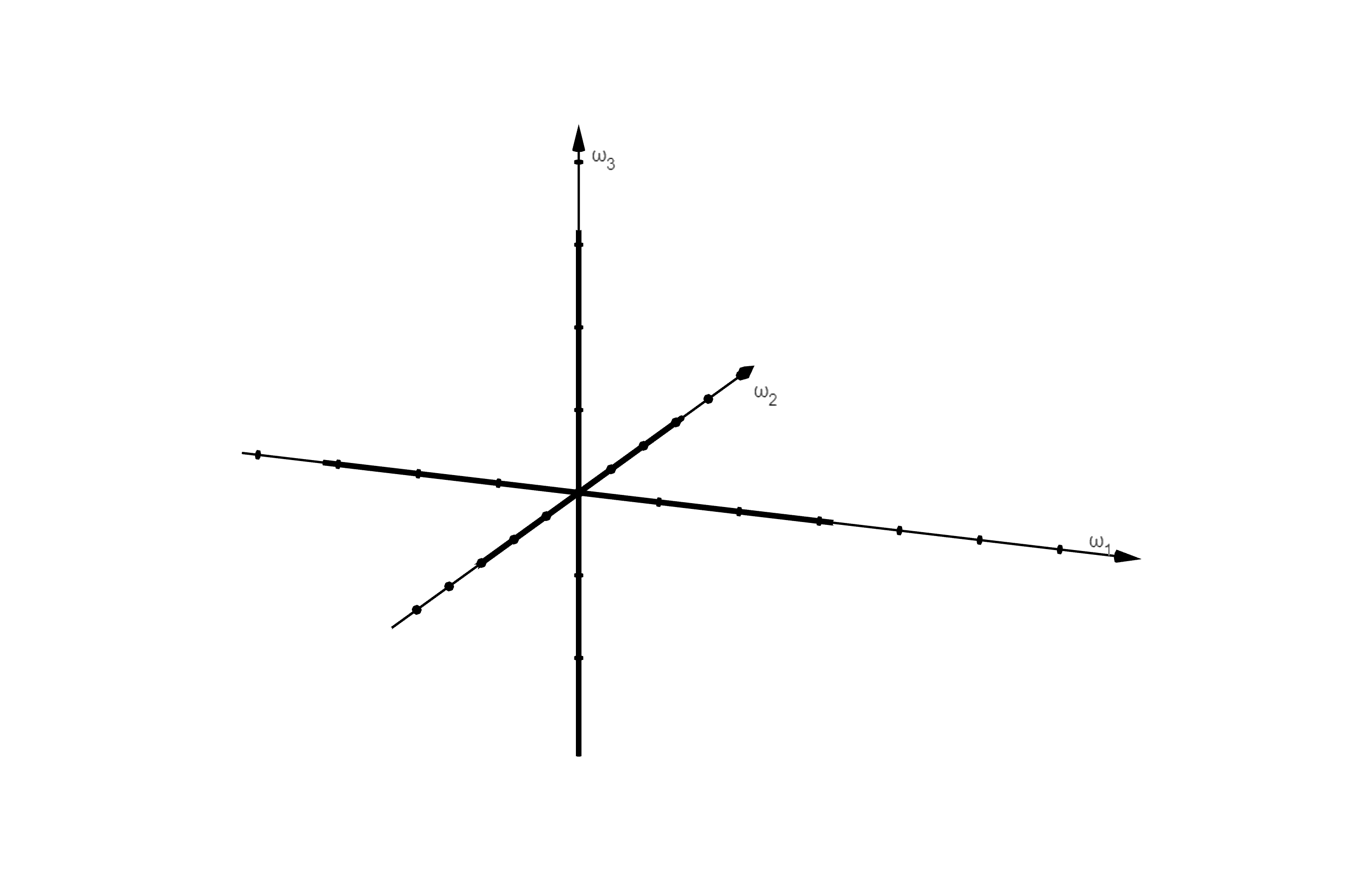}
\centering
$Q_{\{1\}} ,Q_{\{2\}},Q_{\{3\}}$
\end{minipage}%
		\begin{minipage}{0.038\textwidth}
				\vspace*{-50pt}
			$+$
		\end{minipage}%
\begin{minipage}[t]{0.22\textwidth}
\includegraphics[width=\textwidth]{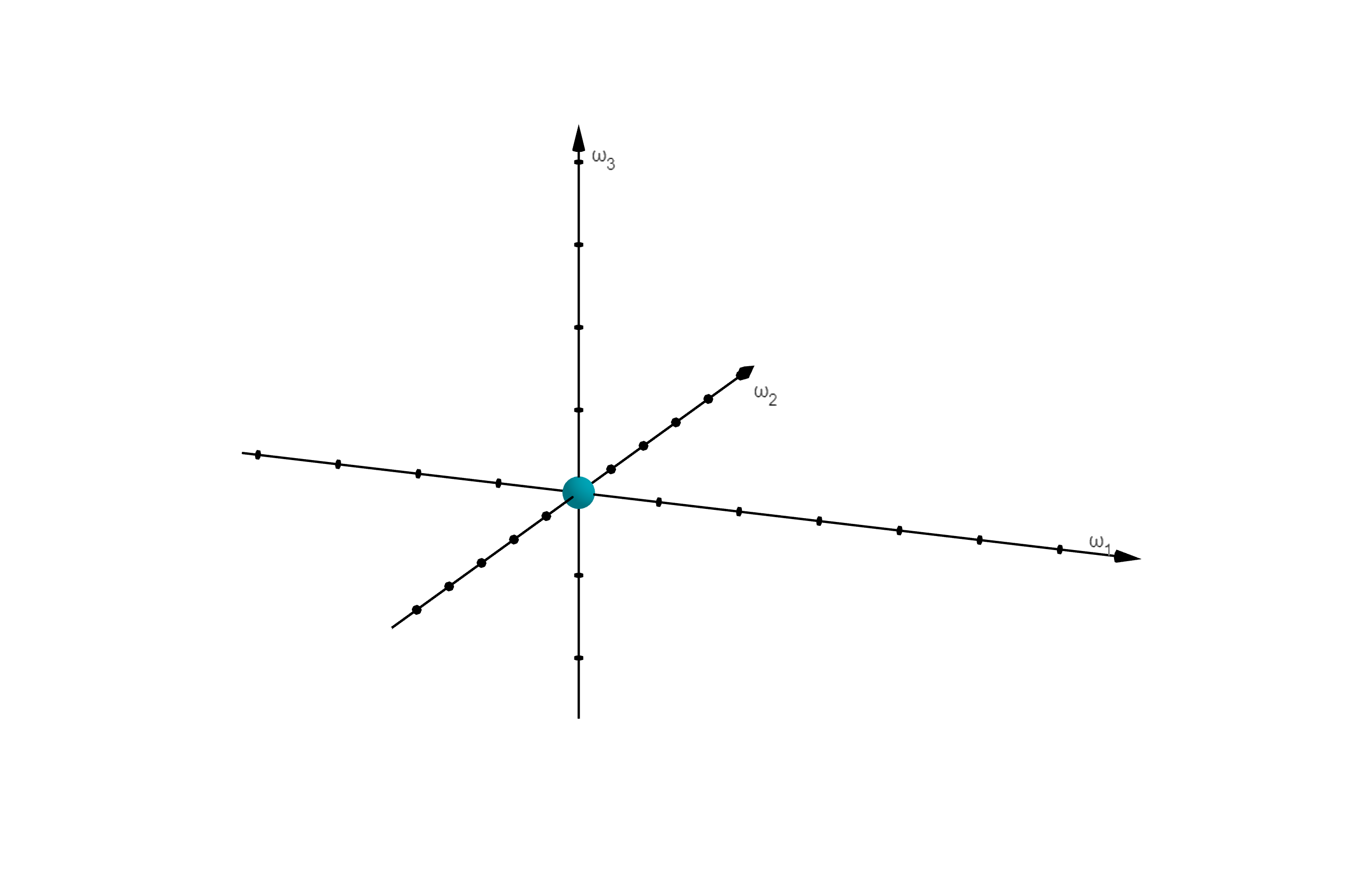}
\centering
$Q_{\varnothing}$
\end{minipage}%
\caption{Decomposition of the frequency domain $\R^3$ into the lower dimensional parts. The lower dimensional subsets $Q_{\vec v}$ are not part of the higher-dimensional $Q_{\vec u}$ for $\vec v\subset \vec u$. }
\label{fig:frequency_decomp}
\end{figure}
\section{Random Fourier features}\label{sec:RFF}
Kernel-based approaches have been extensively used in data-based applications,
including image classification and high-dimensional function approximations since
they often perform well in practice. The random feature model is a popular technique for approximating
the kernel using a randomized 
basis that can avoid the cost of full kernel methods. An alternative perspective is to view the random feature model  as a non-linear randomized function
approximation. 
The theoretical foundation of random Fourier features builds on Bochner's characterization of positive definite functions, see also the pioneering work for random features~\cite{RaRe07}.

\begin{Theorem}[Bochner's theorem \cite{Bo23}]
A continuous and shift-invariant function $\kappa \colon \R^d \rightarrow \R$ is positive definite if and only if $\kappa$ is the Fourier transform of a non-negative measure.
\end{Theorem}
If a shift-invariant kernel $\kappa$ is properly scaled with $\kappa(\vec 0)= 1$, Bochner’s theorem guarantees that its Fourier
transform $\rho(\w)$ is indeed a probability distribution, which means that
\begin{equation}\label{eq:kernel}
\kappa(\x-\x') = \int_{\R^d} \e^{\im \langle \w,\x-\x'\rangle }\rho(\w) \dx \w = \E_{\w \sim \rho}\left(\e^{\im \langle \w,\x\rangle} \e^{-\im \langle \w,\x'\rangle}\right) ,
\end{equation}
see also~\cite[Chapter 4.4]{PlPoStTa23}.
According to~\eqref{eq:kernel}, the random Fourier feature model makes use of the standard Monte Carlo sampling scheme to approximate $\kappa(\x-\x')$. In particular, one uses the approximation
\begin{equation}\label{eq:kernel2}
k(\x-\x') = \E_{\w \sim \rho}\left(\e^{\im \langle \w,\x\rangle}\cdot  \e^{-\im \langle \w,\x'\rangle}\right) \approx \vec A(\x) \cdot \vec A(\x')^*,
\end{equation}
with the explicit feature mapping 
$$
\vec A(\x) =  \left(\e^{\im \langle \w_1,\x\rangle}, \cdots, \e^{\im \langle \w_N,\x\rangle} \right)\in \C^{N},
$$
where $\{\w_{k}\}_{k=1}^N$ are sampled from $\rho(\w)$ independently of the training set $\X$.
Consequently, the original kernel matrix $\vec K = \left(\kappa(\x-\x')\right)_{\x\in \X,\x'\in \X}$ is then approximated by
$$\vec K \approx \frac 1N \vec A \vec A^*, \quad \vec A = \left(\e^{\im \langle \w_k,\x\rangle} \right)_{\x\in \X, k=1,\ldots,N}\in \C^{|\X|\times N}.$$

In the case of $N \ll |\X|$, this is a low rank approximation of the kernel which is for a big amount of samples $|\X|$ computational more feasible than the kernel matrix $\vec K$ itself. \par

We will focus on two families of feature distributions, Gaussian features and the Sobolev-type features with i.i.d coordinates (associated to the Gaussian kernel and Laplace-type kernel, respectively):
\begin{align}
\rho^\sigma_{\G}(\w) &\coloneqq \left(\frac{1}{\sigma\sqrt{2\pi}}\right)^d\exp{\left(-\norm{\w}^2/(2\sigma^2)\right)},\quad \quad && \text{Gaussian density}, \sigma>0 \label{eq:Gauss}\\
\rho^{s,\sigma}_{\Pi}(\w) &\coloneqq  c_{\rho_\Pi^s} \,\prod_{i\in [d]}\frac{1}{\sigma \left(1+\omega_i^2/\sigma^2\right)^s}, && \text{tensor-product density } (d\geq 2, s>\tfrac 12,\sigma>0 ),\label{eq:rho_s}
\end{align}
with the constant $c_{\rho_\Pi^s} \coloneqq  \left(\frac{\Gamma(s)}{\sqrt{\pi}\,\Gamma(s-\tfrac 12)}\right)^d$ 
chosen to ensure the associated densities have unit mass. 
One special case of the tensor-product density, is the tensor-product \textbf{Cauchy} distribution 
\begin{equation}\label{eq:Cauchy}
\rho^{\sigma}_{C}(\w) \coloneqq  \prod_{i = 1}^d \frac{1}{\pi \sigma (1+w_{i}^2/\sigma^2)}.
\end{equation}

From the neural network point of view, this is a two-layer network
with a randomized but fixed single hidden layer. Given an unknown function
$f \colon  \R^d \rightarrow \C$, the random Fourier feature model takes the form 
$$f^\#(\x) = \sum_{j = 1}^Na_j \e^{\im \langle\w_j,\vec x\rangle},$$ 
where $\x\in \R^d$ is the
input data, $(\w_j)_{j=1}^N$ are the random weights and $\vec a = (a_j)_{j=1}^N\in \C^N$ is the final weight layer.

Existing algorithms differ in how they select features $\w_j$ and weights $a_j$. In most cases, the features $\w_j$ are independent and identically distributed random variables generated by the (user defined) probability density function $\rho(\w)$. Then, for the random Fourier feature model, the output layer $\vec a$ is
trained (training data-dependent or independent), while the hidden layer (the weights $\w_j$) are fixed. \par

Suppose we are given a probability density $\rho$ used to sample the entries of the random weights $\w$.
Let us recall the definition for bounded $\F(\rho)$-norm functions, see also~\cite{Ha23, Xie22, HARFE23}.

\begin{Definition}\label{def:rho-norm}
Let $\rho\colon\R^d\rightarrow \R$ be a density function. A function $f\colon \R^d\rightarrow \R$ has finite $\F(\rho)$-norm with respect to $\e^{\langle \w, \cdot\rangle}$ if it belongs to the class
\begin{equation}\label{eq:F_rho_old}
\F(\rho) \coloneqq \left\{f(\vec x) = \int_{\R^d} \hat{f}(\w) \e^{\im \langle \w, \x\rangle}\d \w\mid  \norm{f}_{\F(\rho)}\coloneqq \sup_{\w\in \R^d} \left\lvert \frac{\hat{f}(\w)}{\rho(\w)}\right\rvert <\infty\right\}.
\end{equation}
\end{Definition}
Choosing a Gaussian distribution $\rho$ like in \cite{Xie22,Ha23} is a very restrictive condition to the function space $\F(\rho)$, since the Fourier transform $\hat f$ has to decay faster than exponentially. For instance, in Sobolev spaces the decay of the Fourier transform is polynomially.
For functions in $\F(\rho)$ generalization error bounds for random feature ridge
regression from~\cite{Xie22,Ha23} achieve the rate $\O(M^{-1})$, provided the number of data samples grows with $M$ and satisfies certain statistical assumptions.\par

For the definition of $\F(\rho)$, one has to choose either $\rho$, such that $\supp \rho = \R^d$, or one has to ensure that $\hat f(\w) = 0$ on $\R^d\backslash \supp \rho$. This is a drawback of this definition used in the literature. Furthermore, when more information is known about the target function $f$, the rates and complexity
bounds improve (especially with respect to the dimension). This helps mitigate issues
with the approximation of functions in high-dimensions. Especially, if the function is of low order. This is what we want to study in this Section. 
If the function $f$ is of low order, say $q$, the Fourier transform $\hat f$ is not defined as a function, 
but only in distributional sense, which makes the norm in Definition~\ref{def:rho-norm} infinite. This also concerns the $\rho$-norm for function of lower dimension, defined in~\cite{Ha23} if the effective dimension of the function $f$ does not equal $q$, the sparsity of the drawn random features.
For that reasons we introduce in the following ANOVA truncated random Fourier features.

\subsection{Random  Fourier Features and ANOVA}\label{sec:RF_ANOVA}
In this section we relate the generalized ANOVA decomposition~\eqref{eq:anova_gen} to the random feature approximation. If nothing is known about the function $f$, it might be appropriate to use random Fourier features from a $d$-dimensional density $\rho$. But if the function is of low order or sparse in the ANOVA-terms, it is useful to decompose the density $\rho$ having the ANOVA decomposition of the function $f$ in mind. Shortly, we draw $n_{\vec u}$ random Fourier features supported on $\vec u$ for every ANOVA term $\vec u$ in the set $U\subset \mathcal P([d])$. The random Fourier features comprise all ANOVA terms of the downward closure
\begin{equation}\label{eq:under_U}
\underline U \coloneqq \left\{ \vec v \subseteq [d] \mid \exists \vec u\in U \text{ such that } \vec v\subseteq \vec u\right\}.
\end{equation}
Assume the function $f$ has a sparse ANOVA decomposition, i.e.~for small $\epsilon >0$, 
$$\norm{f-\TT_{ \underline{ U}} f}_{L_2(\R^d,\mu)} \leq \epsilon, \qquad \text{ where }  \TT_{ \underline{ U}} f(\x)= \sum_{\vec u \in  \underline{ U} } f_{\vec u}(\x_{\vec u}) .$$
For such functions it is reasonable to reduce the dimension of the random Fourier features, where the remaining main task is to find the index set $U$. In this paper we aim to develop Algorithms~\ref{alg:1} and \ref{alg:2} for finding this index set $U$.
\begin{Definition}[ANOVA-truncated random Fourier features]\label{def:ANOVA-features}
Let $U\subset \mathcal P([d])$ be a set of ANOVA indices and the functions $\rho_{\vec u}\colon \R^{|\vec u|}\rightarrow \R$ be probability distributions. A collection of $N = \sum_{\vec u\in U} n_{\vec u}$ weight vectors $\w_1,\ldots, \w_N$ is called a set of \textbf{ANOVA-truncated random Fourier features}, if it is generated as follows:
For each index $\vec u\in U$ draw $n_{\vec u}$ realizations $\vec z_1,\ldots,\vec z_{n_{\vec u}}$ from $\rho_{\vec u}$ and construct $|\vec u|$-sparse features $\w_k$ by setting $\supp(\w_k) = \vec u$ and $(\w_k)_{\vec u} = \vec z_k$.
Define the notation $\I_{\vec u} = \{\w \in \I \mid  \w \in Q_{\vec u} \} $, where $Q_{\vec u}$ is defined in~\eqref{eq:Q_u} and $\mathcal I = \{\w_1,\ldots,\w_N\}$.
\end{Definition}
Note that the algorithms~\cite{Ha23, HARFE23, Xie22} are restricted to index sets $U = \{\vec u \subseteq [d] \mid |\vec u| = q\}$. Concerning the interpretability of the results this is a disadvantage, since it can happen that non-important input variables gain significance, see the following example. 
For that reason, we use random Fourier features of different dimension up to order $q$, and not only random Fourier features of order $q$.

\begin{Example}\label{ex:compare_harfe}
Introduce a  function of the form
\begin{equation}\label{eq:fried}
f(x_1,\ldots, x_{20}) = f_{1,2}(x_1,x_2) + f_3(x_3) +f_4(x_4) +f_5(x_5),
\end{equation}
this includes the Friedmann function considered in~\cite[Fig.4]{HARFE23}. The Fourier transform $T_{\hat f}$ is supported on 
$$Q_\varnothing \cup Q_{\{1\}} \cup Q_{\{2\}} \cup Q_{\{1,2\}} \cup Q_{\{3\}} \cup Q_{\{4\}} \cup Q_{\{5\}}. $$
Choosing $q$-sparse random Fourier features with $U = \{\vec u\subset [d]\mid|\vec u|=1\}$ and without demanding some orthogonality to the ANOVA terms, means that for example the ANOVA term $f_1$ can be described by a sum $\sum_{k=2}^{20}\sum_{\w \in \I_{\{1,k\}}} \e^{\im (\omega_1x_1  +\omega_kx_k)}$.  
This leads to the problem, that coefficients $a_{\w}$ for $\w\in \I_{\{1,k\}}$ for some $k\in \{2,\ldots, 20\}$ describe the ANOVA term $f_1$ and are non-zero, despite the fact that the variable $x_k$ does not play a role in the function. Then, analyzing the
histogram based on the occurrence rate (as a percentage) of the input variables obtained from the HARFE
model like in~\cite[Fig.4]{HARFE23}, leads to non-zero weights for non-necessary variables.
\end{Example}

In applications the function $f$ is unknown, even if a 
formula for $f$ is available, the component functions $f_{\vec u}$ can not be calculated analytically using for example the coupled equations like in~\cite{Ra142}. In typical applications, the function $f$ is only available
by sampling points $\x\in \X$ from modelling or experiments. Therefore, a practical
numerical method is needed to construct each unique component function. Similar to~\cite{LiRa12}, we minimize the squared error under
the hierarchical orthogonality condition~\eqref{eq:sca_prod_gen}.
For a set $U\subseteq \mathcal P([d])$ and ANOVA-truncated random Fourier features $\w\in \I$ drawn according to Definition~\ref{def:ANOVA-features},
we construct the random feature matrix 
\begin{equation}\label{eq:A}
\vec A = \left(\vec A_{\vec u}\right)_{\vec u\in U} \quad \text{ with } \quad \vec A_{\vec u} = \left(\e^{\im \langle \w_{\vec u},\x_{\vec u}\rangle}\right)_{\x\in \X,\w_{\vec u}\in \I_{\vec u}}.
\end{equation}
Employing the generalized ANOVA decomposition, every term $f_{\vec u}$ of the function $f$ by a sum
$$ f_{\vec u}(\x_{\vec u}) \approx \sum_{\w_{\vec u}\in \I_{\vec u}}a_{\w} \e^{\im \langle \w_{\vec u},\x_{\vec u}\rangle},$$
where the related random Fourier features $\w$ are in $Q_{ \vec u}$. 
To find a suitable vector $\vec a=(a_{\w})_{\w \in \I}$, we use a regularization, which is similar to defining the cost function like the D-MORPH algorithm~\cite{LiRa12} does: 
A solution vector $\vec a$ should fulfill simultaneously $\norm{\vec A \vec a -\vec f}_2 = 0$ and the hierarchical orthogonality~\eqref{eq:sca_prod_gen}. Since the regularization by forcing the integrals~\eqref{eq:zero_mean_gen} to be zero is not numerically feasible, we use a discretization of the integrals $\langle f_{\vec u},f_{\vec v}\rangle_{\mu}$ instead:
\begin{equation}\label{eq:sca_prod_X}
\langle f_{\vec u} ,f_{\vec v} \rangle_{\X} \coloneqq \frac{1}{M}\sum_{\vec x\in \X}f_{\vec u} (\x_{\vec u})\overline{f_{\vec v}(\x_{\vec v})}. 
\end{equation}
In contrast to~\cite{LiRa12}, we do not want to enforce the hierarchical orthogonality~\eqref{eq:sca_prod_gen} by calculating an SVD, but rather by penalizing by using a regularization. The solution vector $\vec a^\#$ can then be obtained by minimizing 
\begin{align}\label{eq:lsqr_reg}
\vec a^\# &= \argmin_{\vec a} \norm{\vec A\vec a-\vec f}^2 + \lambda \norm{\vec a }^2_{\hat{\vec W}},\notag\\
& = \argmin_{\vec a} \norm{\begin{pmatrix}\vec A\\ \sqrt{\lambda \hat{\vec W}}\end{pmatrix}\vec a-\begin{pmatrix}\vec f\\ \vec 0\end{pmatrix}}_2^2 \\
\text{ where }\norm{\vec a }^2_{\hat{\vec W}}&=\vec a^* \hat{\vec W}\vec a=\sum_{\vec u\in U}\vec a_{\vec u} \hat{\vec W}_{\vec u}\vec a_{\vec u}, 
\end{align}
where we introduce the weight matrix $\hat{\vec W}$ by
\begin{align}\label{eq:W_hat}
\hat{\vec W} &\coloneqq \diag\left(\hat{\vec W}_{\vec u}\right)_{\vec u\in U},\notag\\
\hat{\vec W}_{\vec u} 
&= \frac{1}{M^2}\vec A_{\vec u}^* \begin{pmatrix}\vec A_{\vec v_1} \vec A_{\vec v_2}\cdots \end{pmatrix} \begin{pmatrix}\vec A^*_{\vec v_1}\\ \vec A^*_{\vec v_2}\\ \ldots\end{pmatrix}\vec A_\vec u 
= \frac{1}{M^2} \vec A_{\vec u}^* \left(\sum_{\vec v\subset \vec u} \vec A_{\vec v}\vec A_{\vec v}^* \right) \vec A_\vec u \in \C^{n_{\vec u} \times n_{\vec u}},
\end{align}
where $\vec v_i$ runs through all subsets $\vec v \subset \vec u$ and $\varnothing $ is also a subset $\vec v$ of $\vec u$.
The following lemma shows that this regularization coincides with the hierarchical orthogonality~\eqref{eq:sca_prod_gen}.
\begin{Lemma}\label{lem:lsqr_reg}
The regularization term $\vec a_{\vec u}^*\hat{\vec W}_{\vec u}\vec a_{\vec u}$ in~\eqref{eq:lsqr_reg} with the weight matrices $\hat{\vec W}_{\vec u}$, defined in \eqref{eq:W_hat} penalizes the non-orthogonality of the
terms $f_{\vec u}$ and $f_{\vec v}$ where $\vec v\subset \vec u$ with the discrete scalar product~\eqref{eq:sca_prod_X} in the sense that
$$\vec a_{\vec u}^*\hat{\vec W}_{\vec u} \vec a_{\vec u} \geq \sum_{\vec v\subset \vec u} \frac{1}{\norm{\vec a_{\vec v}}^2}|\langle f_{\vec u},f_{\vec v}\rangle_{\X} |^2.$$
 \end{Lemma}
\begin{proof}
The Cauchy-Schwarz inequality gives for all vectors $\vec b,\vec c$ and matrix $\vec A$ with suitable size that 
$$|\langle \vec A \vec b ,\vec c\rangle |^2\leq  \norm{\vec A \vec b }^2\norm{\vec c}^2.$$
Applying this to our setting yields for
\begin{align*}
\left|\langle f_{\vec u} ,f_{\vec v} \rangle_{\X}\right|^2 
&= \frac{1}{M^2} \left|\langle \vec A_{\vec u} \vec a_{\vec u},\vec A_{\vec v} \vec a_{\vec v}  \rangle\right|^2
= \frac{1}{M^2} \left|\langle \vec A_{\vec v}^*\vec A_{\vec u} \vec a_{\vec u}, \vec a_{\vec v}  \rangle\right|^2\\
&\leq \frac{1}{M^2} \norm{\vec A_{\vec v}^*\vec A_{\vec u} \vec a_{\vec u}}^2\norm{\vec a_{\vec v}}^2  
= \frac{\norm{\vec a_{\vec v}}^2 }{M^2}\left| \langle \vec A_{\vec v}^*\vec A_{\vec u} \vec a_{\vec u}, \vec A_{\vec v}^*\vec A_{\vec u} \vec a_{\vec u} \rangle\right|\\
&= \frac{\norm{\vec a_{\vec v}}^2 }{M^2} \vec a_{\vec u}^* \vec A_{\vec u}^*\vec A_{\vec v}\vec A_{\vec v}^* \vec A_\vec u\vec a_{\vec u}.
\end{align*}
Hence, the regularization term $\vec a_{\vec u} \hat{\vec W}_{\vec u} \vec a_{\vec u}$ has the following connection to the discrete orthogonality of the terms $f_{\vec u}$ and $f_{\vec v}$, if $\vec a_{\vec v} \neq \vec 0$,
\begin{align*}
\vec a_{\vec u}^*\hat{\vec W}_{\vec u} \vec a_{\vec u}
&=\vec a_{\vec u}^* \left( \frac{1}{M^2}\vec A_{\vec u}^* \left(\sum_{\vec v\subset \vec u} \vec A_{\vec v}\vec A_{\vec v}^* \right) \vec A_\vec u \right) \vec a_{\vec u}
\geq \sum_{\vec v\subset \vec u} \frac{1}{\norm{\vec a_{\vec v}}^2}|\langle f_{\vec u},f_{\vec v}\rangle_{\X} |^2.
\end{align*}
This finishes the proof.
\end{proof}
For the one-dimensional terms $\vec u = \{i\}$ we obtain equality in the previous lemma, the weight matrix $\hat{\vec W}_{\vec u}$ contains only $\vec A_{\varnothing}$ and is in this case equal to
$$\hat{\vec W}_{\{i\}} = \vec 1_{M\times M}$$ 
and
\begin{align*}
\vec a_{\{i\}}^*\hat{\vec W}_{\{i\}}\vec a_{\{i\}} 
&=\frac{1}{M^2}\vec a_{\{i\}}^*\vec A_{\{i\}}^*\begin{pmatrix}1&\cdots &1  \\ &\vdots&\\ 1&\cdots& 1 \end{pmatrix}\vec A_{\{i\}}\vec a_{\{i\}}
= \left|\frac{1}{M}\sum_{\x\in \X} f_i(x_i) \right|^2 = \frac{1}{f_{\varnothing}^2}|\langle f_{i}, f_{\varnothing}\rangle_{\X}|^2
=\left|\sum_{\x\in \X}f_i(x_i)\right|^2.
\end{align*}
For the two-dimensional case $\vec u = \{i,j\}$ we have
$$\hat{\vec W}_{\{i,j\}} = \vec 1_{M\times M} + \vec A_{\{i\}}\vec A_{\{i\}}^* +\vec A_{\{j\}}\vec A_{\{j\}}^*.$$ 
To solve the regularized least squares problem~\eqref{eq:lsqr_reg}, we have to construct the matrix $\vec A$ and the matrix $\hat{\vec W}$, which is a block diagonal matrix, with blocks belonging to every $\vec u\in U$, so the square root has to be calculated for every block separately only. The actual minimization problem is then solved by an iterative least squares algorithm.

\section{Sensitivity analysis}\label{sec:sensitivity}
The aim of sensitivity analysis is to study how the output of a mathematical model or system can be divided and allocated to different input variables. Or, speaking in the ANOVA setting, to compare the variances $\sigma^2(f_{\vec u}) $ of the different ANOVA terms. This simplifies the model: Overly complex models may complicate analysing the inputs. By performing sensitivity analysis, users can better understand what factors don't actually matter and can be removed from the model.\par

Suppose a set of ANOVA indices $U\subseteq \mathcal P([d])$ and ANOVA-truncated random Fourier features $\w\in \I$ according to Definition~\ref{def:ANOVA-features}. Let some approximation to $f$ be a sum $f^\#$ of the form
\begin{equation}\label{eq:f_sharp}
f^\#(\vec x)=\sum_{\vec u\in U}\sum_{\w\in \I_{\vec u}} a^\#_{\w} \e^{\im\langle\w_{\vec u},\x_{\vec u}\rangle },
\end{equation}
with some coefficients $a^\#_\w$ (which can be the output of some optimization algorithm). In the following we will show two methods for performing sensitivity analysis for a function of kind~\eqref{eq:f_sharp}. The first approach in Section~\ref{sec:gsi1} can be applied for independent input variables and exploits the tensor product structure of the corresponding sampling density $\mu$. We start with $q$-sparse random Fourier features and include successively needed ANOVA-terms of order smaller than $q$. In contrast to that, in Section~\ref{sec:gsi2} we will first start by incorporating all ANOVA-terms up to order $q$ and omit these with low variance.

\subsection{Sensitivity analysis for independent input variables }\label{sec:gsi1}
In the case of a sum $f^\#(\x) = \sum_j a_j e_j(x)$ with basis functions $e_j$ being orthonormal in $L_2(\R^d,\mu)$, the variances of the ANOVA terms can be calculated easily from the coefficients $a_j$, see for example~\cite{SchmischkeDiss,LiPoUl21} for the exponential basis or the wavelet basis on the torus, respectively. In this paper we want to study the problem of unknown sampling density $\mu$, such that an orthogonal basis is not available. \\ 
However, analogously to Lemma~\ref{lem:ANOVA-terms_f} the recursive definition~\eqref{eq:anova-terms} splits the function $f^\#$ into the terms
\begin{align*}
f^\#_{\varnothing} &= \int_{\R^d}  \sum_{\w\in \I} a^\#_{\w} \e^{\im\langle\w,\x\rangle }\mu(\x) \d \x= \frac{1}{(2\pi)^d}\int_{\R^d}  \sum_{\w\in \I} a^\#_{\w} \e^{\im\langle\w,\x\rangle }\mu(\x) \d \x 
= \frac{1}{(2\pi)^d}\sum_{\w\in \I} a^\#_{\w}\hat{\mu}(-\w)\\
f^\#_{\vec u}(\vec x_{\vec u}) &= \frac{1}{(2\pi)^d}\sum_{\w\in \I}a^\#_{\w} \prod_{i\in \vec u} \left(\e^{\im \omega_i x_i}- \hat{\mu}_i(-\omega_i)\right) \prod_{i \in \vec u^c} \hat{\mu_i}(-\omega_i) = \frac{1}{(2\pi)^d}\sum_{\w\in \I}a^\#_{\w} E(\x,\w,\mu,\vec u), 
\end{align*}
with the terms $E$ defined in~\eqref{eq:def_E}.

Instead of calculating the integrals for the ANOVA decomposition~\eqref{eq:anova-terms}, we derive advantage from the fact that the points are sampled from the density $\mu$. Then the Monte-Carlo approximation of the integrals can be calculated using the RFF matrix $\vec A$ and a coefficient vector $\vec a^\#$.\par

We consider the setting where the set of ANOVA indices $U$ is completely arbitrary. We propose to start with $U = \{u \in [d] \mid |u| = q\}$ and refining this set iteratively, we will give more details later in this section.
Since the functions $\e^{\im \langle \w,\cdot \rangle}$ are not orthogonal for random drawn frequencies $\w$, the decomposition~\eqref{eq:f_sharp} is not the unique ANOVA decomposition of $f^\#$. But we notice, that for $\vec u\in U$ with $\{\vec v\in U \mid \vec u\subset \vec v \} = \varnothing$, the ANOVA term $f^\#_{\vec u}$ is completely contained in the sum $\sum_{\w\in \I_{\vec u}} a^\#_{\w} \e^{\im\langle \w_{\vec u},\x_{\vec u}\rangle }$. The main idea of our procedure is to calculate, if such an ANOVA term is really necessary or if the indices can be reduced to even lower dimensional sparse random Fourier features.
\begin{lemma}\label{lem:gsi}
Let $f^\#$ from~\eqref{eq:f_sharp} be the output of a trained random feature model.
Fix a subset $\vec u\in U$ with no superset in $U$, i.e~$\{\vec v\in U \mid \vec u\subset \vec v \} = \varnothing$ and denote $g(\vec x_{\vec u})\coloneqq \sum_{\w\in \I_{\vec u}} a^\#_{\w} \e^{\im\langle \w_{\vec u},\x_{\vec u}\rangle } $. 
Replacing in the iterative formula~\eqref{eq:anova-terms} every integral by a Monte-Carlo approximation on the sample points $\X$, gives the terms
\begin{align*}
g^{\text{MC}}_{\varnothing} &= \frac 1M \sum_{\w\in \I_{\vec u}}\vec a^{ \#}_{\w} \sum_{\x^{(j)} \in \X} \e^{\im \langle \vec x^{(j)}, \vec w\rangle},\\
 g^{\text{MC}}_{\vec v}(\vec x_{\vec v}) &= \frac 1M \sum_{\w\in \I}\vec a^{ \#}_{\w} \sum_{\x^{(j)} \in \X} \e^{\im \langle \vec x^{(j)}_{\vec u\backslash \vec v}, \vec w_{\vec u\backslash \vec v}\rangle} \prod_{i\in \vec v} \left(\e^{\im x_i\omega_i} - \e^{\im \vec x_{i}^{(j)}\omega_i}\right).
\end{align*} 
\end{lemma}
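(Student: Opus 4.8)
The plan is to follow the inductive scheme in the proof of Lemma~\ref{lem:ANOVA-terms_f} step for step, but with every marginal integral $\int \cdot\,\mu_j\dx x_j$ replaced by its Monte-Carlo surrogate $\tfrac1M\sum_{\x^{(j)}\in\X}$ evaluated on the joint samples. The induction runs over $|\vec v|$ for $\vec v\subseteq\vec u$. Since $g$ only contains features $\w\in\I_{\vec u}$, the coefficients $a^\#_{\w}$ with $\w\notin\I_{\vec u}$ never appear, so the sums effectively range over $\I_{\vec u}$ regardless of how they are written. For the base case I would compute $g^{\mathrm{MC}}_\varnothing$ directly: inserting the expansion of $g$ into $\tfrac1M\sum_{\x^{(j)}}g(\x^{(j)}_{\vec u})$, interchanging the two finite sums, and using $\langle\w_{\vec u},\x_{\vec u}\rangle=\langle\w,\x\rangle$ (valid because $\supp\w=\vec u$) yields the stated formula.

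For the induction step I would expand the discretized recursion
\[
g^{\mathrm{MC}}_{\vec v}(\x_{\vec v}) = \frac1M\sum_{\x^{(j)}\in\X} g\big(\x_{\vec v},\x^{(j)}_{\vec u\setminus\vec v}\big) - \sum_{\vec t\subset\vec v} g^{\mathrm{MC}}_{\vec t}(\x_{\vec t}),
\]
where $g(\x_{\vec v},\x^{(j)}_{\vec u\setminus\vec v})$ denotes $g$ with free arguments on $\vec v$ and the sample values on $\vec u\setminus\vec v$. Substituting the induction hypothesis for each $g^{\mathrm{MC}}_{\vec t}$ and splitting its inner-product factor along the disjoint union $\vec u\setminus\vec t=(\vec u\setminus\vec v)\cup(\vec v\setminus\vec t)$, I would factor out the common term $\tfrac1M\sum_{\x^{(j)}}\sum_{\w}a^\#_{\w}\,\e^{\im\langle\x^{(j)}_{\vec u\setminus\vec v},\w_{\vec u\setminus\vec v}\rangle}$. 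What remains in the bracket is a purely combinatorial expression in the per-sample quantities $\alpha_i\coloneqq \e^{\im\omega_i x_i}-\e^{\im\omega_i x_i^{(j)}}$ and $\beta_i\coloneqq \e^{\im\omega_i x_i^{(j)}}$. Because $\alpha_i+\beta_i=\e^{\im\omega_i x_i}$, the identity
\[
\prod_{i\in\vec v}(\alpha_i+\beta_i) = \sum_{\vec t\subseteq\vec v}\prod_{i\in\vec t}\alpha_i\prod_{i\in\vec v\setminus\vec t}\beta_i = \e^{\im\langle\w_{\vec v},\x_{\vec v}\rangle}
\]
makes the subtracted terms telescope against the first term, leaving exactly $\prod_{i\in\vec v}\alpha_i=\prod_{i\in\vec v}(\e^{\im\omega_i x_i}-\e^{\im\omega_i x_i^{(j)}})$, which is the claimed product.

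This is precisely the discrete counterpart of the argument in Lemma~\ref{lem:ANOVA-terms_f}, with $\hat\mu_i(-\omega_i)=\int\e^{\im\omega_i x_i}\mu_i(x_i)\dx x_i$ replaced, sample by sample, by $\e^{\im\omega_i x_i^{(j)}}$. The only genuine obstacle is bookkeeping rather than analysis: one must keep the sample index $j$ coupled across all coordinates and across all recursion levels, so that the marginal $\mu_{\vec u\setminus\vec v}$ is approximated by the same joint samples $\x^{(j)}$ in every direction and the previously computed $g^{\mathrm{MC}}_{\vec t}$ reuse those same samples. It is exactly this coupling that makes $\alpha_i+\beta_i=\e^{\im\omega_i x_i}$ hold within each fixed $j$ and lets the binomial identity collapse; approximating each one-dimensional marginal by an \emph{independent} average would destroy the product structure and the formula would fail. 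Since all remaining operations are finite rearrangements, no convergence or integrability questions arise.
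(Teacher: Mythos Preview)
Your proposal is correct and matches the paper's proof essentially line for line: both argue by induction on $|\vec v|$, discretize the recursion~\eqref{eq:anova-terms} on the joint samples~$\X$, split $\vec u\setminus\vec t=(\vec u\setminus\vec v)\cup(\vec v\setminus\vec t)$ in the induction hypothesis, pull out the common factor $\tfrac1M\sum_{\w}a^\#_{\w}\sum_{\x^{(j)}}\e^{\im\langle\x^{(j)}_{\vec u\setminus\vec v},\w_{\vec u\setminus\vec v}\rangle}$, and collapse the bracket via the binomial identity $\prod_{i\in\vec v}(\alpha_i+\beta_i)=\sum_{\vec t\subseteq\vec v}\prod_{i\in\vec t}\alpha_i\prod_{i\in\vec v\setminus\vec t}\beta_i$. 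Your explicit $\alpha_i,\beta_i$ notation and the remark on why the sample index~$j$ must stay coupled across coordinates make the combinatorics more transparent than the paper's version, but the argument is the same.
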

\begin{proof}
We use induction over $|\vec v|$, and begin with $\vec v = \varnothing$,
\begin{align*}
g_{\varnothing} &= \int_{\R^d}  f^\#(\vec x) \mu(\x) \d \x \approx \frac{1}{M}\sum_{\vec x\in \X} \sum_{\w\in \I} a^{ \#}_{\w} \e^{i\langle\w,\x\rangle } =: g^{\text{MC}}_{\varnothing}.
\end{align*}
The tensor product structure of the density $\mu$ in \eqref{eq:mu_prod} is the basis to approximate an $|\vec v|$-dimensional integral with respect to $\mu_{\vec v}$ by
$$\int_{\R^{|\vec v|}}g(\vec x_{\vec v},\vec x_{\vec v^c} ) \mu_{\vec v}(\vec x_{\vec v})\d \vec x_{\vec v}
\approx \frac 1M \sum_{\x^{(j)}\in \X} g(\vec x^{(j)}_{\vec v},\vec x_{\vec v^c}  ), $$
which is a function that depends on the variables $\x_{\vec v^c}$ and not on the variables $\x_{\vec v}$. 
For the induction step we use the iterative definition~\eqref{eq:anova-terms} of the ANOVA-terms,
\begin{align*}
g_{\vec v}(\vec x_{\vec v})&=\int_{\R^{|\vec u|-|\vec v|}}g(\vec x_{\vec u}) \mu(\vec x_{\vec v^c})\d \vec x_{\vec v^c}-\sum_{\vec v'\subset \vec v}g_{\vec v'}(\vec x_{\vec v'})\\
&\approx \frac 1M \sum_{\vec x^{(j)}\in \X}  g(\vec x_{\vec v},\vec x^{(j)}_{\vec u\backslash \vec v}) - \sum_{\vec v'\subset \vec v} g^{\text{MC}}_{\vec v'}(\vec x_{\vec v})\\
&= \frac 1M  \sum_{\w\in \I_{\vec u}}  a^{ \#}_{\vec w} \e^{\im \langle\vec x_{\vec v},\w_{\vec v}\rangle} \sum_{\vec x^{(j)}\in \X} \left( \e^{\im \langle\w_{\vec u\backslash \vec v}, \x^{(j)}_{\vec u\backslash \vec v}\rangle}\right) \\
&\quad -  \frac 1M \sum_{\vec v'\subset \vec v} \sum_{\w \in \I_{\vec u}}a^{ \#}_{\vec w}  \sum_{\x^{(j)} \in \X} \e^{\im \langle \vec x^{(j)}_{\vec u\backslash \vec v'}, \w_{\vec u\backslash \vec v'}\rangle} \prod_{i\in \vec v'} \left(\e^{\im x_i\omega_i} - \e^{\im \vec x_{i}^{(j)}\omega_i}\right)\\
 &= \frac 1M  \sum_{\w\in \I_{\vec u}} a^{ \#}_{\vec w} \sum_{\vec x^{(j)}\in \X} \e^{\im \langle\w_{\vec u\backslash \vec v}, \x^{(j)}_{\vec u\backslash \vec v}\rangle}  \left(  \e^{\im \langle\vec x_{\vec v},\w_{\vec v}\rangle}   - \sum_{\vec v'\subset \vec v}  \e^{\im \langle \vec x^{(j)}_{\vec v\backslash \vec v'}, \w_{\vec v\backslash \vec v'}\rangle} \prod_{i\in \vec v'} \left(\e^{\im x_i\omega_i} - \e^{\im \vec x_{i}^{(j)}\omega_i}\right)\right)\\
 &= \frac 1M  \sum_{\w\in \I_{\vec u}} a^{ \#}_{\vec w} \sum_{\vec x^{(j)}\in \X} \e^{\im \langle\w_{\vec u\backslash \vec v}, \x^{(j)}_{\vec u\backslash \vec v}\rangle}  \left(   \prod_{i\in \vec v} \left(\e^{\im x_i\omega_i} - \e^{\im \vec x_{i}^{(j)}\omega_i}\right)\right).
\end{align*}
This finishes the proof.
\end{proof}

\begin{Remark}
The case where $\vec v = \vec u$ in the previous Lemma~\ref{lem:gsi} is of special interest. In this case we calculate, 
\begin{align}\label{eq:gsi_gMC}
\left(f_{\vec u}^\#(\x_{\vec u}) \right)_{\x \in \X} &\approx 
\frac 1M  \sum_{\w\in \I_{\vec u}} a^\#_{\vec w} \sum_{\vec x^{(j)}\in \X} \left(   \prod_{i\in \vec u} \left(\e^{\im x_i\omega_i} - \e^{\im \vec x_{i}^{(j)}\omega_i}\right)\right)\notag\\
\frac{\sigma^2(f_{\vec u}^\#(\x_{\vec u}))}{\sigma^2(f)} &\approx \frac{\norm{f_{\vec u}^\#(\x_{\vec u})}^2_{\ell_2(\X)}}{\sigma^2(\vec f)} .
\end{align}
Introducing three-dimensional tensors $\e^{\im \x^{(j)}_i \omega_i} -\e^{\im \x^{(\tilde j)}_i \omega_i}\in \C^{|\X|\times |\X| \times n_{\vec u}}$ this 
is calculated numerically by a tensor-vector multiplication, followed by a summation, point-wise squaring and a summation. We approximate the variance of $f$ by the variance of the given data vector $\vec f$. 
A splitting of the given data into test data and validation data gives the possibility to use the prediction on the validation set to estimate the variances of the ANOVA terms on validation data.
\end{Remark}

The variance of $f^\#_{\vec u}$ is a good approximation to the variance of $f_{\vec u}$ if the error $\norm{f_\vec u - f^\#_{\vec u}}_{L_2(\R^{|\vec u|},\mu_{\vec u})}$ is small, see
\begin{align}\label{eq:error_gsi}
|\sigma^2(f_{\vec u}) - \sigma^2(f^\#_{\vec u})| 
&= \left|\int_{\R^{|\vec u|}} \left( |f_{\vec u}(\x_\vec u)|^2 - |f^\#_{\vec u}(\x_{\vec u})|^2\right) \mu_{\vec u}(\x_{\vec u} )\dx \x_{\vec u}\right|\notag\\
&=\left|\int_{\R^{|\vec u|}} \left( |f_{\vec u}(\x_\vec u)| - |f^\#_{\vec u}(\x_{\vec u})|\right) \left( |f_{\vec u}(\x_\vec u)| + |f^\#_{\vec u}(\x_{\vec u})|\right)\mu_{\vec u}(\x_{\vec u}) \dx \x_{\vec u}\right|\notag\\
&\leq \norm{f_\vec u - f^\#_{\vec u}}_{L_2(\R^{|\vec u|},\mu_{\vec u})}  \norm{f_\vec u + f^\#_{\vec u}}_{L_2(\R^{|\vec u|},\mu_{\vec u})} \notag\\
&\leq \norm{f_\vec u - f^\#_{\vec u}}_{L_2(\R^{|\vec u|},\mu_{\vec u})}  \norm{f_\vec u }_{L_2(\R^{|\vec u|},\mu_{\vec u})} \left(2 +  \norm{f_\vec u - f^\#_{\vec u}}_{L_2(\R^{|\vec u|},\mu_{\vec u})}  \right).
\end{align}
The algorithms in the exsiting literature~\cite{Ha23,Xie22,HARFE23} used $q$-sparse random Fourier features, but they numerically verified that choosing $q$ equal to the real effective dimension of the function $f$ leads to best approximation results. Furthermore, the norm $\F(\rho)$ from Definition~\ref{def:rho-norm} is not finite, the Fourier transform $\hat f$ is defined only in distributional sense.

\begin{Definition}\label{def:anti-dc}
A set $U$ is called \textbf{anti downward closed} if for every $\vec u\in U$ there is no index $\vec v\in U$, which is a subset of $\vec u$. 
\end{Definition}
We propose to choose an anti downward closed set $U$. Furthermore, we propose to start with $q$-sparse random Fourier features and customize the random features to the ANOVA decomposition of the function $f$ iteratively. This works as follows. We start with $q$-sparse random Fourier features as proposed in the literature so far by choosing $ U = \{\vec u\in \mathcal P([d])\mid |\vec u| = q\} $. Then we draw in total $N$ random Fourier features and learn a first approximation, described by the parameter vector $\vec a^\#$. 
This is Stage I of Algorithm~\ref{alg:1}. \par
Then, using the variance estimations~\eqref{eq:gsi_gMC}, Algorithm~\ref{alg:1} decides in Stage II for every $\vec u\in U$, if it keeps this ANOVA index or if it omits this ANOVA index and uses instead all indices $\vec u$ of order $q-1$ which are contained in $\vec u$. This procedure is done $q$ times, to reduce the ANOVA index set $U$ to the really necessary variable interactions. If the function has only a low amount of non-zero ANOVA-terms this leads to a huge decrease of non-necessary parameters in the model, which is the starting point for the iterative pruning steps. We summarize this in Algorithm~\ref{alg:1}. \par

\begin{algorithm}[tb]
\caption{ANOVA-boosting for independent input variables}
	\vspace{2mm}
	\begin{tabular}{ l l l }
		\textbf{Input:}
		&	$\X = (\x^{(i)})_{i=1}^M\in \R^d$ & sampling nodes \\
		& $\vec f = (f(x^{(i)}))_{i=1}^M$ & function values at sampling nodes\\
		& $ q $ & maximal superposition dimension\\
		& $\varepsilon$ & ANOVA threshold\\
		& $N$ & number of total random Fourier features \\
		& $\lambda$ & regularization parameter\\
		 & $\rho_{\vec u} \text{ for } |\vec u|\leq q$  & feature distributions\\
	\end{tabular}\\
	\textbf{Stage I: Initialization} 
	\begin{algorithmic}[1]
			\STATE{$U = \{\vec u\subseteq[d] \mid |\vec u|=q\}$.}
			\STATE{$n =\left\lfloor \frac{N}{|U|} \right\rfloor$.}
			\STATE{For every $\vec u\in U$ draw $n $ $q$-sparse features $\w\in \I_{\vec u}$ from $\rho_{\vec u}$ and construct the matrix 
			$$\vec A=[\vec A_{\vec u}]_{\vec u \in U}\in \C^{M\times N}, \quad \vec A_{\vec u} = (\e^{\im \langle\w_{\vec u},\x_{\vec u}\rangle})_{\vec x\in \X_{\text{train}},\w \in \I_{\vec u}}\in \C^{M\times N}.$$
			First approximation: $\vec a^\# = \vec A^* (\vec A\vec A^*+ \lambda \vec I)^{-1}\vec f$, $f^\#(\x)=\sum_{k=1}^N a_k^\# \e^{\im \langle\w_k,\x \rangle}$.}%
			
			\textbf{\hspace{-17pt}Stage II: ANOVA-boosting}
			\FOR{$t = 1, \ldots, q$}
			\STATE{For every $\vec u\in U$ calculate the variances $\sigma^2_{\text{MC}}( f^\#_{\vec u})$ using \eqref{eq:gsi_gMC}. }
			\STATE{$U \leftarrow \{\vec u \in U\mid \sigma^2_{\text{MC}}( f^\#_{\vec u})\geq \epsilon\}$. }
			\STATE{$U_{t} = \{\vec v\in [d] \mid |\vec v| = t-1, \nexists \vec u\in U \text{ with } \vec v\subset \vec u \}$. }
			\STATE{$U \leftarrow  U \cup  U_{t}$.}
			\STATE{Draw $n = \left\lfloor \frac{N}{|U|} \right\rfloor$ $|\vec u|$-sparse features $\w\in \I_{\vec u}$ from $\rho_{\vec u}$ for every $\vec u\in U$ (keep already drawn features).}
			\STATE{Construct the matrix $\vec A$ and update the approximation $\vec a^\# = \vec A^* (\vec A\vec A^*+ \lambda \vec I)^{-1}\vec f$, $f^\#(\x)=\sum_{k=1}^N a^\#_k \e^{\im \langle\w_k,\x \rangle}$. }
			\ENDFOR
	\end{algorithmic}
	\begin{tabular}{ l l l }
		\textbf{Output:} 
		$U$
	\end{tabular}
	\label{alg:1}
\end{algorithm}  

In Figure~\ref{fig:anova_part} we illustrate this procedure for an example function, which can be written in the form
\begin{equation}\label{eq:example_plot}
f \colon\R^7 \rightarrow \R, \quad f(\x)= f_{\{1,2,3\}}(x_1,x_2,x_3) + f_{\{1\}}(x_1) + f_{\{1,3\}}(x_1,x_3) f_{\{5\}}(x_5) + f_{\{6,7\}}(x_6,x_7).
\end{equation}
\begin{figure}[htb!]
\centering
\begin{tikzpicture}[x=2.2cm,y=1.4cm]
	\node[mynode] (11) at (1,3) {$x_1$};
	\node[mynode] (12) at (1,2) {$x_2$};
	\node[mynode] (13) at (1,1) {$x_3$};
	\node[mynode] (14) at (1,0) {$x_4$};
	\node[mynode] (15) at (1,-1) {$x_5$};
	\node[mynode] (16) at (1,-2) {$x_6$};
	\node[mynode] (17) at (1,-3) {$x_7$};
	
	\node[mynode,draw=tucmath,fill=tucmath!20] (21) at (2,3.5) {\tiny{$\{1,2,3\}$}};
	\node[mynode] (22) at (2,2.5) {\tiny{$\{1,2,4\}$}};
	\node[mynode] (23) at (2,1.5) {\tiny{$\{1,2,5\}$}};
	\node[mynode] (24) at (2,-1.5) {\tiny{$\{4,5,6\}$}};
	\node[mynode] (25) at (2,-2.5) {\tiny{$\{4,5,7\}$}};
	\node[mynode] (26) at (2,-3.5) {\tiny{$\{5,6,7\}$}};
	\draw[thick,-to] (11) -- (21);
	\draw[thick,-to] (12) -- (21);
	\draw[thick,-to] (13) -- (21);
	\draw[thick,-to] (11) -- (22);
	\draw[thick,-to] (12) -- (22);
	\draw[thick,-to] (14) -- (22);
	\draw[thick,-to] (11) -- (23);
	\draw[thick,-to] (12) -- (23);
	\draw[thick,-to] (15) -- (23);
	
	\draw[thick,-to] (14) -- (24);
	\draw[thick,-to] (15) -- (24);
	\draw[thick,-to] (16) -- (24);
	\draw[thick,-to] (14) -- (25);
	\draw[thick,-to] (15) -- (25);
	\draw[thick,-to] (17) -- (25);
	\draw[thick,-to] (15) -- (26);
	\draw[thick,-to] (16) -- (26);
	\draw[thick,-to] (17) -- (26);

	\node[mynode,draw=tucmath,fill=tucmath!20] (31) at (3,3.5) {\tiny{$\{1,2,3\}$}};
	\node[mynode] (32) at (3,2.5) {\tiny{$\{1,4\}$}};
	\node[mynode] (33) at (3,1.5) {\tiny{$\{1,5\}$}};
	\node[mynode] (34) at (3,0.5) {\tiny{$\{1,6\}$}};
	\node[mynode] (35) at (3,-1.5) {\tiny{$\{5,6\}$}};
	\node[mynode] (36) at (3,-2.5) {\tiny{$\{5,7\}$}};
	\node[mynode,draw=tucmath,fill=tucmath!20] (37) at (3,-3.5) {\tiny{$\{6,7\}$}};
	
	\draw[thick,-to] (21) -- (31);
	\draw[thick,-to] (22) -- (32);
	\draw[thick,-to] (23) -- (33);
	\draw[thick,-to] (2,0) -- (32);
	\draw[thick,-to] (2,0) -- (33);
	\draw[thick,-to] (2,0) -- (34);
	\draw[thick,-to] (26) -- (35);
	\draw[thick,-to] (24) -- (35);
	\draw[thick,-to] (26) -- (36);
	\draw[thick,-to] (25) -- (36);
	\draw[thick,-to] (2,0) -- (37);
	\draw[thick,-to] (26) -- (37);
	\draw[thick,-to] (2,0) -- (36);
	\draw[thick,-to] (2,0) -- (35);
	
	\node[mynode,draw=tucmath,fill=tucmath!20]  (41) at (4,1.5) {\tiny{$\{1,2,3\}$}};
	\node[mynode] (42) at (4,0.5) {\tiny{$\{4\}$}};
	\node[mynode,draw=tucmath,fill=tucmath!20]  (43) at (4,-0.5) {\tiny{$\{5\}$}};
	\node[mynode,draw=tucmath,fill=tucmath!20]  (44) at (4,-1.5) {\tiny{$\{6,7\}$}};
	
	\draw[thick,-to] (31) -- (41);
	\draw[thick,-to] (32) -- (42);
	\draw[thick,-to] (33) -- (43);
	\draw[thick,-to] (35) -- (43);
	\draw[thick,-to] (36) -- (43);
	\draw[thick,-to] (37) -- (44);
\draw[thick,-to] (3,-0.5) -- (42);
	\draw[thick,-to] (3,-0.5) -- (43);
	
	\node[mynode,draw=tucmath,fill=tucmath!20] (51) at (5,1) {\tiny{$\{1,2,3\}$}};
	\node[mynode,draw=tucmath,fill=tucmath!20] (52) at (5,0) {\tiny{$\{5\}$}};
	\node[mynode,draw=tucmath,fill=tucmath!20] (53) at (5,-1) {\tiny{$\{6,7\}$}};
		\draw[thick,-to] (41) -- (51);
	\draw[thick,-to] (43) -- (52);
	\draw[thick,-to] (44) -- (53);

	\path (23) --++ (24) node[midway,scale=1.5] {$\vdots$};
	\path (34) --++ (35) node[midway,scale=1.5] {$\vdots$};

  \node[above=15,align=center,black] at (1,4) {input\\[-0.2em]variables};
	\node[above=15,align=center,black] at (2,4) {first\\[-0.2em]approx.};
  \node[above=15,align=center,black] at (3,4){second\\[-0.2em]approx.};
	 \node[above=15,align=center,black] at (4,4) {third\\[-0.2em]approx.};
  \node[above=15,align=center,black] at (5,4) {found terms}; 
	\node[align=center,black] at (1,-4.2) {$|U|:$};
	\node[align=center,black] at (2,-4.2) {35};
		\node[align=center,black] at (3,-4.2) {19};
		\node[align=center,black] at (4,-4.2) {4};
		\node[align=center,black] at (5,-4.2) {3};
	\end{tikzpicture}%
	
	\caption{Example procedure of finding the ANOVA-sparse random Fourier features: Consider a $7$-dimensional input function of the form~\eqref{eq:example_plot}, starting at $q=3$. The terms with approximated variance $\sigma^2(f_{\vec u})$ bigger than the threshold $\epsilon$ are highlighted in magenta for each approximation step. The result is an anti downward closed set $U$. At the bottom we give the number of indices in the index set $U$, which is used in the respective step.  }
	\label{fig:anova_part}

	\end{figure}
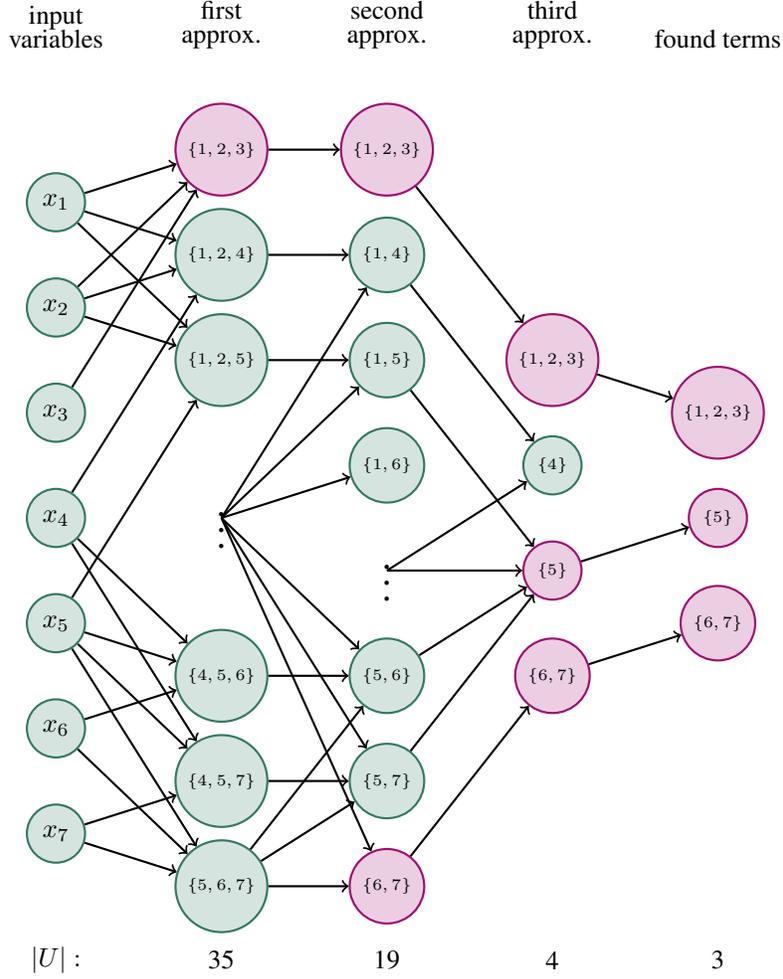%
We start with all three-dimensional terms, i.e.~$U = \{\vec u\subset [d]\mid |\vec u| = 3\}$. A sensitivity analysis of the first approximation $f^\#$ shows that only the three-dimensional term $\{1,2,3\}$ has variance bigger than a threshold $\epsilon$. The other three-dimensional terms can be replaced by all two-dimensional terms, where the terms $\{1,2\}$ and $\{2,3\}$ are not needed, because they are contained in the term $\{1,2,3\}$. A second approximation shows that only the variances of $f^\#_{\{1,2,3\}}$ and $f^\#_{\{6,7\}}$ are bigger than a threshold. In the third approximation only the additional one-dimensional terms $\{4\}, \{5\}$ are needed, since the other ones are contained in the higher-dimensional terms. After the third approximation only the important terms in an anti downward closed set $U$ remain for the next approximation step to reduce the over-parametrized model to an under-parametrized model, for example using SHRIMP or HARFE.

\subsection{Sensitivity analysis for correlated input variables}
\label{sec:gsi2}
If we do not have information about the sample density $\mu$, we do not have the tensor product structure as for independent input variables. 
We want to come back to the regularized least squares~\eqref{eq:lsqr_reg}. As shown in Lemma~\ref{lem:lsqr_reg}, this regularization ensures the hierarchical
orthogonality~\eqref{eq:sca_prod_gen} of the ANOVA terms. In this setting we demand that the 
sum $\sum_{\w\in \I_{\vec u}} a_{\w}^\# \e^{\im \langle \w_{\vec u},\x_{\vec u}\rangle}$ should approximate the ANOVA term $f_{\vec u}$ for every $\vec u\in U$.
Therefore, we start with $U = \{\vec u\subseteq [d]\mid |\vec u|\leq q\}$. Then the solution vector $\vec a^\#$ of~\eqref{eq:lsqr_reg} seperates the ANOVA terms in the sense that 
$$f_{\vec u}(\x_{\vec u}) \approx \sum_{\w\in \I_{\vec u}} a_{\w}^\# \e^{\im \langle \w_{\vec u},\x_{\vec u}\rangle}.$$
Let the RFF matrix $\vec A$ be split like in~\eqref{eq:A} and denote the vectors $\vec a^{\#}_{\vec u} = \left(a^{\#}_\w\right)_{\w \in \I_{\w}}$. Then we approximate the Sobol indices, see Definition~\ref{def:sobol_indices} by
\begin{align}\label{eq:sobol_num1}
S^{\text{MC}}_{\vec u,\text{var}} &=\frac{\norm{\vec A_{\vec u}\vec a^\#_{\vec u}}_2^2}{\sigma^2(\vec f)} \\
S^{\text{MC}}_{\vec u,\text{cor}} &=\frac{\sum_{\stackrel{\varnothing \neq \vec v \subseteq [d]}{\vec v\cap \vec u\neq \varnothing, \vec v\not\subseteq \vec u}} \langle \vec A_{\vec v} \vec a^\#_{\vec v}, \vec A_{\vec u} \vec a^\#_{\vec u} \rangle}{\sigma^2(\vec f)} \label{eq:sobol_num2}\\
S^{\text{MC}}_{\vec u} &=S^{\text{MC}}_{\vec u,\text{var}} +S^{\text{MC}}_{\vec u,\text{cor}}.\label{eq:sobol_num3}
\end{align}
The procedure is summarized in Algorithm~\ref{alg:2}. Note, that this procedure can also be applied to samples from tensor product sampling densities $\mu$. In that case we expect the indices $S_{\vec u,\text{cor}}$ to be zero.

 \begin{algorithm}[tb]
\caption{ANOVA-boosting for possibly dependent input variables}
	\vspace{2mm}
	\begin{tabular}{ l l l }
		\textbf{Input:}
		&	$\X = (\x^{(i)})_{i=1}^M\in \R^d$ & sampling nodes \\
		& $\vec f = (f(x^{(i)}))_{i=1}^M$ & function values at sampling nodes\\
		& $ q $ & maximal superposition dimension\\
		& $\varepsilon$ & ANOVA threshold\\
		& $N$ & total number of random Fourier features\\
		& $\lambda$ & regularization parameter\\
		& $\rho_{\vec u} \text{ for } |\vec u|\leq q$  & feature distributions\\
	\end{tabular}\\
	\textbf{Stage I: Initialization}
	\begin{algorithmic}[1]
			\STATE{$U = \{\vec u\subseteq[d] \mid |\vec u|\leq q\}$}
\[\]
			\textbf{\hspace{-17pt}Stage II: ANOVA-boosting}
			\FOR{$t = q, \ldots, 1$}
			\STATE{$n =\left\lfloor\frac{N}{|U|}\right\rfloor$.}
			\STATE{For every $\vec u\in U$ draw $n $ $q$-sparse features $\w\in \I_{\vec u}$ from $\rho_{\vec u}$ and construct the matrix 
			$$\vec A=[\vec A_{\vec u}]_{\vec u \in U}\in \C^{M\times N}, \quad \vec A_{\vec u} = (\e^{\im \langle\w_{\vec u},\x_{\vec u}\rangle})_{\vec x\in \X_{\text{train}},\w \in \I_{\vec u}}\in \C^{M\times N}.$$
			The solution vector $\vec a^\#$ is solution of minimization problem~\eqref{eq:lsqr_reg} by an iterative least squares algorithm, $f^\#(\x)=\sum_{k=1}^N a^\#_k \e^{\im \langle\w_k,\x \rangle}$.}
			\STATE{For every $\vec u\in U$ calculate the Sobol indices $S^{\text{MC}}_{\vec u,\text{var}}$, $S^{\text{MC}}_{\vec u,\text{cor}}$ and $S^{\text{MC}}_{\vec u}$ using \eqref{eq:sobol_num1} to \eqref{eq:sobol_num3}. }
			\STATE{$U \leftarrow \{\vec u \in U\mid S^{\text{MC}}_{\vec u,\text{var}}>\epsilon \text{ or }|\vec u|<t \}.$ }
			\ENDFOR
			\STATE{make anti downward closed set $U$ (see Definition~\ref{def:anti-dc}): $$U \leftarrow U\backslash \{\vec u \in U\mid \exists \vec v \in U \text{ with } \vec u\subset \vec v  \}.$$}%
			\STATE{Draw $n = \left\lfloor \frac{N}{|U|} \right\rfloor$ $|\vec u|$-sparse features $\w\in \I_{\vec u}$ from $\rho_{\vec u}$ for every $\vec u\in U$ (keep already drawn features).}
	\end{algorithmic}
	\begin{tabular}{ l l l }
		\textbf{Output:} 
		$U $
	\end{tabular}
	\label{alg:2}
\end{algorithm}

\subsubsection*{A good choice for the index set $U$}
We want to discuss two procedures done in Algorithm~\ref{alg:2}: The first one is the for-loop, which is a similar proceeding as in Algorithm~\ref{alg:1}. Another possibility would be to do just one step of approximation and omit all indices $\vec u \in U$ with $S^{\text{MC}}_{\vec u,\text{var}}$ smaller than the threshold $\epsilon$ independent of the order $|\vec u|$. But it turned out in numerical tests, to be beneficial to use the loop, because otherwise the algorithm would not be able to detect the correct ANOVA terms. The variances of terms $f_{\vec u}$ of order less than $q$ are not estimated well enough when using ANOVA-truncated random Fourier features belonging to all $|U |= \sum_{i = 0}^q\binom{d}{i}$ terms of order smaller or equal $q$. \par

The second procedure is, that in line $8$ of Algorithm~\ref{alg:2} we shrink the index set $U$, such that there are no two sets contained, which 
are subsets $\vec u \subset \vec v$, which is necessary to receive an anti downward closed set $U$, see Definition~\ref{def:anti-dc}. This is the 
better choice, since the ANOVA terms $\vec v \subseteq \vec u$ are already contained in the sum $\sum_{\w\in \I_{\vec u}}a_{\w}\e^{\im \langle \vec x_{\vec u},\w_{\vec u}\rangle}$. This is made clearer by the following example. Assume a two-dimensional function $f = f_{\varnothing} +f_{\{1\}} +f_{\{2\}} + f_{\{1,2\}}$, which we approximate by the sum
$$f^\# = \sum_{\w\in \I_{\{1,2\}}\subset Q_{\{1,2\}}} a^\#_{\w} \e^{\im \langle \x_{\vec u},  \w_{\vec u}\rangle}.$$  
The approximation $f^\#$ has non-zero ANOVA terms $f^\#_{\varnothing}$, $f^\#_{\{1\}}$ and  $f^\#_{\{2\}}$, since the weak annihilating condition~\eqref{eq:zero_mean_gen} would require in the case $f^\# = f^\#_{\{1,2\}}$ that
\begin{align*}
0&=\int_{\R} f^\#_{\{1,2\}}(x_1,x_2) \mu_{\{1,2\}}(x_1,x_2)\, \dx x_1
=\sum_{\w\in \I_{\{1,2\}}} a^\#_{\w} \int_{\R} \e^{\im \left( x_1\omega_1+ x_2\omega_2\right)} \mu(x_1,x_2)\, \dx x_1\\
&=\sum_{\w\in \I_{\{1,2\}}} a^\#_{\w}\e^{\im  x_2\omega_2} \int_{\R} \e^{\im x_1\omega_1} \mu(x_1,x_2)\, \dx x_1,
\end{align*}
which can not be true for arbitrary density $\mu$, and $x_2$ and is also not demanded in the minimization of the RFF algorithms. This also applies to larger dimension $d$ and other index sets $\vec u$, that the ANOVA-terms $f^\#_{\vec v}$ are non-zero for $\vec v\subseteq \vec u$, if we draw random Fourier features from the set $Q_{\vec u}$. Thus, it is beneficial to use an anti-downward closed subset $U$. Numerical tests also indicate that the RFF algorithms yield better results in this case.

\subsubsection*{Complexity of the ANOVA-boosting algorithms}
The main focus of the ANOVA-boosting lies on the accuracy. Nevertheless, we will briefly sum up the complexity of our proposed algorithms.\par
In Algorithm~\ref{alg:1}, the calculation of the variances $\sigma^2_{\text{MC}}( f^\#_{\vec u})$ using \eqref{eq:gsi_gMC} is the crucial step and not fast. The complexity of this step is $\sum_{\vec u\in U} \O((|\vec u|+1)M^2n_{\vec u}) = \O((q+1)M^2N)$. If the number $M$ of samples is large, this complexity can be reduced to $M_{\text{val}}$ by calculating the variances just at $M_{\text{val}}$ validation points $\X_{\text{val}}$, since our aim is just to find the important variables and variable interactions by calculating the Sobol indices. The complexity of calculating the vector $\vec a^\# = \vec A^* (\vec A\vec A^*+ \lambda \vec I)^{-1}\vec f$ is the same as for the algorithm SHRIMP. \par
In Algorithm~\ref{alg:2}, the calculation of the Sobol indices 
has only complexity $\O(M)$. We solve the regularized system~\eqref{eq:lsqr_reg} 
with an iterative solver, where the difference compared to Algorithm~\ref{alg:1} 
is the construction of the matrix $\sqrt{ \hat{\vec W}}$, which has complexity 
$\mathcal O(M^2N|U|)$. Again, the complexity of solving the system iteratively 
remains the same.

\section{Theoretical analysis}\label{sec:theory}
In this section, we state error estimates for the approximation of a high-dimensional function $f$ of low effective dimension. Remember that the ANOVA index set $U \subset \mathcal P([d])$ should be chosen by an ANOVA-boosting algorithm. We propose to use an anti downward closed index set $U$, see Definition~\ref{def:anti-dc}, for the final random Fourier feature algorithm after the ANOVA-boosting.

For such an anti downward closed index-set $U$, point evaluations of $\hat T_{f_{\vec u}}(\w_{\vec u})$ are possible in $Q_{\vec u}$ as the following lemma shows.  
\begin{Lemma}\label{lem:new}
Let $f\in L_2(\R^d,\mu) $ and $U$ be an anti downward closed index set, which describes the ANOVA decomposition $f$ well by $f = f_{\underline U}$, where the downward closure $\underline U$ is defined in~\eqref{eq:under_U}. Let furthermore the function $f$ fulfill that
\begin{equation}\label{eq:assumption}
\int_{\R^{|\vec u|}} f(\x)\mu_{\vec u^c}(\x_{\vec u^c}) \d \x_{\vec u^c} \in L_2(\R^{|\vec u|}) \qquad \text{ for all }\vec u\in U,
\end{equation}
which means that the non-zero terms of highest order are in $L_2$ and is weaker than $f\in L_2(\R^d)$. 
Then, for $\vec u \in U$ we have that $\hat T_{\sum_{\vec v \subseteq \vec u }f_{\vec v}} \in L_2(Q_{\vec u})$
in the sense, that 
\begin{equation}\label{eq:hat T_u}\hat T_{\sum_{\vec v \subseteq \vec u }f_{\vec v}}(\w_{\vec u}) = \hat T_{g_\vec u}(\w_{\vec u}) + \sum_{\vec v\subset \vec u} \delta(\w_{\vec v^c}) h_{\vec v}(\w_{\vec v}),\end{equation}
where $\hat T_{g_\vec u}$ is a regular distribution on $\R^{|\vec u|}$.

\end{Lemma}
%
%
%
\begin{proof}
By integrating~\eqref{eq:decomp} with respect to $\mu_{\vec u^c}$ on $\R^{|\vec u^c|}$, we receive the equations
\begin{align*}
\int_{\R^{|\vec u^c|}} f(\x) \mu_{\vec u^c}(\x_{\vec u^c})\d\x_{\vec u^c} &= \sum_{\vec v\subseteq \vec u}f_{\vec v}(\x_{\vec v}) + \sum_{\stackrel{\vec v \neq \varnothing }{\vec v\cap\vec u\neq \varnothing}} \int_{\R^{|\vec v\cap \vec u^c|}} f_{\vec v}(\x_\vec v )\mu_{\vec v\cap \vec u^c}(\x_{\vec v\cap \vec u^c})\dx \x_{\vec v\cap \vec u^c},\\
\intertext{which is equivalent to}
 \sum_{\vec v\subseteq \vec u}f_{\vec v}(\x_{\vec v})&= \int_{\R^{|\vec u^c|}} f(\x) \mu_{\vec u^c}(\x_{\vec u^c})\d\x_{\vec u^c} - \sum_{\stackrel{\vec v \neq \varnothing }{\vec v\cap\vec u\neq \varnothing}} \int_{\R^{|\vec v\cap \vec u^c|}} f_{\vec v}(\x_\vec v )\mu_{\vec v\cap \vec u^c}(\x_{\vec v\cap \vec u^c})\dx \x_{\vec v\cap \vec u^c}.
\end{align*}
We will show in the following, that the function $\sum_{\vec v\subseteq \vec u} f_{\vec v}(\x_{\vec v})$ is a sum of lower dimensional terms, which do not depend on all variables $\x_{\vec u}$ and a function in $L_2(\R^{|\vec u|})$, such that~\eqref{eq:hat T_u} holds true. 
The first term is in $L_2(\R^{|\vec u|})$ due to the assumption~\eqref{eq:assumption}.
For the classical ANOVA decomposition the terms in the second sum are all zero. For the generalized ANOVA decomposition we receive a sum of lower dimensional terms except for the cases where $\vec v\supset \vec u$, where
\begin{align*}
\int_{\R^{|\vec v\cap \vec u^c|}} f_{\vec v}(\x_\vec v )\mu_{\vec v\cap \vec u^c}(\x_{\vec v\cap \vec u^c})\dx \x_{\vec v\cap \vec u^c} 
= \int_{\R^{|\vec v\backslash \vec u|}} f_{\vec v}(\x_\vec v )\mu_{\vec v\backslash \vec u}(\x_{\vec v\backslash \vec u})\dx \x_{\vec v\backslash \vec u}\eqqcolon h_{\vec u}(\x_{\vec u}). 
\end{align*}
Since we assume that $f = f_{\underline U}$, there do not exist non-zero ANOVA terms $f_\vec v$ with $\vec v\supset \vec u$, such that 
$h_{\vec u}(\x_{\vec u}) = 0$. Thus, the sum $\sum_{\vec v\subseteq \vec u}f_{\vec v}(\x_{\vec v})$ is a sum of a function in $L_2(\R^{|\vec u|})$ and lower dimensional terms, such that the Fourier transform has the form~\eqref{eq:hat T_u}. %
\end{proof}
With this lemma, for every $\vec u\in U$, also a sum of the form $\sum_{\vec v\in U_{\vec u}} f_{\vec v}$ with $\vec u\in U_{\vec u} \subset \mathcal P(\vec u) $ can be written as a sum of a function in $L_2(\R^{|\vec u|})$ and lower dimensional terms. Since the Fourier transform for functions in $L_2$ is defined only in $L_2$, we interpret ${g_{\vec u}}$ in the previous lemma always as the continuous representative, which we can evaluate pointwise.
The index set $\I$ collects all drawn random Fourier features by $\I = (\w_{k})_{k=1}^N$. The random Fourier features  supported on $\vec u$ are in the index set $\I_{\vec u}$. 

\par
We improve the analysis for the generalization error for the approximation by sparse random Fourier features. Following~\cite{Ha23,Xie22,HARFE23}, we go through the finite-sum approximation 
\begin{equation}\label{eq:f_star}
f^{\star}(\x) = \sum_{\vec u \in U} f^\star_{\vec u}(\x_{\vec u}) =\sum_{\vec u \in U}\sum_{\w\in \I_{\vec u}} a_{\w}^{\star} \, \e^{\im\langle \w_{\vec u},\x_{\vec u}\rangle},
\quad \quad a_{\w}^{\star} = \frac{\hat{T}_{\sum_{\vec v\in U_{\vec u}}f_{\vec v}}(\w_{\vec u})}{n_{\vec u} \,(2\pi)^d \rho_{\vec u}(\w_{\vec u})}, \quad \quad \sum_{\vec u\in U} U_{\vec u} = \underline U.
\end{equation}
This is motivated by the Monte Carlo approximation of the integral

\begin{align*}
\langle \hat T_f ,\phi\rangle
&= \sum_{\vec u\in \underline U} \int_{\R^{|\vec u|}} f_{\vec u} (\x_{\vec u}) \int_{\R^d} \phi(\w_{\vec u},\vec 0) \e^{-\im \langle \x_{\vec u},\w_{\vec u}\rangle} \d \w_{\vec u}  \d \x_{\vec u} 
= \sum_{\vec u\in \underline U}  \int_{\R^{|\vec u|}}  \hat T_{f_{\vec u}}(\w_{\vec u})   \phi(\w_{\vec u},\vec 0) \d \w_{\vec u}\\
&=\sum_{\vec u\in  U}  \int_{\R^{|\vec u|}}  \hat T_{\sum_{\vec v\in U_{\vec u}}}f_{\vec v}(\w_{\vec v})   \phi(\w_{\vec u},\vec 0) \d \w_{\vec u}.
\end{align*}
Note that $f^\star$ is not known in practice, because $\hat{T}_{f}$ is not known. Furthermore, $\E_{\w} \left[f^\star_{\vec u}(\vec x_{\vec u})\right] = \sum_{\vec v\in U_{\vec u}}f_{\vec v}(\vec x_{\vec v})$ for fixed $\vec x\in \R^d$ and $\E_{\w} \left[f^\star(\vec x)\right] = \sum_{\vec v\in \underline{U}}f_{\vec v}(\vec x_{\vec v})$.
\par

The function $f^\star$ allows the following error splitting,
\begin{equation}\label{eq:error_split}
\norm{f-f^\#}_{L_2(\R^d,\mu)} \leq \norm{f-\TT_{\color{blue}\underline U} f}_{L_2(\R^d,\mu)} + \norm{\TT_{\color{blue}\underline U} f - f^\star}_{L_2(\R^d,\mu)} + \norm{f^\star -f^\#}_{L_2(\R^d,\mu)} 
\end{equation}
The first error is bounded in Theorem~\ref{thm:error_f-Tqf}. Furthermore, it is known that in many real world problems the underlying function is of low effective dimension, see~\cite{CaMoOw97, DePeVo10, KuSlWaWo09}. \par

Our procedure is as follows:
\begin{itemize}
	\item In Lemma~\ref{lem:error_f-fstar} we generalize the error $\norm{\TT_{\color{blue}\underline U} f - f^\star}_{L_2(\R^d,\mu)}$ to the ANOVA setting by using ANOVA-sparse random feature instead of only $q$-sparse random Fourier features.
		\item We want to perform sensitivity analysis to calculate an index set $U$, which is adapted to the function $f$. In~\eqref{eq:error_gsi} we show that we have a good approximation of the variances, if the approximation error is small, so our procedure finds the important terms.
	\item Once we have fixed a good anti downward closed ANOVA index set $U$, we use SHRIMP or HARFE, so  the approximation bounds from there are applicable for the error $\norm{f^\star -f^\#}_{L_2(\R^d,\mu)}$. The used norm $\F(\rho)$ from~\eqref{eq:F_rho_old} can be replaced by our norm~\eqref{eq:normm}. Furthermore, there is also an error bound for $\norm{f^\star-f^\#}_{L_2(\R^d,\mu)}$ for the ridge regression in~\cite[Theorem 5.19.]{We24}.
\end{itemize}

Let us define for an anti downward closed index set $U$ the $\F(\rho)$-norm by
\begin{equation}\label{eq:normm}
\normm{f}_{\F(\rho)}^2 = \sum_{\vec u\in U}\frac{N}{n_{\vec u}\, (2\pi)^d} \, \left(\sup_{\w \in Q_{\vec u}}\frac{|\hat T_{\sum_{\vec v\in U_{\vec u}}f_{\vec v}}(\w_{\vec v})|}{\rho_{\vec u}(\w_{\vec u})}\right)^2,
\end{equation}
where $Q_{\vec u}$ is defined in \eqref{eq:Q_u}. Note that, the evaluation of $\hat T_{\sum_{\vec v\in U_{\vec u}}}(\w_{\vec u})$ for $\w \in Q_{\vec u}$ is possible due to Lemma~\ref{lem:new}.
For the approximant $f^\star$ we have the following.

\begin{Lemma}\label{lem:error_f-fstar}
Fix $\delta,\epsilon >0$. Let $f$ and $\mu$ fulfil the assumptions of Lemma~\ref{lem:new}. Consider the random feature approximation $f^\star$ from~\eqref{eq:f_star}. If the total number of features $N = \sum_{\vec u\in  U} n_{\vec u}$ satisfies the bound
$$N \geq \frac{1}{\epsilon^2}\left(1+ \sqrt{2\log(1/\delta)}\right)^2,$$
then with probability at least $1-\delta$ with respect to the draw of weights $\w_j$ the following holds
$$\norm{T_{\color{blue}\underline U}f-f^\star}_{L_2(\R^d,\mu)}\leq \epsilon \normm{f}_{\F(\rho)}.$$
\end{Lemma}
\begin{proof}
The proof follows similar arguments like~\cite[Lemma 1]{Ha23}, but applied to our setting of ANOVA truncated random Fourier features. 
Since the probability is zero, that for frequencies $\w_{\vec u}$ drawn from the density $\rho_{\vec u}$, that $\w_{\vec u}\notin Q_{\vec u}$, the coefficients $a_\w^\star$, defined in~\eqref{eq:f_star} are bounded by
\begin{equation}\label{eq:bound_c_w}
|a_{\w}^\star|\leq \frac{1}{n_{\vec u}\,(2\pi)^d} \, \sup_{\w \in Q_{\vec u}}
\left|\frac{\hat{T}_{\sum_{\vec v\in U_{\vec u}}}(\w_{\vec u})}{\rho_{\vec u}(\w_{\vec u})}\right|.
\end{equation}
Define the random variable 
$$v(\w_1, \ldots, \w_N) = \norm{f-f^\star}_{L_2(\R^d,\mu)} = \left(\int_{\R^d} |\E_{\w}(f^\star(\x)) - f^\star(\x)|^2\mu(\x)\dx \x\right)^{1/2}.$$
To apply McDiarmid’s inequality from Theorem~\ref{thm:McDiarmid}, we show that $v$ is stable to perturbation. In particular,
let $f^\star$ be the random feature approximation using random weights $(\w_1,\ldots,\w_k,\ldots,\w_N)$
and let $\tilde f^\star $ be the random feature approximation using random weights $(\w_1,\ldots,\tilde\w_k,\ldots,\w_N)$
with $\supp \tilde{\w}_k = \vec u$, then 
\begin{align*}
|v(\w_1,\ldots,\w_k,\ldots,\w_N) &-v(\w_1,\ldots,\tilde\w_k,\ldots,\w_N) |
\leq \norm{f^\star - \tilde f^\star}_{L_2(\R^d,\mu)}\\
&= \norm{a^\star_{\w_k}\e^{\im\langle (\w_k)_{\vec u}, \x_{\vec v}\rangle} - a^{\star}_{\tilde \w_k}\e^{\im\langle (\tilde\w_k)_{\vec u}, \x_{\vec u}\rangle}}_{L_2(\R^d,\mu)} \\
&\leq \frac{2}{n_{\vec u}} \left(\sup_{\w \in Q_{\vec u}}
\left|\frac{\hat{T}_{\sum_{\vec v\in U_{\vec u}}}(\w_{\vec u})}{\rho_{\vec u}(\w_{\vec u})}\right|\right) \eqqcolon \Delta_k.
\end{align*}
Summing over the $\Delta_k$ yields,
\begin{align*}
\sum_{k=1}^N \Delta_k^2 &\leq  \sum_{\vec u\in U}\frac{4}{n_{\vec u}} \left(\sup_{\w \in Q_{\vec u}} \left|\frac{\hat{T}_{\sum_{\vec v\in U_{\vec u}}}(\w_{\vec u})}{\rho_{\vec u}(\w_{\vec u})}\right|\right)^2\\
&\leq  \frac{4\normm{f}^2_{\F(\rho)}}{N} .
\end{align*}
To estimate the expectation of $v$, we bound the expectation of the second moment. By noting that the variance of an average of i.i.d. random variables is the average of the variances of each variable and by using the relation between the variance and the
un-centered second moment, we have that
\begin{align*}
\E_{\w} (v^2) &=\E_{\w}\norm{\E_{\w}(f^\star)-f^\star}_{L_2(\R^d,\mu)}\\
&= \E_{\w}\norm{\sum_{\vec u\in U} \sum_{\w \in \I_{\vec u}}a^\star_{\w}\e^{\im \langle \w_{\vec v},\x_{\vec v}\rangle}}^2_{L_2(\R^d,\mu)} - \norm{\E_{\w}\left(\sum_{\vec u\in U} a^\star_{\w}\e^{\im \langle \w_{\vec v},\x_{\vec v}\rangle}\right)}^2_{L_2(\R^d,\mu)} \\
 &\leq\sum_{\vec u\in U}\frac{1}{n_{\vec u}} \left(\sup_{\w \in Q_{\vec u}}\left| \frac{\hat{T}_{\sum_{\vec v\in U_{\vec u}}}(\w_{\vec u})}{\rho_{\vec u}(\w_{\vec u})}\right|\right)^2\leq
\frac{\normm{f}_{\F(\rho)}^2}{N}.
\end{align*}
By Jensen’s inequality, the expectation of $v$ is bounded by
\begin{align*}
\E_{\w} (v)\leq \left(\E_{\w} (v^2)\right)^{1/2}\leq \frac{\normm{f}_{\F(\rho)}}{\sqrt{N}}.
\end{align*}
Applying McDiarmids inequality from Theorem~\ref{thm:McDiarmid}, yields
$$\P\left(v \geq \frac{\normm{f}_{\F(\rho)}}{\sqrt{N}}\right)\leq \exp\left(-\frac{2t^2}{\sum_{k}\Delta_k^2}\right) =\exp\left(-\frac{2t^2N}{ 4\normm{f}^2_{\F(\rho)} }\right).  $$
Setting $t$ and $N$ to 
\begin{align*}
t &=\normm{f}_{\F(\rho)}\sqrt{\frac 2N \log(1/\delta)}\\
N &\geq \frac{1}{\epsilon^2}\left(1+ \sqrt{2\log(1/\delta)}\right)^2
\end{align*}
enforces that $v\leq \epsilon \normm{f}_{\F(\rho)} $ with probability at least $1-\delta$. This completes the proof.
\end{proof}

In the following Corollary we summarize all error estimates with the error splitting~\eqref{eq:error_split}. The main point is that the $\normm{f}_{\F(\rho)}$ depends on the ANOVA index set $U$, such that the error bound is smaller the better the index set $U$ is adapted to the function $f$.
\begin{Corollary}
Let $U\subset \mathcal P([d])$ with $q = \max_{\vec u\in U}|\vec u|$, the total number of drawn random Fourier features be $N$ and $M$ samples $\x\in \X$ distributed according to $\mu$. Then, with high probability with respect to the drawn samples and to the drawn features the following holds true.
If $f\in H^s_\mix(\R^d)$ and $\mu$ fulfils the conditions of Theorem~\ref{thm:error_f-Tqf},
\begin{align*}
\norm{f-f^\#}_{L_2(\R^d,\mu)} &\lesssim  c_{\mu,s}^q \norm{f}_{H^s_\mix(\R^d)} + \left(N^{-1/2}  +\sqrt{N^{-1} + M^{-1/2}}\right)\normm{f}_{\F(\rho)},
\intertext{and if the truncation $\TT_{\underline U}f$ is a good approximation of $f$ with error smaller than $\epsilon$, }
\norm{f-f^\#}_{L_2(\R^d,\mu)} &\lesssim  \epsilon +  \left(N^{-1/2}  +\sqrt{N^{-1} + M^{-1/2}}\right)\normm{f}_{\F(\rho)},
\end{align*}
where $\normm{f}_{\F(\rho)}$ is defined in~\eqref{eq:normm}.
\end{Corollary}
\begin{proof}
The errors in the splitting~\eqref{eq:error_split} are bounded by Theorem~\ref{thm:error_f-Tqf} for the error between $f$ and the ANOVA truncation and in Lemma~\ref{lem:error_f-fstar} for the error of the finite-sum approximation $f^\star$. Finally,~\cite[Theorem 5.19.]{We24} states a result for the generalization error of the random Fourier feature approximation.  
\end{proof}
\section{Numerical results}\label{sec:num}
We illustrate in this section our sensitivity analysis on test examples. First, we summarize in Section~\ref{sec:RFFs} two random Fourier feature algorithms from the literature, which we then use for the numerical experiments. We suggest to use one of our algorithms to find a good index set $U$ and then to adapt a random Fourier feature algorithm to ANOVA-sparse random Fourier features.\par    
We present numerical results of Algorithm~\ref{alg:1}, where we use independent input variables. Further, we illustrate in Section~\ref{sec:num2} the 
approximation procedure of Algorithm~\ref{alg:2} through several numerical applications.
The implementation of the ANOVA-boosting algorithms can be found at~\url{https://github.com/Laura1110/ANOVA_RFF}. 

\subsection{Algorithms for RFF}\label{sec:RFFs}
In the literature there are several algorithms for approximating high-dimensional sparse additive
functions. Here we will summarize two of them. Since we assume limited data availability,
we wish to have a sparse representation of the function $f$ by learning the coefficient
vector $\vec a$ with a sparsity constraint.
\begin{itemize}
	\item In \cite{Xie22} the non-linear \textbf{SHRIMP} algorithm was proposed. There the authors propose to use $q$-sparse frequencies, which means that $|\supp \w| = q$ for all random feature weights, where the non-zero components are sampled from the Gaussian distribution $\G(0,\tfrac{ 1}{\sqrt q})$. The algorithm begins with a strong over-parametrization $N \gg M$. The first solution vector $\vec a$ is calculated by
$$\vec a = \vec A^* (\vec A\vec A^*+ \lambda \vec I)^{-1}\vec f.$$
Then, using iterative magnitude pruning (IMP) and selecting the best model via a validation set, the  algorithm output is the solution vector $\vec a^\#$.\par
Iterative Magnitude Pruning is used for compressing over-parametrized neural networks. The IMP procedure prunes features on their magnitude and then retrains the pruned sub-network in each pruning iteration. In every iteration the pruning rate is $p<1$. After calculating the MSE error of a validation set in every iteration one can choose the final model by choosing the smallest validation error.\par
We want to generalize this algorithm to ANOVA-sparse random Fourier features as defined in Definition~\ref{def:ANOVA-features}. In fact this is a generalization of choosing a tensor product density or a $q$-sparse density to a density $\rho$, which can have an arbitrary ANOVA decomposition. 

\item In \cite{HARFE23} the authors solve the sparse random feature regression problem by a
greedy algorithm named hard-ridge random feature expansion (\textbf{HARFE}), which uses
a hard thresholding pursuit (HTP) like algorithm to solve the random feature ridge
regression problem. Specifically, they learn the vector $\vec a$ from the following minimization problem
$$\min_{\vec a} \norm{\vec A\vec a -\vec f}_2^2 + \lambda \norm{\vec a}^2_2 \quad \text{ sucht that } \vec a \text{ is s-sparse}.$$
The idea is to solve for
the coefficients using a much smaller number of model terms. The subset $S$ given by
the indices of the $s$ largest entries of one gradient descent step applied on the vector $\vec a$
is a good candidate for the support set of $\vec a$. The HTP algorithm iterates between these
two steps and leads to a stable and robust reconstruction of sparse vectors depending
on the restricted isometry property (RIP) constant of the matrix $\vec A$, which characterizes matrices which are nearly orthonormal, at least when operating on sparse vectors. 

\end{itemize}
As the numerical test for the RFF suggest, choosing the sparsity $q$ of the ANOVA terms equal to effective dimension of the function $f$, gives the lowest approximation errors. We suggest to first calculate a good ANOVA index set $U$ for the function $f$ with Algorithm~\ref{alg:1} or \ref{alg:2}, use this to draw ANOVA-sparse random Fourier features adapted to the function $f$ and apply afterwards an algorithm for sparse random features, for example SHRIMP or HARFE.\\
The best chances of 
improving the previous algorithms have functions with ANOVA terms of different orders: In this case drawing 
ANOVA random features adapted to the function $f$ will decrease the approximation error significantly. 

\subsection{Numerical results for independent input variables}
To test the performance of the ANOVA-boosting for independent input variables, we test Algorithm~\ref{alg:1} on synthetic functions:
\begin{align}\label{eq:fT1}
f_{T1}(\vec x) &= x_4^2 +x_2x_3 +x_1x_2 +x_4&\\
f_{T2}(\vec x) &= \sin(x_1) + 7 \sin^2(x_2) +0.1 x_3^4\, \sin(x_1) & \text{Ishigami function}\label{eq:fT2}\\
f_{T3}(\x) &= 10 \sin(\pi \, x_1 x_2) + 20 \, (x_3- \tfrac 12)^2 + 10 \, x_4 + 5 x_5&\text{Friedmann function}\label{eq:fT3}
\end{align}
We randomly draw $M$ points from $\G(0,I_d)$ (functions $f_{T1},f_{T2}$) or uniformly on $[0,1]^d$ (function $f_{T3}$) and 
use $N=5M$ random Fourier features in the initialization step. We additionally draw $M$ test samples from the same distribution 
to validate the approximation error using the MSE,
$$\text{MSE} = \frac{1}{|\X_{\text{test}}|} \sum_{\x \in \X_{\text{test}}} |f(\x) - f^\#(\x)|^2.$$
We compare the performance of the SHRIMP/HARFE algorithm and the ANOVA-boosted 
SHRIMP/ HARFE and denote the resulting algorithms as \textbf{ANOVA-S} and \textbf{ANOVA-H}, respectively. The random Fourier features were distributed i.i.d. according to Gaussian distribution $\G(\vec 0,\tfrac 1q \I_d)$ or 			
according to Cauchy distribution $\sim\prod_{i\in [d]}(1+w_i^2)^{-1}$ with variance $\sigma = \tfrac 1q$. In every case we did the approximation 
$10$ times and show the mean. In every case we have chosen the regularization parameter $\lambda = 10^{-6}$ and the cut-off parameter $\epsilon = 0.01$. 
In Table~\ref{tab:RMSEs} we summarize the results. Note that for the HARFE algorithm we used the exponential function $\e^{-\im \langle  \w,\x\rangle}$ in contrast to the authors of the numerical tests in~\cite{HARFE23}, who used the cosine function. Possibly further research could study why the numerical results are better with the cosine functions, despite the theoretical results are mostly stated for exponential functions.  \par

For summarizing the results, our procedure detects the important ANOVA terms, if enough samples are available. The random feature algorithms benefit from the first ANOVA-boosting in Algorithm~\ref{alg:1}, where we could improve the accuracy by factor up to $10^2$. But in any case the approximation error is smaller for the ANOVA-boosted algorithms or at least comparable. Furthermore, our procedure improves previous algorithms by being interpretable by showing clearly which ANOVA terms are zero and which input variables are necessary for the learned final model.\par
Despite constructed for dependent input variables, Algorithm~\ref{alg:2} can also be applied to independent input variables. For the example functions in this section, 
the approximated Sobol indices from Algorithm~\ref{alg:2} are slightly worse than from Algorithm \ref{alg:1}. Both algorithms find the correct index set $U$, but sometimes add indices $\vec u$, with $f_{\vec u}$, which can lead to a slightly bigger approximation error compared to the case where one detects the correct index set $U$. In applications, it may depend on the sampling points $\X$ and the drawn features and the ANOVA decomposition of the function $f$, which algorithm performs better, i.e.~which algorithm detects the index set $U$ with reasonable precision.
\npdecimalsign{.}
\nprounddigits{4}
\begin{table}[hbt]\centering
\begin{tabular}{ccccc|n{2}{4}n{2}{4}n{2}{4}n{2}{4}}
\toprule
		 function  & $d$ & $q$& $M $& $\rho$&  SHRIMP & \text{ANOVA-S}  & \text{HARFE} & \text{ANOVA-H}\\
\midrule
	\csvreader[head to column names,filter=\equal{\f}{1}\and \equal{\d}{5}\and \equal{\distrrf}{1}]{numerik_indep.csv}{}%
	{$f_{T1}$&$\d$ & $\q$ &$\M$	&$\rho_\G$ & \s &\as&\h&  \ah\\}%
	\csvreader[head to column names,filter=\equal{\f}{1}\and \equal{\d}{5}\and \equal{\distrrf}{2}]{numerik_indep.csv}{}%
	{& & &	&$\rho_{\mathcal C}$ & \s & \as&\h&  \ah\\\cmidrule{2-9}}%
	\csvreader[head to column names,filter=\equal{\f}{1}\and \equal{\d}{10}\and \equal{\distrrf}{1}]{numerik_indep.csv}{}%
	{&$\d$ & $\q$ &$\M$	&$\rho_\G$ & \s& \as&\h&  \ah\\}%
	\csvreader[head to column names,filter=\equal{\f}{1}\and \equal{\d}{10}\and \equal{\distrrf}{2}]{numerik_indep.csv}{}%
	{& & &	&$\rho_{\mathcal C}$ & \s & \as&\h&  \ah\\\midrule}%
	\csvreader[head to column names,filter=\equal{\f}{2}\and \equal{\d}{5}\and \equal{\distrrf}{1}]{numerik_indep.csv}{}%
	{$f_{T2}$&$\d$ & $\q$ &$\M$	&$\rho_\G$ & \s & \as&\h&  \ah\\}%
	\csvreader[head to column names,filter=\equal{\f}{2}\and \equal{\d}{5}\and \equal{\distrrf}{2}]{numerik_indep.csv}{}%
	{& & &	&$\rho_{\mathcal C}$ & \s & \as&\h&  \ah\\\cmidrule{2-9}}%
	\csvreader[head to column names,filter=\equal{\f}{2}\and \equal{\d}{10}\and \equal{\distrrf}{1}]{numerik_indep.csv}{}%
	{&$\d$ & $\q$ &$\M$	&$\rho_\G$ & \s & \as&\h&  \ah\\}%
	\csvreader[head to column names,filter=\equal{\f}{2}\and \equal{\d}{10}\and \equal{\distrrf}{2}]{numerik_indep.csv}{}%
	{& & &	&$\rho_{\mathcal C}$ & \s & \as&\h&  \ah\\\midrule}%
	\csvreader[head to column names,filter=\equal{\f}{3}\and \equal{\d}{5}\and \equal{\distrrf}{1}]{numerik_indep.csv}{}%
	{$f_{T3}$&$\d$ & $\q$ &$\M$	&$\rho_\G$ & \s & \as&\h&  \ah\\}%
	\csvreader[head to column names,filter=\equal{\f}{3}\and \equal{\d}{5}\and \equal{\distrrf}{2}]{numerik_indep.csv}{}%
	{& & &	&$\rho_{\mathcal C}$ & \s & \as&\h&  \ah\\\cmidrule{2-9}}%
	\csvreader[head to column names,filter=\equal{\f}{3}\and \equal{\d}{10}\and \equal{\distrrf}{1}]{numerik_indep.csv}{}%
	{&$\d$ & $\q$ &$\M$	&$\rho_\G$ & \s & \as&\h&  \ah\\}%
	\csvreader[head to column names,filter=\equal{\f}{3}\and \equal{\d}{10}\and \equal{\distrrf}{2}]{numerik_indep.csv}{}%
	{& & &	&$\rho_{\mathcal C}$ & \s & \as&\h&  \ah \\\bottomrule}%
  \end{tabular}
\caption{Approximation results: MSE on test data for different functions. We compare the performance of the SHRIMP and HARFE algorithm with the ANOVA-boosted algorithms using Algorithm~\ref{alg:1}. The random Fourier features were distributed i.i.d. according to $\rho_\G = \G(\vec 0,\tfrac 1q \I_d)$ or according to $\rho_{\mathcal C} \sim\prod_{i\in [d]}(1+w_i^2)^{-1}$ with variance $\sigma = \tfrac 1q$. In every case we did the approximation $10$ times and show the mean.}
\label{tab:RMSEs}
\end{table}

\subsection{Numerical results for dependent input variables}\label{sec:num2}
We illustrate that even for dependent input variables, it is possible to find non-zero ANOVA terms of the unknown function $f$.
In this example, the ANOVA-boosting method from Algorithm~\ref{alg:2} is tested on a slightly modified Friedmann function, which is
used as benchmark example for certain approximation techniques,
\begin{equation}\label{eq:friedmann}
 f(x_1,\ldots,x_{9}) = 10 \sin(0.1\,\pi \, x_1 x_2) + 20 \, (x_3- \tfrac 12)^2 + 10 \, x_4 + 5 x_5.
\end{equation}
In contrast to the literature we do not use uniform samples on $[0,1]^d$, but we use (partly) dependent Gaussian samples. Due to this sampling we changed the original function slightly to have comparable variances of the non-zero ANOVA terms.
Based on the example in~\cite{Ra142}, the samples $\x\in \X$ are Gaussian random vectors with mean $\E \x = \vec 0$ and with the covariance matrix being one of the following,
\begin{align}
\vec \Sigma_1 &= \vec I_{9}, && \quad\text{uncorrelated}\notag\\
\vec \Sigma_2 &= \tfrac 45 \vec I_{9} + \tfrac 15 \vec 1_{9\times 9},  && \quad\text{equally correlated}\label{eq:mu2}\\
\vec \Sigma_3 &= 
\vec I_{3} \otimes
\begin{pmatrix} 1 & -\tfrac 15 & \tfrac 25 \\
-\tfrac 15 & 1 & -\tfrac 45\\
\tfrac 25 & -\tfrac 45 & 1\\
\end{pmatrix},  && \quad\text{mixed correlated}\label{eq:mu3}
\end{align}
where $M = 500$.
Independent of the drawn samples $\x\in \X$, the function \eqref{eq:friedmann} has non-zero ANOVA terms only for the index set 
\begin{equation}\label{eq:U_fried}
U=\{\varnothing, \{1\}, \{2\},\{3\},\{4\},\{5\},\{1,2\}\}.
\end{equation} 
Furthermore, for independent input variables, $\vec \Sigma = \vec \Sigma_1$ the Sobol indices can be easily calculated analytically, which gives
\begin{align*}
S_{\{3\},\text{var}}&\approx 0.2788\quad& 
S_{\{4\},\text{var}}&\approx 0.3718\\
S_{\{5\},\text{var}}&\approx 0.0929 \quad&
S_{\{1,2\},\text{var}}&\approx 0.2564.
\end{align*}
We use Algorithm \ref{alg:2} to calculate the indices $S^{\text{MC}}_{\vec u,\text{var}}$, where we draw in total $N = 5000$ ANOVA-sparse random Fourier features with $q=2$ and variance $\frac 12$, and choose the regularization parameter $\lambda =1$. The results are plotted in Figure~\ref{fig:pie}.   
Note that the Sobol indices $S^{\text{MC}}_{\vec u,\text{var}}$ can be bigger than one, since for dependent input variables the variances of the ANOVA terms $\sigma^2(f_{\vec u})$ do not sum up to the variance of the function $\sigma^2(f)$. In Figure~\ref{fig:pie} we normalized the indices $S^{\text{MC}}_{\vec u,\text{var}}$ by the sum 
$$\sum_{\vec u \in \{\vec u\mid |\vec u|\leq 2\}} S^{\text{MC}}_{\vec u,\text{var}},$$
which has in total $\binom{9}{1} + \binom{9}{2} = 45 $ summands. It can be clearly seen, that in every case only the terms with non-zero variance in the index set $U$ from~\eqref{eq:U_fried} are significant. In contrast to the case $\vec \Sigma = \vec \Sigma_1$, for dependent variables ($\vec \Sigma_2$ and $\vec \Sigma_3$) the terms $f_1$ and $f_2$ are non-zero, which Algorithm~\ref{alg:2} finds. Using only $2$-sparse random Fourier features and the HARFE algorithm as proposed in~\cite{HARFE23}, leads to results shown in their Fig.4, which clearly blur the importance of the variables $x_1$ to $x_5$ in comparison to the other non-necessary variables, see also Example~\ref{ex:compare_harfe}. Furthermore, they only study the simpler case of independent input variables.

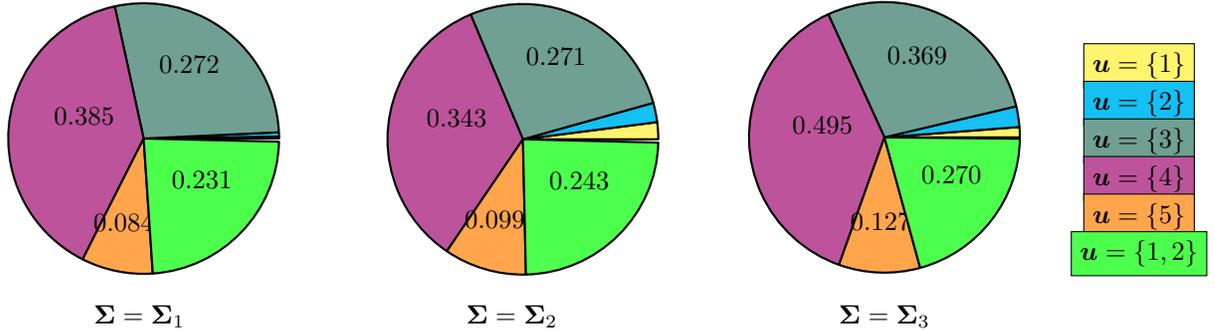
\begin{figure}[htp]
\begin{minipage}[t]{0.3\textwidth}
	\begin{tikzpicture}[scale=0.6]
	\csvreader[head to column names,filter=\equal{\l}{1.0}\and \equal{\Sigmak}{1}]{dependent_fried.csv}{}%
	{\pie[sum=\gsisA+  \gsisB+ \gsisC+ \gsisD+ \gsisE +\gsisAB + \rest, before number=\phantom ,text=inside,after number= ,color={ yellow!70, cyan!70, tuc!70, tucmath!70, orange!70, green!70}]{\gsisA/, \gsisB/ ,\gsisC/\num[round-precision=2]{\gsisC}, \gsisD/\num[round-precision=2]{\gsisD}, \gsisE/\num[round-precision=2]{\gsisE}, \gsisAB/\num[round-precision=2]{\gsisAB}}}
	\end{tikzpicture}
	\centering 
	$$\vec \Sigma=  \vec\Sigma_1$$
\end{minipage}
\begin{minipage}[t]{0.3\textwidth}
\begin{tikzpicture}[scale=0.6]
	\csvreader[head to column names,filter=\equal{\l}{1.0}\and \equal{\Sigmak}{2}]{dependent_fried.csv}{}%
	{\pie[sum=\gsisA+  \gsisB+ \gsisC+ \gsisD+ \gsisE +\gsisAB + \rest,before number=\phantom,text=inside,after number=,color={ yellow!70, cyan!70, tuc!70, tucmath!70, orange!70, green!70} ]{\gsisA/, \gsisB/ ,\gsisC/\num[round-precision=2]{\gsisC}, \gsisD/\num[round-precision=2]{\gsisD}, \gsisE/\num[round-precision=2]{\gsisE}, \gsisAB/\num[round-precision=2]{\gsisAB}}}
	\end{tikzpicture}
		\centering 
	$$\vec \Sigma=  \vec \Sigma_2$$
\end{minipage}
\begin{minipage}[t]{0.3\textwidth}
\begin{tikzpicture}[scale=0.6]
	\csvreader[head to column names,filter=\equal{\l}{1.0}\and \equal{\Sigmak}{3}]{dependent_fried.csv}{}%
	{\pie[sum=\gsisA+  \gsisB+ \gsisC+ \gsisD+ \gsisE +\gsisAB + \rest,before number=\phantom, text=inside, after number=,color={ yellow!70, cyan!70, tuc!70, tucmath!70, orange!70, green!70} ]{\gsisA/, \gsisB/ ,\gsisC/\num[round-precision=2]{\gsisC}, \gsisD/\num[round-precision=2]{\gsisD}, \gsisE/\num[round-precision=2]{\gsisE}, \gsisAB/\num[round-precision=2]{\gsisAB}}}
	\end{tikzpicture}
		\centering 
	$$\vec \Sigma=  \vec \Sigma_3$$
\end{minipage}
\begin{minipage}[t]{0.05\textwidth}
 \begin{tikzpicture}[scale=0.5]
        \node at (0,0) [rectangle,draw, fill =yellow!70]  {$\vec u = \{1\}$};
				 \node at (0,-1) [rectangle,draw, fill = cyan!70]  {$\vec u = \{2\}$};
				 \node at (0,-2) [rectangle,draw, fill =tuc!70]  {$\vec u = \{3\}$};
				 \node at (0,-3) [rectangle,draw, fill =tucmath!70]  {$\vec u = \{4\}$};
					\node at (0,-4) [rectangle,draw, fill =orange!70]  {$\vec u = \{5\}$};
					\node at (0,-5) [rectangle,draw, fill = green!70]  {$\vec u = \{1,2\}$};
    \end{tikzpicture}
\end{minipage}

\caption{The indices $S^{\text{MC}}_{\vec u,\text{var}}$ for Gaussian input samples with different covariance $\Sigma$. The pie charts are normalized to the sum of all indices $S^{\text{MC}}_{\vec u,\text{var}}$ for $|\vec u|\leq 2$, but the numbers represent the actual indices.  }
\label{fig:pie}
\end{figure}


To conclude our numerical section, we want to compare approximation results of random feature algorithms with and without ANOVA-boosting from Algorithm~\ref{alg:2}. 
We study the same test functions \eqref{eq:fT1} to \eqref{eq:fT3} as for the case of independent random variables.
To define the dependence among random variables, it is usual to use copula functions, \cite{JaDuHaRy09, Nelsen06}. Denote the cumulative distribution function of the samples by
$$R_{\X}(\x) = \int_{-\infty}^{\x} \mu(\vec t) \,\dx\vec t,$$
and $R_{1},\ldots, R_{d}$ are the marginal cumulative distribution functions of $x_1,\ldots, x_d$, i.e.,
$$R_i(x_i) = \int_{-\infty}^{x_i} \mu_i(t)\,\dx t.$$
Sklar’s theorem \cite{Skla59} is the building block of the theory of copulas. It states, that for continuous functions $R_{1},\ldots, R_{d}$ there exists a $d$-dimensional Copula $C$, such that for all $\x\in \R^d$, 
$$R_{\X}(\x) = C(R_1(x_1), \cdots, R_d(x_d)).$$
The copula $C$ contains all information on the dependence structure between components of $(x_1,\ldots, x_d)$, whereas the cumulative distribution functions $R_i$ contain all information on the marginal distribution of $x_i$. Especially Archimedean copulas are an important class of multivariate dependence models, since it is very easy to generate random numbers from them. Every Archimedean copula has the simple algebraic form
$$C(y_1,\ldots,y_d) = \psi\left(\psi^{-1}(y_1)+\ldots+ \psi^{-1}(y_d)\right),$$
where $\psi$ is the generator function of the copula.
We test our algorithm for the following well-known copulas with parameter $\theta$:
\begin{align*}
\psi(t) &= \frac{1}{\theta}\left( t^{-\theta}-1\right) &\theta> 0 &&\text{ Clayton copula,}\\
\psi(t) &= \left(-\ln t\right)^{\theta} &\theta\geq 1 &&\text{ Gumbel copula,}\\
\psi(t) &= -\ln \left(\frac{\exp{-\theta t}-1}{\exp{-\theta}-1} \right) &\theta> 0 &&\text{ Frank copula.}
\end{align*}
We use these copulas on the marginal distributions $\mathcal N(0,1)$ ($f_{T1}$), uniform on $[-\pi,\pi]$ ($f_{T2}$) or uniform on $[0,1]$ ($f_{T3}$).

We apply Algorithm~\ref{alg:2} for different settings of $d,M,q$ with fixed parameter $N = 5M$ and Gaussian random Fourier features drawn from $\G(\vec 0,\tfrac 1q \I_d)$. The used input parameters for our algorithm are summarized in Table~\ref{tab:RMSEs2} for the different settings. 
For a better comparison we use the same regularization parameter $\lambda =10^{-6}$ for the SHRIMP steps in both cases.
The resulting MSE on test data are summarized in Table~\ref{tab:RMSEs2}, we did the procedure $10$ times and show the mean. In any case, the ANOVA-boost leads to a clear improvement of the approximation results.\par

\npdecimalsign{.}
\nprounddigits{5}
\begin{table}[hbt]\centering
\begin{tabular}{cccccccc|n{2}{5}n{2}{5}}
\toprule
		 function  & $d$ & $q$& $M $& C&$\theta$& $\lambda$& $\epsilon$& SHRIMP & \text{ANOVA-S} \\
\midrule
	\csvreader[head to column names,filter=\equal{\f}{1}\and \equal{\d}{10} \and  \equal{\c}{1}]{numerik_dep.csv}{}%
	{$f_{T2}$&$\d$ & $\q$ &$\M$	&$\c $&$\theta$ &\l& $0.01$&\s &\as\\}%
		\csvreader[head to column names,filter=\equal{\f}{1}\and \equal{\d}{10} \and  \equal{\c}{2}]{numerik_dep.csv}{}%
	{& & &	&$\c $&$\theta$ &\l& $0.01$&\s &\as\\}%
			\csvreader[head to column names,filter=\equal{\f}{1}\and \equal{\d}{10} \and  \equal{\c}{3}]{numerik_dep.csv}{}%
	{& & &	&$\c $&$\theta$ &\l&$0.01$& \s &\as\\\cmidrule{2-10}}%
		\csvreader[head to column names,filter=\equal{\f}{1}\and \equal{\d}{20} \and  \equal{\c}{1}]{numerik_dep.csv}{}%
	{&$\d$ & $\q$ &$\M$	&$\c $&$\theta$ &\l&$0.01$& \s &\as\\}%
		\csvreader[head to column names,filter=\equal{\f}{1}\and \equal{\d}{20} \and  \equal{\c}{2}]{numerik_dep.csv}{}%
	{& & &	&$\c $&$\theta$ &\l& $0.01$&\s &\as\\}%
			\csvreader[head to column names,filter=\equal{\f}{1}\and \equal{\d}{20} \and  \equal{\c}{3}]{numerik_dep.csv}{}%
	{& & &	&$\c $&$\theta$ &\l& $0.01$&\s &\as\\\midrule}%
	\csvreader[head to column names,filter=\equal{\f}{2}\and \equal{\d}{5} \and  \equal{\c}{1}]{numerik_dep.csv}{}%
	{$f_{T2}$&$\d$ & $\q$ &$\M$	&$\c $&$\theta$ &\l& $0.05$&\s &\as\\}%
		\csvreader[head to column names,filter=\equal{\f}{2}\and \equal{\d}{5} \and  \equal{\c}{2}]{numerik_dep.csv}{}%
	{& & &	&$\c $&$\theta$ &\l& $0.05$&\s &\as\\}%
			\csvreader[head to column names,filter=\equal{\f}{2}\and \equal{\d}{5} \and  \equal{\c}{3}]{numerik_dep.csv}{}%
	{& & &	&$\c $&$\theta$ &\l&$0.05$& \s &\as\\\cmidrule{2-10}}%
		\csvreader[head to column names,filter=\equal{\f}{2}\and \equal{\d}{10} \and  \equal{\c}{1}]{numerik_dep.csv}{}%
	{&$\d$ & $\q$ &$\M$	&$\c $&$\theta$ &\l&$0.05$& \s &\as\\}%
		\csvreader[head to column names,filter=\equal{\f}{2}\and \equal{\d}{10} \and  \equal{\c}{2}]{numerik_dep.csv}{}%
	{& & &	&$\c $&$\theta$ &\l& $0.05$&\s &\as\\}%
			\csvreader[head to column names,filter=\equal{\f}{2}\and \equal{\d}{10} \and  \equal{\c}{3}]{numerik_dep.csv}{}%
	{& & &	&$\c $&$\theta$ &\l& $0.05$&\s &\as\\\midrule}%
			\csvreader[head to column names,filter=\equal{\f}{3}\and \equal{\d}{10} \and  \equal{\c}{1}]{numerik_dep.csv}{}%
	{$f_{T3}$&$\d$ & $\q$ &$\M$	&$\c $&$\theta$ &\l&$0.01$& \s &\as\\}%
		\csvreader[head to column names,filter=\equal{\f}{3}\and \equal{\d}{10} \and  \equal{\c}{2}]{numerik_dep.csv}{}%
	{& & &	&$\c $&$\theta$ &\l& $0.01$&\s &\as\\}%
			\csvreader[head to column names,filter=\equal{\f}{3}\and \equal{\d}{10} \and  \equal{\c}{3}]{numerik_dep.csv}{}%
	{& & &	&$\c $&$\theta$ &\l& $0.01$&\s &\as\\\cmidrule{2-10}}%
				\csvreader[head to column names,filter=\equal{\f}{3}\and \equal{\d}{20} \and  \equal{\c}{1}]{numerik_dep.csv}{}%
	{&$\d$ & $\q$ &$\M$	&$\c $&$\theta$ &\l&$0.01$& \s &\as\\}%
		\csvreader[head to column names,filter=\equal{\f}{3}\and \equal{\d}{20} \and  \equal{\c}{2}]{numerik_dep.csv}{}%
	{& & &	&$\c $&$\theta$ &\l& $0.01$&\s &\as\\}%
			\csvreader[head to column names,filter=\equal{\f}{3}\and \equal{\d}{20} \and  \equal{\c}{3}]{numerik_dep.csv}{}%
	{& & &	&$\c $&$\theta$ &\l& $0.01$&\s &\as\\\bottomrule}%
	
  \end{tabular}
\caption{Approximation results: MSE on test data for different settings. The column C belongs to the copula: $1,2,3$ corresponds to Clayton, Gumbel and Frank copula, respectively. We compare the performance of the SHRIMP and the ANOVA-boosted SHRIMP using Algorithm~\ref{alg:2} for dependent input variables. The random Fourier features were distributed according to $\rho_\G = \G(\vec 0,\tfrac 1q \I_d)$. In every case we did the approximation $10$ times and show the mean. }
\label{tab:RMSEs2}
\end{table}

During the numerical experiments we found that the more the input variables are related, the bigger should be the regularization parameter $\lambda$ in the minimization problem of Algorithm~\ref{alg:2}. 
Notice that, depending on the type of dependence, we do not obtain the same variances for the ANOVA terms, but the non-zero terms are in any case a subset of 
\begin{align}\label{eq:UT1}
U_{T1} &= \{\varnothing,\{1\},\{2\},\{3\},\{4\},\{1,2\},\{2,3\}\},\\
U_{T2} &= \{\varnothing,\{1\},\{2\},\{3\},\{1,3\},\{2,3\}\},\label{eq:UT2}\\
U_{T3} &= \{\varnothing,\{1\},\{2\},\{3\},\{4\},\{5\},\{1,2\}\}\label{eq:UT3},
\end{align}
for the three functions respectively. The variances of the terms $f_{\{3\}}$ and $f_{\{5\}}$ are relatively small for function $f_{T3}$. For that reason, we set $\epsilon = 0.01$ in this case.\par
The numerical results show that even for a small amount of samples in high dimensions our procedure is able to find the correct non-zero terms, which results in much smaller approximation error, compared to the plain SHRIMP algorithm~\cite{Xie22} with fixed effective dimension $q$.

\subsection*{Conclusion and outlook}

We propose a new method, ANOVA-boosting, which exploits sparse structure in the ANOVA terms of a function in a learning problem,
 which often occurs in many domains of interest. 
This method is a possible extension of random Fourier feature algorithms which finds the ANOVA terms with variance above some threshold before the actual approximation. Our algorithms are able to handle independent as well as dependent input variables.\par
Maybe it would be also possible to incorporate the ANOVA-boosting in every step of the iterative algorithm for sparse random Fourier features. 
Another possible future direction is the analysis of the impact of noise and the analysis of a good choice of the regularization parameter $\lambda$ in the boosting step as well as in the random feature algorithm.

\subsubsection*{Acknowledgement}
This work was supported by the BMBF grant 01$\mid$ S20053A (project SA$\ell$E).
Furthermore, we thank the anonymous reviewers
for providing helpful comments and suggestions to improve this article.
\subsubsection*{Declaration of interest}
The authors declare that they have no known competing financial interests or personal relationships
that could have appeared to influence the work reported in this article.
\bibliographystyle{abbrv}

\renewcommand{\d}{\, \mathrm{d}}   
\appendix
\section{Proofs and known inequalities}
\subsection*{Proofs of Section~\ref{sec:ANOVA_independent}}\label{sec:proofs}
First, we need an auxiliary result.
\begin{Lemma}\label{lem:cauchy_double}
Let $g\in L_2(\R^d)$ and $K$ be a symmetric and absolutely integrable kernel function. Then  
$$\left|\int_{\R^d}\int_{\R^d} g(\w) \overline{g(\vec v)} K(\w,\vec v) \d \w \d\vec v \right|\leq \int_{\R^d} |g(\w) |^2 k(\w) \d\w, $$
where $k(\w) = \int_{\R^d} |K(\w,\vec v)|\d \vec v$.
\end{Lemma}
\begin{proof}
According to \cite[Chapter 1]{St04}, the generalization of the Cauchy-Schwarz inequality for double integrals yields
 \begin{align*}
\left| \int_{\R^d}\int_{\R^d} g(\w) \overline{g(\vec v)} K(\w,\vec v) \d\w\d\vec v \right|
&\leq \int_{\R^d}\int_{\R^d} \left|g(\w) \overline{g(\vec v)} K(\w,\vec v) \right|\d\w\d\vec v \\  
&\leq
\left(\int_{\R^d}\int_{\R^d} |g(\w)|^2 \left|K(\w,\vec v) \right|\d\w \d\vec v \right)^{1/2}
\left(\int_{\R^d}\int_{\R^d} |\overline{g(\vec v)}|^2\left|K(\w,\vec v) \right|\d \w \d\vec v \right)^{1/2}\\
 &= \left(\int_{\R^d} |g(\w)|^2 \int_{\R^d}\left|K(\w,\vec v) \right| \d\vec v \d \w  \right)^{1/2}
\left(\int_{\R^d} |g(\vec v)|^2 \int_{\R^d} \left|K(\w,\vec v) \right| \d \w \d\vec v \right)^{1/2}\\
 &= \int_{\R^d} |g(\w)|^2k(\w)  \d\w  .
\end{align*}
This finishes the proof.
\end{proof}
\subsubsection*{Proof of Lemma~\ref{lem:sigma_bound}}
\begin{proof}
Lemma~\ref{lem:ANOVA-terms_f} describes the ANOVA terms of the function $f$. In order to calculate the variance of the ANOVA terms we have
\begin{align*}
\sigma^2(f_{\vec u})
&= \int_{\R^{|\vec u|}}\left|\frac{1}{(2\pi)^d}\int_{\R^d} \hat{f}(\w) E(\x,\w,\mu, \vec u) \d \w\right|^2 \mu_{\vec u}(\vec x_{\vec u})\d \vec x_{\vec u}\\
&=\frac{1}{(2\pi)^{2d}}\int_{\R^{|\vec u|}}\int_{\R^d} \int_{\R^d}\hat{f}(\w)\overline{\hat{f}(\vec v)} E(\x,\w,\mu, \vec u)   \overline{E(\x,\vec v,\mu, \vec u)}\d \w \d \vec v \mu_{\vec u}(\vec x_{\vec u})\d \vec x_{\vec u}\\
&=\frac{1}{(2\pi)^{2d}}\int_{\R^{d}}\int_{\R^d} \int_{\R^{|\vec u|}}\hat{f}(\w)\hat{f}(-\vec v) E(\x,\w,\mu, \vec u)  E(\x,-\vec v,\mu, \vec u)\,\mu_{\vec u}(\vec x_{\vec u})\d \vec x_{\vec u} \d \w \d \vec v \\
&=\frac{1}{(2\pi)^{2d}}\int_{\R^d}\int_{\R^d} \hat{f}(\w)\hat{f}(-\vec v) 
\prod_{i\in \vec u} \left( \hat{\mu_i}(-\omega_i+v_i) -\hat{\mu_i}(-\omega_i)\hat{\mu_i}(v_i) \right)\prod_{i\in \vec u^c} \hat{\mu_i}(-\omega_i)\hat{\mu_i}(v_i)\, \d \w \d \vec v
\end{align*}
To apply Lemma~\ref{lem:cauchy_double} we choose 
$K(\w,\vec v) = \prod_{i\in \vec u} \left( \hat{\mu_i}(-\omega_i+v_i) -\hat{\mu_i}(-\omega_i)\hat{\mu_i}(v_i) \right)\prod_{i\in \vec u^c} \hat{\mu_i}(-\omega_i)\hat{\mu_i}(v_i)$
with 
\begin{align*}
k(\w) &= \int_{\R^d} \left|\prod_{i\in \vec u} \left( \hat{\mu_i}(-\omega_i+v_i) -\hat{\mu_i}(-\omega_i)\hat{\mu_i}(v_i) \right)\prod_{i\in \vec u^c} \hat{\mu_i}(-\omega_i)\hat{\mu_i}(v_i)\right| \d\vec v\\
&= \prod_{i\in [d]} \norm{\hat{\mu_i}}_{L_1(\R)} \prod_{i\in \vec  u}\left|1-\hat{\mu_i}(-\omega_i)\right| \prod_{i\in \vec  u^c}\left|\hat{\mu_i}(-\omega_i)\right|\\
&= \norm{\hat{\mu}}_{L_1(\R^d)}|E(\vec 0,\w,\mu,\vec u)|.
\qedhere 
\end{align*}
\end{proof}

\subsubsection*{Proof of Theorem~\ref{thm:error_f-Tqf}}
\begin{proof}
For this proof we introduce the notation
\begin{equation}\label{eq:def_A}
A(\w,d,q) \coloneqq \left( \prod_{i\in [d]} (1+|\omega_i|^2)^{-s}  \sum_{|\vec u|> q}|E(\vec 0,\w,\mu,\vec u)| \right).
\end{equation}
First, note that since every measure $\mu_i$ is symmetric, $\norm{\hat{\mu_i}}_{L_\infty(\R)} \leq \norm{\mu_i}_{L_1(\R)} = 1$ and $-1\leq\hat{\mu_i}(-\omega_i)\leq 1$. We start with applying Lemma~\ref{lem:sigma_bound}.
\begin{align}
&\norm{f-\TT_{q}f}_{L_2(\R^d,\mu)}^2 = \sum_{|\vec u|> q} \sigma^2(f_{\vec u}) 
\leq \frac{\norm{\hat{\mu}}_{L_1(\R^d)}}{(2\pi)^{2d}} \sum_{|\vec u|\geq q}\int_{\R^d}| \hat{f}(\w)|^2
 \prod_{i\in \vec  u}\left|1-\hat{\mu_i}(-\omega_i)\right| \prod_{i\in \vec  u^c}\left|\hat{\mu_i}(-\omega_i)\right| \d \w \notag\\
&\quad=\frac{\norm{\hat{\mu}}_{L_1(\R^d)}}{(2\pi)^{2d}}\int_{\R^d}| \hat{f}(\w)|^2\prod_{i\in [d]}\frac{(1+|\omega_i|^2)^s}{(1+|\omega_i|^2)^s}
 \sum_{|\vec u|> q}\prod_{i\in \vec  u}|1-\hat{\mu_i}(-\omega_i)| \prod_{i\in \vec  u^c}|\hat{\mu_i}(-\omega_i)| \d \w \notag\\
&\quad=\frac{\norm{\hat{\mu}}_{L_1(\R^d)}}{(2\pi)^{2d}}\max_{\w\in \R^d}\left( \prod_{i\in [d]} (1+|\omega_i|^2)^{-s}  \sum_{|\vec u|> q}\prod_{i\in \vec  u}|1-\hat{\mu_i}(-\omega_i)| \prod_{i\in \vec  u^c}|\hat{\mu_i}(-\omega_i) |\right)
\int_{\R^d}| \hat{f}(\w)|^2\prod_{i\in [d]}(1+|\omega_i|^2)^s\d \w \notag
  \\
	&\quad=\frac{\norm{\hat{\mu}}_{L_1(\R^d)}}{(2\pi)^{2d}}\norm{f}^2_{H^s_{\mix}(\R^d)} \,\max_{\w\in \R^d}\left( \prod_{i\in [d]} (1+|\omega_i|^2)^{-s}  \sum_{|\vec u|> q}\prod_{i\in \vec  u}|1-\hat{\mu_i}(-\omega_i)| \prod_{i\in \vec  u^c}|\hat{\mu_i}(-\omega_i)| \right) \notag \\
	&\quad=\frac{\norm{\hat{\mu}}_{L_1(\R^d)}}{(2\pi)^{2d}}\norm{f}^2_{H^s_{\mix}(\R^d)} \,\max_{\w\in \R^d}A(\w,d,q).
\end{align}
Let us have a closer look at the involved term $A(\w,d,q)$. Let $\vec v$ be the support of the $\w$, which attains the maximum. Since $\hat{\mu_i}(-\omega_i)= 0 $ for every $i\in \vec v^c$, which means that $|\vec v^c|>q$ and $|\vec v|<d-q$, we have
\begin{align*}
\max_{\w\in \R^d}A(\w,d,q) 
&=  \prod_{i\in [d]} (1+|\omega_i|^2)^{-s}  \sum_{|\vec u|> q,\vec v^c\subseteq \vec u }\prod_{i\in \vec  u}|1-\hat{\mu_i}(-\omega_i)| \prod_{i\in \vec  u^c}|\hat{\mu_i}(-\omega_i)| \\
&\leq  \sum_{\vec u' \supseteq \vec v^c } c_{\mu,s}^{|\vec v^c|} \left(\prod_{i\in \vec  u'\backslash\vec v^c}\frac{|1-\hat{\mu_i}(-\omega_i)|}{(1+|\omega_i|^2)^{s}} \prod_{i\in \vec  u'^c}\frac{|\hat{\mu_i}(-\omega_i)|}{ (1+|\omega_i|^2)^{s}} \right)\\
&=  c_{\mu,s}^{|\vec v^c|} \sum_{\vec u' \subseteq\vec v }  \left(\prod_{i\in \vec  u'}\frac{|1-\hat{\mu_i}(-\omega_i)|}{(1+|\omega_i|^2)^{s}} \prod_{i\in \vec  u'^c}\frac{|\hat{\mu_i}(-\omega_i)|}{ (1+|\omega_i|^2)^{s}} \right)\\
&=  c_{\mu,s}^{|\vec v^c|} \prod_{i\in \vec v}\frac{|1-\hat{\mu_i}(-\omega_i)| + |\hat{\mu_i}(-\omega_i)| }{(1+|\omega_i|^2)^{s}}\\
&\leq c_{\mu,s}^{q+1}.
\end{align*}
The last inequality follows by either demanding a symmetric measure $\mu$ with positive Fourier transform or the condition~\eqref{eq:condition}.
The equality~\eqref{eq:constant2} follows by the fact that the maximum of $g(\omega_i) \coloneqq (1+|\omega_i|^2)^{-s}  \left(1- \hat{\mu_i}(-\omega_i)\right)$ is attained where $g'(\omega_i) = 0$, i.e.,
\begin{align*}
0&\stackrel{!}{=} \frac{-2\omega_i s }{(1+|\omega_i|^2)^{s+1}} \left(1- \hat{\mu_i}(\omega_i)\right) - \frac{\mu_i'(\omega_i)}{(1+|\omega_i|^2)^{s}}, \\
\hat{\mu}_i(\omega_i) &= 1+ \hat{\mu}_i'(\omega_i)\,\frac{1+\omega_i^2}{2\omega_i s}.
\end{align*}
Inserting this into~$g(\omega_i)$ yields
\begin{align*}
c_{\mu,s}
&= \sup_{\omega_i\in \R}   (1+|\omega_i|^2)^{-s}  \left(- \hat{\mu}_i'(\omega_i)\,\frac{1+\omega_i^2}{2\omega_i s} \right)\\
&=\sup_{\omega_i\in \R} \frac{1}{2\omega_i s\,(1+|\omega_i|^2)^{s-1}}.
\end{align*}

\end{proof}

\subsection{McDiarmids inequality}
\begin{Theorem}[Mc Diarmids inequality]\label{thm:McDiarmid}
Let a function $v\colon \X_1 \times \X_2 \times \cdots \times \X_N \rightarrow \R$  satisfy the bounded differences property, i.e.~for all $k\in [N]$, and all $x_1\in \X_1,\ldots, x_N\in \X_N,$
$$\sup_{x_k'\in \X_k}|v(x_1,\ldots, x_{k-1},x_k, x_{k+1},\ldots, x_N) - v(x_1,\ldots, x_{k-1},x'_k, x_{k+1},\ldots, x_N)|\leq \Delta_k.$$
Consider independent random variables $X_1,X_2, \ldots, X_N$ where $X_k\in \X_k$ for all $k$. Then, for any $\eps> 0$
$$\P\left(v(X_1,X_2, \ldots, X_N) - \E[v(X_1,X_2, \ldots, X_N)] >\eps\right)\leq  \exp\left(-\frac{2\eps^2}{\sum_{k=1}^N \Delta_k^2}\right).$$
\end{Theorem}

\end{document}